\documentclass{article} 
\usepackage{iclr2026_conference,times}

\usepackage{amsmath,amsfonts,bm}

\def\eqref#1{equation~\ref{#1}}
\def\Eqref#1{Equation~\ref{#1}}

\def\1{\bm{1}}

\DeclareMathAlphabet{\mathsfit}{\encodingdefault}{\sfdefault}{m}{sl}
\SetMathAlphabet{\mathsfit}{bold}{\encodingdefault}{\sfdefault}{bx}{n}

\DeclareMathOperator*{\argmax}{arg\,max}

\usepackage[dvipsnames]{xcolor}
\usepackage{url}
\usepackage{enumitem}
\usepackage{multirow}
\usepackage{subfigure}
\usepackage{booktabs}
\usepackage{multicol}
\PassOptionsToPackage{table}{xcolor}
\usepackage{hyperref}
\hypersetup{
	colorlinks=true,
    allcolors={purple}
}
\usepackage{rotating}
\usepackage{wrapfig}
\usepackage{amssymb}
\usepackage{bbm}
\usepackage{algorithmic}
\usepackage{algorithm}
\usepackage{amsmath,amsthm,amssymb}
\usepackage{graphicx}
\usepackage{amsmath}
\usepackage{mathtools}

\newtheorem{lemma}{Lemma}[section]
\newtheorem{corollary}{Corollary}[section]
\newtheorem{proposition}{Proposition}[section]

\newtheorem{definition}{Definition}[section]

\usepackage{rotating}
\usepackage{booktabs}
\usepackage{caption}
\usepackage{multirow}

\iclrfinalcopy

\title{Model-based Offline RL via Robust Value-Aware Model Learning with Implicitly Differentiable Adaptive Weighting}

\author{
    Zhongjian Qiao$^{1}$,
    Jiafei Lyu$^{2}$,
    Boxiang Lyu$^{3}$,
    Yao Shu$^{4}$, 
    \textbf{Siyang Gao$^{1}$,}
    \textbf{Shuang Qiu$^{1}$}\thanks{Corresponding Author} \\
    $^{1}$CityUHK, $^{2}$Tencent, $^{3}$UChicago, $^{4}$HKUST(GZ) \\
    ~\texttt{zhongqiao2-c@my.cityu.edu.hk}, \texttt{shuanqiu@cityu.edu.hk}
}

\begin{document}

\maketitle

\begin{abstract}
Model-based offline reinforcement learning (RL) aims to enhance offline RL with a dynamics model that facilitates policy exploration. However, \textit{model exploitation} could occur due to inevitable model errors, degrading algorithm performance. Adversarial model learning offers a theoretical framework to mitigate model exploitation by solving a maximin formulation. Within such a paradigm, RAMBO~\citep{rigter2022rambo} has emerged as a representative and most popular method that provides a practical implementation with model gradient. However, we empirically reveal that severe Q-value underestimation and gradient explosion can occur in RAMBO with only slight hyperparameter tuning, suggesting that it tends to be overly conservative and suffers from unstable model updates. To address these issues, we propose \textbf{RO}bust value-aware \textbf{M}odel learning with \textbf{I}mplicitly differentiable adaptive weighting (ROMI). Instead of updating the dynamics model with model gradient, ROMI introduces a novel robust value-aware model learning approach. This approach requires the dynamics model to predict future states with values close to the minimum Q-value within a scale-adjustable state uncertainty set, enabling controllable conservatism and stable model updates. To further improve out-of-distribution (OOD) generalization during multi-step rollouts, we propose implicitly differentiable adaptive weighting, a bi-level optimization scheme that adaptively achieves dynamics- and value-aware model learning. Empirical results on D4RL and NeoRL datasets show that ROMI significantly outperforms RAMBO and achieves competitive or superior performance compared to other state-of-the-art methods on datasets where RAMBO typically underperforms. Code is available at \url{https://github.com/zq2r/ROMI.git}.

\end{abstract}

\vspace{-0.3cm}

\section{Introduction}

\vspace{-0.2cm}



Model-based offline RL~\citep{yu2020mopo,kidambi2020morel,rigter2022rambo,qiao2025sumo,qiao2024mind} aims to enhance offline RL~\citep{levine2020offline,prudencio2023survey} by learning an environmental dynamics model in which the policy could explore. Compared with model-free offline RL~\citep{kumar2020conservative,lyu2022mildly,jin2021pessimism,fujimoto2021minimalist,wu2019behavior,qiao2026droco}, model-based offline RL exhibits higher data efficiency and could generalize better to states not present in the dataset. However, model-based offline RL suffers from \textit{model exploitation}~\citep{kidambi2020morel}, that is, the policy might explore the regions where the model cannot accurately predict the dynamics. Utilizing transitions from these inaccurate regions could degrade the algorithm's performance. Therefore, incorporating conservatism (a.k.a. \textit{pessimism}) is necessary for model-based offline RL to mitigate model exploitation. 

One mainstream approach is uncertainty estimation with various heuristics~\citep{yu2020mopo,kidambi2020morel,sun2023model,kim2023model}. However, uncertainty estimation could be unreliable for neural networks~\citep{rigter2022rambo,yu2021combo,lu2021revisiting}. Adversarial model learning~\citep{uehara2021pessimistic} offers another theoretical framework to mitigate model exploitation by solving a maximin formulation, without uncertainty estimation. Among its practical implementations~\citep{rigter2022rambo,bhardwaj2023adversarial,yang2023model}, RAMBO~\citep{rigter2022rambo} has emerged as a representative and widely adopted method, which (1) forces the dynamics model to reduce the value function in OOD regions by minimizing the adversarial loss with \textit{model gradient}, (2) introduces a tradeoff coefficient $\lambda$ to the adversarial term to balance the adversarial training and maximum likelihood estimation (MLE). 
Although RAMBO provides a practical implementation for adversarial model learning, we observe that the tradeoff coefficient $\lambda$ is set to an extremely small value (e.g., 3e-4) in its implementation, minimizing the effect of the adversarial term. In our further investigation, we empirically find that a slightly larger yet still small $\lambda$ (e.g., 0.05, 0.1) could cause severe Q-value underestimation and even gradient explosion, resulting in training collapse. This indicates that RAMBO \textbf{has difficulties in controlling the conservatism level and can be unstable during training.} 

To address these issues, we propose \textbf{RO}bust value-aware \textbf{M}odel learning with \textbf{I}mplicitly differentiable adaptive weighting (ROMI). In contrast to RAMBO, ROMI discards the model gradient method that could be overly conservative and unstable, and introduces a novel robust value-aware model learning~\citep{farahmand2017value, voelcker2022value} approach to incorporate conservatism. Specifically, we require the dynamics model to predict future states with values close to the minimum Q-value within a scale-adjustable state uncertainty set, and adjusting the scale of the uncertainty set is equivalent to controlling the degree of conservatism. This offers flexible and controllable conservatism adjustment and stabilizes model update. However, this value-aware learning scheme solely enforces value awareness and overlooks dynamics awareness that crucially benefits OOD generalization. To close this gap, we further propose implicitly differentiable adaptive weighting, which achieves both dynamics and value awareness in a bi-level optimization framework. Concretely, the inner level incorporates an adaptive weighting network that re-weighs each training sample in an MLE loss, while the outer level updates the weighting network by minimizing the robust value-aware model loss with implicit differentiation. This hierarchical framework adaptively balances dynamics and value-aware model learning.
Lastly, we empirically evaluate ROMI on various D4RL~\citep{fu2020d4rl} and NeoRL~\citep{qin2022neorl} datasets. Our experimental results demonstrate that ROMI significantly surpasses RAMBO, matching or surpassing state-of-the-art (SOTA) methods such as MOBILE~\citep{sun2023model} and Count-MORL~\citep{kim2023model}, especially on datasets where RAMBO typically underperforms.

\section{Background}
\textbf{RL.} We consider a Markov Decision Process (MDP)~\citep{puterman1990markov} that can be specified by a tuple $\langle \mathcal{S},\mathcal{A},T,r,\rho,\gamma\rangle$, where $\mathcal{S}$ and $\mathcal{A}$ are the state space and action space, respectively, $T:\mathcal{S}\times\mathcal{A}\rightarrow\Delta(\mathcal{S})$ is the transition dynamics, and $\Delta(\cdot)$ is the probability simplex. $r:\mathcal{S}\times\mathcal{A}\rightarrow\mathbb{R}$ is the reward function, $\rho$ is the initial state distribution, and $\gamma\in [0,1)$ is the discount factor. The goal of RL is to obtain a policy $\pi_\theta$ which maximizes the objective function: $J(\theta)=\mathbb{E}_{\pi_\theta}\left[\sum_{t=0}^\infty \gamma^t r(s_t,a_t) | s_0\sim\rho\right]$. 

\textbf{Model-based Offline RL.} In offline RL, the agent only has access to a pre-collected offline dataset: $\mathcal{D}=\{(s_i,a_i,r_i,s_{i+1})\}_{i=1}^N$. Model-based offline RL typically trains a dynamics model to augment the dataset. The policy could perform $h$-step rollouts within the dynamics model starting from states in $\mathcal{D}$, and store the collected samples in $\mathcal{D}_\mathrm{model}$.
However, the dynamics model could deviate from the true dynamics. Therefore, conservatism is necessary to prevent model exploitation.

\textbf{Adversarial Model Learning for Model-based Offline RL.}~\citet{uehara2021pessimistic} first proposes a theoretical framework that incorporates adversarial model learning into model-based offline RL to introduce conservatism. The offline RL problem is cast as a zero-sum game against an adversarial dynamics model within an uncertainty set, and thus the object is to find a policy $\pi$ from the policy class $\Pi$ by solving the maximin problem: 
\begin{align}
\label{eq:robust_adv}
\begin{aligned}
& \pi=\arg\max_{\pi\in\Pi}{\min_{\widehat{T}\in\mathcal{M}_\xi} V^\pi_{\widehat{T}}} \\
& \text{s.t.}~~ \mathcal{M}_\xi = \big\{\widehat{T} \mid \mathbb{E}_{(s,a)\sim \mathcal{D}}\big[\text{Dis}\big(\widehat{T}_{\mathrm{MLE}}(\cdot|s,a), \widehat{T}(\cdot|s,a)\big)\big]\leq\xi\big\},
\end{aligned}
\end{align}
where $\mathcal{M}_\xi$ is the dynamics uncertainty set and $\text{Dis}(\cdot,\cdot)$ is the distributional discrepancy measure, $\widehat{T}_\text{MLE}$ 
represents the dataset dynamics learned with maximum likelihood estimation (MLE), i.e, $\widehat{T}_{\mathrm{MLE}}=\argmax_{T}\mathbb{E}_{(s, a,s') \in \mathcal{D}}[\log T(s'| s, a)]$. While this offline RL framework offers strong theoretical guarantees, its practical algorithm remains to be explored. A widely-adopted implementation of this framework is \textbf{RAMBO}~\citep{rigter2022rambo}, which optimizes the following objective:
\begin{equation}
\label{eq:robust_advpractical}
    \pi=\arg\max_{\pi\in\Pi} \min_{\widehat{T}}\Big\{\lambda V^\pi_{\widehat{T}}-\mathbb{E}_{(s,a,s^\prime)\sim\mathcal{D}}\big[\log \widehat{T}(s^\prime|s,a)\big]\Big\},
\end{equation}
where the first term is the adversarial loss that corresponds to minimizing the value in OOD regions, optimized via model gradient, and the second term relates to minimizing the negative log-likelihood on the offline data, which is a standard MLE. Here $\lambda$ is treated as a Lagrange multiplier but, for practicality, is hand-tuned and fixed per task.

\section{Motivation Example: the Limitations of RAMBO}
\label{sec:motivation}


\begin{figure}
    \centering
    \includegraphics[width=0.98\linewidth]{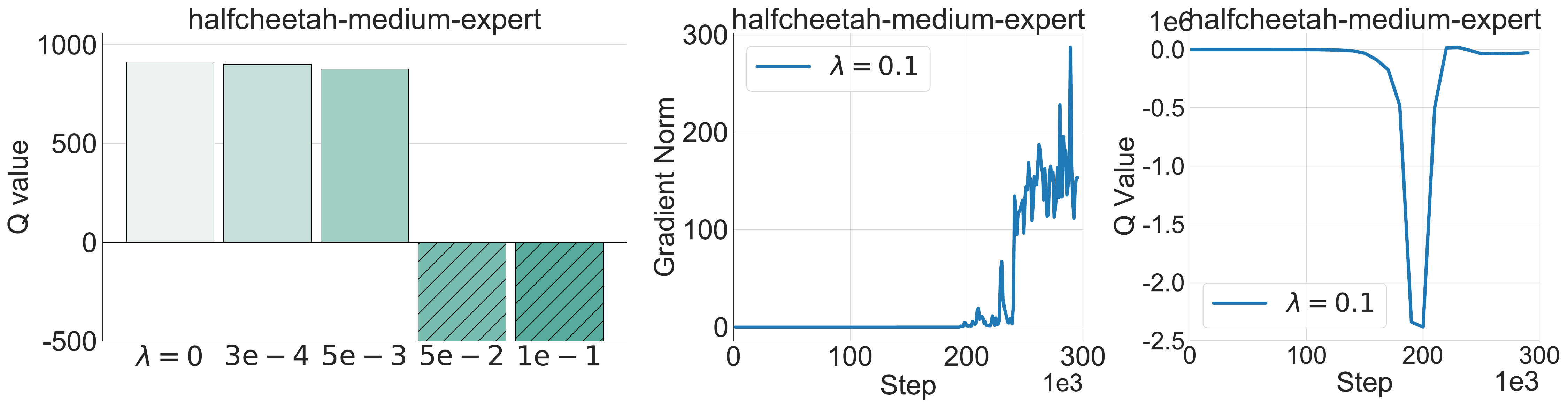}
    \caption{\textbf{Left:} Q-value estimation for different values of $\lambda$. The slash-shaded bars indicate that the Q-value has diverged. \textbf{Middle:} Gradient norm of the adversarial loss recorded during training. \textbf{Right:} Q-value estimation recorded during training.}
    \label{fig:motivation}
\end{figure}

In this section, we empirically examine and identify key limitations of RAMBO: \textbf{the conservatism degree is hard to control via $\lambda$, and training can be unstable.} Our key observation is that the adversarial weighting coefficient $\lambda$ is set to 3e-4 in the original paper, which is an exceedingly small value for the weighting coefficient. This implies that the effect of the adversarial term should be minimized. We would like to explore the effect of a larger $\lambda$ on training. 

We vary $\lambda$ across $\{0,\text{3e-4},\text{5e-3},\text{5e-2},\text{1e-1}\}$, and run RAMBO on 9 D4RL datasets. We show the results on \texttt{halfcheetah-medium-expert-v2} dataset in Figure~\ref{fig:motivation}. We find that setting $\lambda$ across $\{0,\text{3e-4,5e-3}\}$ does not show a clear difference in Q-value estimation as illustrated in Figure~\ref{fig:motivation} \textbf{Left}, indicating that the adversarial training has little effect on conservatism with a too small $\lambda$. However, when increasing $\lambda$ to larger but still small values within $\{\text{5e-2},\text{1e-1}\}$, we observe that the Q-value diverges to a very negative value. This suggests that RAMBO is highly sensitive to the choices of $\lambda$. When $\lambda=0.1$, we even observe the gradient of the adversarial loss explodes, and the Q-value drops sharply, leading to training collapse, as shown in Figure~\ref{fig:motivation} \textbf{Middle}, \textbf{Right}. This phenomenon implies that RAMBO tends to be overly conservative and unstable. More details and results on all 9 datasets are presented in Appendix~\ref{appendix:motivation}.

We take a deeper analysis of this phenomenon. On the one hand, $\lambda$ is the Lagrange multiplier and should be optimized with gradient ascent (Equation (7) in~\citet{rigter2022rambo}). It is hard to precisely control the degree of conservatism by simply tuning $\lambda$. On the other hand, model gradient would encourage the dynamics model to explore regions with a sharp decrease in value, which may lead to Q-value underestimation and gradient explosion. These factors force $\lambda$ to remain extremely small to minimize the impact of the adversarial term, thereby limiting RAMBO's potential. Therefore, a more practical implementation for  \Eqref{eq:robust_adv} that allows flexible adjustment of the conservatism level and enhances training stability is highly desirable.



\vspace{-0.1cm}
\section{Methodology}

\vspace{-0.1cm}

Section~\ref{sec:motivation} highlights two major limitations of RAMBO: \textbf{(1)} \textit{difficulties in controlling the degree of conservatism}, resulting in severe value underestimation, i.e., over-conservatism; \textbf{(2)} \textit{gradient explosion}, leading to unstable training. To address these issues, we develop our method ROMI within the framework of \Eqref{eq:robust_adv}. 
Our central contribution is a novel treatment of adversarial model learning, whose core components, \textbf{robust value-aware model learning} and \textbf{implicitly differentiable adaptive weighting}, are described in Section~\ref{sec:robustvaml} and Section~\ref{sec:implicit} respectively.

\subsection{Robust Value-Aware Model Learning}
\label{sec:robustvaml}

Recall that the adversarial model learning paradigm in \Eqref{eq:robust_adv} seeks an adversarial dynamics model for any policy $\pi$ via the inner constrained minimization problem $\min_{\widehat{T}\in\mathcal{M}_\xi} V^\pi_{\widehat{T}}$ with $\mathcal{M}_\xi = \{\widehat{T} \mid \mathbb{E}_{(s,a)\sim\mathcal{D}}[\text{Dis}(\widehat{T}_{\mathrm{MLE}}(\cdot|s,a), \widehat{T}(\cdot|s,a))]\leq\xi\}$. As stated previously in \Eqref{eq:robust_advpractical}, RAMBO explores an implementation of this inner constrained optimization by using a modified Lagrangian-multiplier approach, leading to $\min_{\widehat{T}}\{\lambda V^\pi_{\widehat{T}}-\mathbb{E}_{(s,a,s^\prime)\sim\mathcal{D}}[\log \widehat{T}(s^\prime|s,a)]\}$ with a hand-tuned multiplier $\lambda$. This reformulation induces over-conservatism and unstable training, according to our analysis in Section \ref{sec:motivation}. 

To tackle such a challenge, we propose an alternative implementation of \Eqref{eq:robust_adv}, a robust \emph{value-aware} model learning framework that incorporates conservatism, from a perspective of explicitly characterizing the one-step value estimation error induced by adversarial model learning. Letting $\widehat{T}_m$ be the dynamics model to be learned, we reformulate  \Eqref{eq:robust_adv} as follows:
\begin{equation}
\label{eq:robust_adveq}
    \begin{aligned}
        \pi &= \arg\max_{\pi\in\Pi}V^\pi_{\widehat{T}_m} \quad \text{s.t.} ~~V^\pi_{\widehat{T}_m}=\min_{\widehat{T}\in\mathcal{M}_\xi} V^\pi_{\widehat{T}},
    \end{aligned}
\end{equation}
We next focus on the condition $V^\pi_{\widehat{T}_m}=\min_{\widehat{T}\in\mathcal{M}_\xi} V^\pi_{\widehat{T}}$ in \Eqref{eq:robust_adveq}, which motivates us to minimize a {value-aware model error} $\mathcal{E}^\pi_{\widehat{T}_m}$ defined in terms of the value difference as follows:
\begin{align}
\begin{aligned}\label{eq:target}
    \mathcal{E}^\pi_{\widehat{T}_m}&:=\mathbb{E}_{s\sim\rho_\mathcal{D}(\cdot)}\Big|V^\pi_{\widehat{T}_m}(s)-\min_{\widehat{T}\in\mathcal{M}_\xi}V^\pi_{\widehat{T}}(s)\Big|\\
    &=\gamma\cdot\mathbb{E}_{s\sim\rho_\mathcal{D}(\cdot),a\sim\pi(\cdot|s)}\Big|\mathbb{E}_{s^\prime\sim\widehat{T}_m(\cdot|s,a)}V^\pi_{\widehat{T}_m}(s^\prime)-\min_{\widehat{T}\in\mathcal{M}_\xi}\mathbb{E}_{s^\prime\sim\widehat{T}(\cdot|s,a)}V^\pi_{\widehat{T}}(s^\prime)\Big|.
\end{aligned}
\end{align}
We consider only $s\sim\rho_\mathcal{D}(\cdot)$ here, since the rollout initial states are drawn from the dataset $\mathcal{D}$. Given that $\mathcal{M}_\xi$ is only defined on $\mathcal{D}$ and $a\sim\pi(\cdot|s)$ contains OOD regions, directly minimizing $\mathcal{E}^\pi_{\widehat{T}_m}$ could lead to arbitrarily low Q-value estimates in OOD regions. RAMBO’s over‑conservatism arises for essentially the same reason. To resolve this issue, we propose to minimize $\widehat{\mathcal{E}}^\pi_{\widehat{T}_m}$ as an alternative: 
\begin{equation*}
    \widehat{\mathcal{E}}^\pi_{\widehat{T}_m}:=\gamma\cdot\mathbb{E}_{s\sim \rho_\mathcal{D}(\cdot),\mathcolor{blue}{a\sim\mu(\cdot|s)}}\Big|\mathbb{E}_{s^\prime\sim\widehat{T}_m(\cdot|s,a)}\mathcolor{blue}{\widehat{V}^\pi(s^\prime)}-\min_{\widehat{T}\in\mathcal{M}_\xi}\mathbb{E}_{s^\prime\sim\widehat{T}(\cdot|s,a)}\mathcolor{blue}{\widehat{V}^\pi(s^\prime)}\Big|.
\end{equation*}
The \textcolor{blue}{blue} parts highlight the difference between $\mathcal{E}^\pi_{\widehat{T}_m}$ and $\widehat{\mathcal{E}}^\pi_{\widehat{T}_m}$, where $\mu(\cdot|s)$ is the behavior policy generating $\mathcal{D}$, and $\widehat{V}^\pi$ (abbreviated as $\widehat{V}$ hereafter) represents the value function during training. Intuitively, we learn a mildly conservative value function in the in-distribution (ID) regions, and the conservatism in OOD regions relies on generalization, which we analyse in Proposition~\ref{prop:estimationerror}.

The main challenge of minimizing $\widehat{\mathcal{E}}^\pi_{\widehat{T}_m}$ lies in computing $\min_{\widehat{T}\in\mathcal{M}_\xi}\mathbb{E}_{s^\prime\sim\widehat{T}(\cdot|s,a)}\widehat{V}(s^\prime)$, since we are not accessible to the dynamics uncertainty set $\mathcal{M}_\xi$, which is crucial for controlling the level of conservatism. To address this, we take the distance measure $\mathrm{Dis}$ to be the Wasserstein distance and define the Wasserstein dynamics uncertainty set in Definition~\ref{def:wasserstein}, and exploit its dual form in Proposition~\ref{prop:dual}, which provides new insights for handling this issue.

\begin{definition}[Wasserstein Dynamics Uncertainty Set] We define the dynamics uncertainty set via the Wasserstein distance as
\label{def:wasserstein}
\begin{equation*}
    \mathcal{M}_\xi=\Big\{\widehat{T}~\Big|~\mathbb{E}_\mathcal{D}\Big[\mathcal{W}\Big(\widehat{T}_\mathrm{MLE}(\cdot|s,a),\widehat{T}(\cdot|s,a)\Big)\Big]\leq\xi\Big\},
\end{equation*}
where $\mathcal{W}\left(\widehat{T}_\mathrm{MLE}(\cdot|s,a),\widehat{T}(\cdot|s,a)\right)=\inf_{\zeta\in\Gamma\left(\widehat{T}_\mathrm{MLE}(\cdot|s,a),\widehat{T}(\cdot|s,a)\right)}\mathbb{E}_{(s^\prime_1,s^\prime_2)\sim\zeta}[d(s^\prime_1,s^\prime_2)]$ is the Wasserstein distance between dynamics $\widehat{T}_\mathrm{MLE}$ and $\widehat{T}$, $\Gamma(\cdot,\cdot)$ is the set of joint distributions whose marginals are $\widehat{T}_\mathrm{MLE}$ and $\widehat{T}$, and $d(\cdot,\cdot)$ can be chosen as the Euclidean distance.
\end{definition}

\begin{proposition}[Dual Reformulation] Let $\mathcal{M}_\xi$ be the Wasserstein dynamics uncertainty set defined in Definition~\ref{def:wasserstein}. Then we have
\label{prop:dual}
\begin{equation*}
\min_{\widehat{T}\in\mathcal{M}_\xi}\mathbb{E}_{s^\prime\sim\widehat{T}(\cdot|s,a)}\widehat{V}(s^\prime)=\mathbb{E}_{s^\prime\sim \widehat{T}_\mathrm{MLE}(\cdot|s,a)}\left[\min_{\hat{s}\in U_\xi(s^\prime)}\widehat{V}(\hat{s})\right],
\end{equation*}
where $U_\xi(s^\prime)= \{\hat{s}\mid d(\hat{s},s^\prime)\leq\xi\}$ is the state uncertainty set.
\end{proposition}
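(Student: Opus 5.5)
The plan is to prove the identity for each fixed $(s,a)$ by showing two inequalities between the left-hand side $L:=\min_{\widehat{T}\in\mathcal{M}_\xi}\mathbb{E}_{s'\sim\widehat{T}(\cdot|s,a)}\widehat{V}(s')$ and the right-hand side $R:=\mathbb{E}_{s'\sim\widehat{T}_\mathrm{MLE}(\cdot|s,a)}[\min_{\hat s\in U_\xi(s')}\widehat{V}(\hat s)]$. Before that, I would fix how the constraint in Definition~\ref{def:wasserstein} is read, since the statement is pointwise in $(s,a)$. I treat the constraint as acting conditionally on each $(s,a)$, so the feasible $\widehat{T}(\cdot|s,a)$ form a Wasserstein ball of radius $\xi$ around $\widehat{T}_\mathrm{MLE}(\cdot|s,a)$. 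For the attainment of the inner minimum over $U_\xi(s')$, I assume $\widehat{V}$ is continuous and $U_\xi(s')$ is closed and bounded, hence compact in Euclidean space.

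\emph{Step 1 ($L\le R$, achievability).} I would define a measurable selection $g(s')\in\arg\min_{\hat s\in U_\xi(s')}\widehat{V}(\hat s)$. Its existence follows from a measurable-selection theorem, since the correspondence $s'\mapsto U_\xi(s')$ is continuous and compact-valued and $\widehat{V}$ is continuous. Let $\widehat{T}^\star(\cdot|s,a)$ be the pushforward of $\widehat{T}_\mathrm{MLE}(\cdot|s,a)$ under $g$. Then the coupling $\zeta=(\mathrm{id},g)_\#\widehat{T}_\mathrm{MLE}(\cdot|s,a)$ has $d(s'_1,s'_2)\le\xi$ almost surely. Hence $\mathcal{W}\le\xi$, so $\widehat{T}^\star\in\mathcal{M}_\xi$. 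By construction, $\mathbb{E}_{\widehat{T}^\star}\widehat{V}=R$, which gives $L\le R$.

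\emph{Step 2 ($L\ge R$, optimality).} Take any feasible $\widehat{T}$ and an optimal coupling $\zeta$ of $\widehat{T}_\mathrm{MLE}(\cdot|s,a)$ and $\widehat{T}(\cdot|s,a)$. If $\zeta$ is supported on $\{d(s'_1,s'_2)\le\xi\}$, then $\widehat{V}(s'_2)\ge\min_{U_\xi(s'_1)}\widehat{V}$ pointwise. Integrating against $\zeta$ gives $\mathbb{E}_{\widehat{T}}\widehat{V}\ge R$, and taking the infimum over feasible $\widehat{T}$ gives $L\ge R$.

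\emph{Main obstacle.} The main obstacle is this support condition in Step 2. With the $1$-Wasserstein (expected-cost) constraint exactly as written, a feasible coupling only satisfies $\mathbb{E}_\zeta d\le\xi$. The adversary can then move a small amount of mass very far and push $L$ strictly below $R$. In that case the correct dual is the Lagrangian form $\sup_{\lambda\ge0}\{\mathbb{E}_{\widehat{T}_\mathrm{MLE}}[\min_{\hat s}(\widehat{V}(\hat s)+\lambda d(\hat s,s'))]-\lambda\xi\}$, not $R$. I would therefore first try to close the gap by making explicit that the ball is measured in the $\infty$-Wasserstein sense, $\operatorname{ess\,sup}_\zeta d\le\xi$. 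This is the reading under which the state uncertainty set $U_\xi(s')$ arises naturally, and with it Step 2 goes through directly. As a fallback, I would present the $\mathcal{W}_1$ Lagrangian dual and note that $R$ is exactly its value when the cost is the indicator penalty of leaving $U_\xi(s')$.
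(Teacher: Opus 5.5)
Your proposal is correct, and your diagnosis is right. With the expected-cost ($\mathcal{W}_1$) ball of Definition~\ref{def:wasserstein} read literally, the identity is false. For a concrete instance, take $\widehat{T}_\mathrm{MLE}(\cdot|s,a)=\delta_0$ on $\mathbb{R}$, $\xi=1$, and the bounded $1$-Lipschitz function $\widehat{V}(x)=-\min\{1,\max\{0,|x|-1\}\}$. The right-hand side is $0$, but $\tfrac12\delta_0+\tfrac12\delta_2$ is at $\mathcal{W}_1$-distance exactly $1$ and has value $-\tfrac12$.

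Your Step~1 stays valid under $\mathcal{W}_1$ and gives $L\le R$ in general. Step~2 genuinely needs the almost-sure ($\mathcal{W}_\infty$) reading, and under that reading your two-inequality argument is complete. For the coupling in Step~2, you can cite attainment of optimal $\mathcal{W}_\infty$ couplings on Polish spaces. Alternatively, use $\varepsilon$-optimal couplings together with right-continuity of $\xi\mapsto\min_{U_\xi(s')}\widehat{V}$ and monotone convergence. Your per-$(s,a)$ reading of the dataset-averaged constraint is also what the paper implicitly uses.

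The paper argues by duality instead. It Lagrangianizes the $\mathcal{W}_1$ constraint with a single multiplier $\lambda$ and applies the Wasserstein-DRO identity of Lemma~\ref{lemma:dual}. This gives exactly the expression $\sup_{\lambda\ge0}\{\mathbb{E}_{\widehat{T}_\mathrm{MLE}}[\inf_{\hat s}(\widehat{V}(\hat s)+\lambda d(s',\hat s))]-\lambda\xi\}$, which you name as the correct $\mathcal{W}_1$ dual. The paper then asserts, without argument, that this is ``the Lagrange dual reformulation'' of the right-hand side.

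That final identification is the step that fails in general. One global multiplier with a linear transport penalty lets the adversary spend its budget unevenly, moving a little mass far as in the example above. By contrast, $\mathbb{E}[\min_{U_\xi(s')}\widehat{V}]$ imposes a hard per-point constraint. The identification is justified once $d$ is replaced by the $0/\infty$ indicator of leaving $U_\xi(s')$, which is your fallback and your $\mathcal{W}_\infty$ reading. At that point the dual value is trivially $R$, and all the content sits in the two inequalities you prove directly.

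So the two routes buy different things. The paper's route yields a tractable one-dimensional formula for the true $\mathcal{W}_1$ worst case, but that value is in general strictly below its stated target. Your route is more elementary: it needs only a measurable selection and the support property of couplings, with no strong-duality theorem. It proves the identity as stated, with attainment. It also matches what the algorithm actually implements, namely perturbing each $s'$ within radius $\xi$.
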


Proposition~\ref{prop:dual} states that, under the Wasserstein distance, the dynamics uncertainty set could be transformed into the state uncertainty set. Based on this reformulation, we rewrite $\widehat{\mathcal{E}}^\pi_{\widehat{T}_m}$ as
\begin{equation}
\label{eq:robust}\widehat{\mathcal{E}}^\pi_{\widehat{T}_m}=\gamma\cdot\mathbb{E}_{{s\sim \rho_\mathcal{D}(\cdot),a\sim\mu(\cdot|s)}}\Big|\mathbb{E}_{s^\prime\sim\widehat{T}_m(\cdot|s,a)}{\widehat{V}(s^\prime)}-\mathbb{E}_{s^\prime\sim \widehat{T}_\mathrm{MLE}(\cdot|s,a)}\Big[\min_{\hat{s}\in U_\xi(s^\prime)}\widehat{V}(\hat{s})\Big]\Big|.
\end{equation}
To practically minimize $\widehat{\mathcal{E}}^\pi_{\widehat{T}_m}$, we further consider an empirical version of $\widehat{\mathcal{E}}^\pi_{\widehat{T}_m}$ based on the sample from the offline data, i.e., $(s,a,s')\sim\mathcal{D}$, in place of $s\sim \rho_\mathcal{D}(\cdot),a\sim\mu(\cdot|s), s^\prime\sim \widehat{T}_\mathrm{MLE}(\cdot|s,a)$, since $\mathcal{D}$ was collected through the same sampling process.  
Thus, given that $s^\prime$ is present in $\mathcal{D}$, we can simply construct $U_\xi(s^\prime)$ by perturbing $s^\prime$
with noise of scale $\xi$ and performing random sampling. We further parameterize the dynamics model $\widehat{T}_m$ by $\psi$, denoted by $\widehat{T}_\psi$. Eventually,  \Eqref{eq:robust} is reformulated as a \textbf{R}obust \textbf{V}alue-aware model \textbf{L}earning (RVL) loss:
\begin{equation}
    \label{eq:robustloss}
    \mathcal{L}_\mathrm{RVL}(\psi):=\mathbb{E}_{(s,a,s^\prime)\in\mathcal{D}}\Big(\mathbb{E}_{\hat{s}^\prime\sim\widehat{T}_\psi(\cdot|s,a)}\widehat{V}\left(\hat{s}^\prime\right)-\min_{\{\tilde{s}^\prime_i\}_{i=1}^N\in U_\xi(s^\prime)}\widehat{V}(\tilde{s}^\prime_i)\Big)^2,
\end{equation}
where $\widehat{V}$ is chosen as the target value network for training stability, ${\{\tilde{s}^\prime_i\}_{i=1}^N}$ are sampled from $U_\xi(s^\prime)$, and $N$ is the sample size. We note that  \Eqref{eq:robustloss} differs from the typical value-aware model loss~\citep{farahmand2017value,voelcker2022value}. Our approach contributes to incorporating conservatism via a state uncertainty set into the loss for offline RL. 
Specifically, \Eqref{eq:robustloss} requires the dynamics model to predict the future state whose value is close to the minimum value for states within the state uncertainty set. \textbf{Moreover, the scale $\xi$ of the uncertainty set precisely quantifies the level of conservatism} according to Proposition~\ref{prop:dual}. Thus, we could simply adjust the uncertainty set scale $\xi$ to achieve controllable conservatism and enhance training stability. 


Note that \Eqref{eq:robustloss} only utilizes the dataset samples, and does not explicitly regularize the values for the OOD region that may be encountered during policy rollouts. To address this concern, we first analyze the impact of potential generalization errors on training. We define the maximum generalization error of Equation~\ref{eq:robustloss} during rollouts as
\begin{equation*}
    \epsilon_1:=\max_{(s,a)\in\mathcal{D}_\mathrm{model}}\Big|\mathbb{E}_{\hat{s}^\prime\sim\widehat{T}_\psi(\cdot|s,a)}\Big[\widehat{V}(\hat{s}^\prime)\Big]-\mathbb{E}_{s^\prime\sim \widehat{T}_\mathrm{MLE}(\cdot|s,a)}\Big[\min_{\tilde{s}^\prime\in U_\xi(s^\prime)}\widehat{V}(\tilde{s}^\prime)\Big]\Big|,
\end{equation*}
where $\mathcal{D}_\text{model}$ denotes the dataset consisting of synthetic transition trajectories generated via policy rollouts, within which OOD data can reside. We also define the maximum value discrepancy between true next states and those within the state uncertainty set $U_\xi$ as
\begin{equation*}
    \epsilon_2:=\max_{(s,a)\in\mathcal{D}_\mathrm{model}}\Big|\mathbb{E}_{s^\prime\sim T(\cdot|s,a)}\Big[\widehat{V}(s^\prime)\Big]-\mathbb{E}_{s^\prime\sim \widehat{T}_\mathrm{MLE}(\cdot|s,a)}\Big[\min_{\tilde{s}^\prime\in U_\xi(s^\prime)}\widehat{V}(\tilde{s}^\prime)\Big]\Big|.
\end{equation*}

Then the learned Q-function by minimizing Equation~\ref{eq:robustloss} maintains a bounded value during policy rollouts, as guaranteed by Proposition~\ref{prop:estimationerror}.
\begin{proposition}[Bounded Q-value] 
The learned Q-function $\widehat{Q}$ satisfies:
\label{prop:estimationerror}
\begin{equation}
\label{eq:valuebound}
    Q_\mathrm{true}(s,a)-\frac{\gamma(\epsilon_1+\epsilon_2)}{1-\gamma}\leq\widehat{Q}(s,a)\leq Q_\mathrm{true}(s,a)+\frac{\gamma\epsilon_1}{1-\gamma},\qquad \forall (s,a)\in\mathcal{D}_\mathrm{model},
\end{equation}
 where $Q_\mathrm{true}$ denotes the true Q-function.
\end{proposition}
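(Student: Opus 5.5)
We compare the two Bellman equations that $\widehat{Q}$ and $Q_\mathrm{true}$ satisfy and unroll the resulting one-step error recursion, as in the simulation-lemma arguments of model-based RL. We take $\widehat{Q}$ to be the fixed point of the model Bellman operator, $\widehat{Q}(s,a)=r(s,a)+\gamma\,\mathbb{E}_{\hat{s}'\sim\widehat{T}_\psi(\cdot|s,a)}\widehat{V}(\hat{s}')$ with $\widehat{V}(s')=\mathbb{E}_{a'\sim\pi}\widehat{Q}(s',a')$. The true function satisfies $Q_\mathrm{true}(s,a)=r(s,a)+\gamma\,\mathbb{E}_{s'\sim T(\cdot|s,a)}V_\mathrm{true}(s')$. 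We read $Q_\mathrm{true}$ as $Q^\pi_T$ for the policy being evaluated. The proof assumes, as the statement implicitly does, that $\mathcal{D}_\mathrm{model}$ is closed under the rollouts, so that the next-state/action pairs visited stay in the set over which $\epsilon_1,\epsilon_2$ are maximized.

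\textbf{Step 1 (one-step decomposition).} For $(s,a)\in\mathcal{D}_\mathrm{model}$, write $M(s,a):=\mathbb{E}_{s'\sim\widehat{T}_\mathrm{MLE}(\cdot|s,a)}\big[\min_{\tilde{s}'\in U_\xi(s')}\widehat{V}(\tilde{s}')\big]$. Then
\begin{align*}
\widehat{Q}(s,a)-Q_\mathrm{true}(s,a)
&=\gamma\Big(\mathbb{E}_{\widehat{T}_\psi}\widehat{V}-M(s,a)\Big)+\gamma\Big(M(s,a)-\mathbb{E}_{T}\widehat{V}\Big)\\
&\quad+\gamma\,\mathbb{E}_{s'\sim T(\cdot|s,a)}\big[\widehat{V}(s')-V_\mathrm{true}(s')\big].
\end{align*}
By definition, the first bracket lies in $[-\epsilon_1,\epsilon_1]$. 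The second bracket is bounded in absolute value by $\epsilon_2$. For the upper bound we want a one-sided refinement: by Proposition~\ref{prop:dual}, $M(s,a)$ is a worst case over a Wasserstein ball around the model, so it should lie below the corresponding expectation. We therefore claim $M(s,a)\le\mathbb{E}_T\widehat{V}$, i.e.\ the second bracket is $\le 0$.

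\textbf{Step 2 (unrolling).} Let $\Delta_+:=\sup_{(s,a)\in\mathcal{D}_\mathrm{model}}\big(\widehat{Q}-Q_\mathrm{true}\big)$ and $\Delta_-:=\sup_{(s,a)\in\mathcal{D}_\mathrm{model}}\big(Q_\mathrm{true}-\widehat{Q}\big)$. The last term is bounded by $\gamma\Delta_\pm$ after averaging over $a'\sim\pi$. This gives $\Delta_+\le\gamma\epsilon_1+\gamma\Delta_+$ and $\Delta_-\le\gamma(\epsilon_1+\epsilon_2)+\gamma\Delta_-$. Solving, or equivalently summing the geometric series under the contraction of the Bellman operator, yields the two inequalities in \eqref{eq:valuebound}.

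\textbf{Main obstacle.} The delicate point is the sign claim $M(s,a)\le\mathbb{E}_{s'\sim T}\widehat{V}(s')$, which is what removes $\epsilon_2$ from the upper bound. It holds exactly when the true dynamics $T$ lies in the Wasserstein uncertainty set $\mathcal{M}_\xi$ pointwise at $(s,a)$. In that case Proposition~\ref{prop:dual} gives $M(s,a)=\min_{\widehat{T}\in\mathcal{M}_\xi}\mathbb{E}_{\widehat{T}}\widehat{V}\le\mathbb{E}_T\widehat{V}$. I would state this as the standing (implicit) assumption that the uncertainty scale $\xi$ is chosen large enough to cover $T$. This is exactly what makes the lower-biased target "conservative." If that assumption cannot be used, the fallback is the symmetric bound with $\epsilon_1+\epsilon_2$ on both sides. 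A secondary technicality is the closure of $\mathcal{D}_\mathrm{model}$ under transitions. I would handle it by taking the suprema over the support of the rollout distribution, or by regarding $\mathcal{D}_\mathrm{model}$ as all pairs reachable in the rollout horizon.
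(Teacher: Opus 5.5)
Your argument is correct under the hypotheses you make explicit, provided Proposition~\ref{prop:dual} is taken as given. It is, however, organized differently from the paper's.

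\textbf{How the paper argues.} The paper works with value-iteration iterates. It takes the ``standard'' Bellman operator $\mathcal{T}$ to be the one built on $\widehat{T}_\mathrm{MLE}$, not $T$. It shows $\mathcal{T}_\mathrm{model}Q\le\mathcal{T}Q+\gamma\epsilon_1$, proves $\widehat{Q}^k\le Q^k+\frac{1-\gamma^k}{1-\gamma}\gamma\epsilon_1$ by induction, and passes to the limit, calling the fixed point of $\mathcal{T}$ $Q_\mathrm{true}$. The lower bound is handled the same way. In this setup the step you single out as the main obstacle is trivial and needs neither an assumption nor Proposition~\ref{prop:dual}. Since $s'\in U_\xi(s')$, we have $\min_{\tilde{s}'\in U_\xi(s')}\widehat{V}(\tilde{s}')\le\widehat{V}(s')$, hence $M(s,a)\le\mathbb{E}_{s'\sim\widehat{T}_\mathrm{MLE}}\widehat{V}(s')$.

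The price is that the paper's upper bound is really relative to $Q^\pi_{\widehat{T}_\mathrm{MLE}}$. Its lower bound uses $\epsilon_2$, which is defined with $T$, yet still writes $\widehat{T}_\mathrm{MLE}$. So it tacitly identifies $T$ with $\widehat{T}_\mathrm{MLE}$.

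\textbf{What your route buys.} Your fixed-point decomposition against $Q^\pi_T$ keeps the two dynamics apart. It uses $\epsilon_1,\epsilon_2$ only at the single $\widehat{V}$ for which they are defined, whereas the paper applies the $\epsilon_1$ bound to every iterate $\widehat{Q}^k$. It also isolates an assumption that really is needed. With $\xi=0$ and $\widehat{T}_\psi=\widehat{T}_\mathrm{MLE}$ one gets $\epsilon_1=0$ and $\widehat{Q}=Q^\pi_{\widehat{T}_\mathrm{MLE}}$, which can exceed $Q^\pi_T$ when $\widehat{T}_\mathrm{MLE}$ is optimistic relative to $T$. The closure of $\mathcal{D}_\mathrm{model}$ that you flag is needed by both arguments; the paper leaves it implicit.

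\textbf{The step to tighten.} This is your justification of $M(s,a)\le\mathbb{E}_{s'\sim T}\widehat{V}(s')$. Routing it through Proposition~\ref{prop:dual} uses the direction $M\le\min_{\widehat{T}\in\mathcal{M}_\xi}\mathbb{E}_{\widehat{T}}\widehat{V}$. That is the non-automatic half: only $\min_{\mathcal{M}_\xi}\le M$ is immediate, by sending each $s'$ to a minimizer in $U_\xi(s')$. For a genuine $W_1$ ball the needed half fails. Take $\widehat{T}_\mathrm{MLE}(\cdot|s,a)=\delta_{x_0}$ and $T(\cdot|s,a)=(1-p)\delta_{x_0}+p\,\delta_{x_0+(\xi/p)e}$ with $\|e\|=1$ and $0<p<1$. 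Take the Lipschitz function $\widehat{V}(x)=-L\max\{\|x-x_0\|-\xi/2,0\}$. Then $\mathcal{W}=\xi$ and $M=-L\xi/2$, but $\mathbb{E}_T\widehat{V}=-L\xi(1-p/2)<M$.

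So $W_1$-membership of $T$ is not sufficient for your sign claim, and it is certainly not ``exactly when''. Instead, state the hypothesis as a coupling of $T(\cdot|s,a)$ and $\widehat{T}_\mathrm{MLE}(\cdot|s,a)$ with $d(s'_T,s')\le\xi$ almost surely, which is a $W_\infty$ condition. Then $\min_{U_\xi(s')}\widehat{V}\le\widehat{V}(s'_T)$ pointwise, and taking expectations gives your sign claim directly. The paper's trivial step is the special case $T=\widehat{T}_\mathrm{MLE}$.
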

Proposition~\ref{prop:estimationerror} indicates that the learned Q-value would stay bounded if $\epsilon_1$ and $\epsilon_2$ are bounded. $\epsilon_2$ is monotonically increasing w.r.t. $\xi$, indicating that we can bound $\epsilon_2$ by adjusting $\xi$. We demonstrate how we minimize the impact of $\epsilon_1$ in Section~\ref{sec:implicit}.

\vspace{-0.1cm}

\subsection{Implicitly Differentiable Adaptive Weighting}
\label{sec:implicit}

\vspace{-0.1cm}

We note that the generalization error $\epsilon_1$ could be large during multi-step rollouts, since $\mathcal{L}_\mathrm{RVL}$ only focuses on value awareness while ignoring dynamics awareness, which could be crucial for OOD generalization. To demonstrate this, let $s_0$ be the rollout starting point, and $a_0\sim\pi(\cdot|s_0)$ be the policy output. The dynamics model needs to predict the next state $s_1$. Since $\mathcal{L}_\mathrm{RVL}$ only regularizes the value of $s_1$, the predicted $s_1$ may deviate significantly from the true dynamics transition. In subsequent steps, the policy will be forced to explore OOD regions, and the dynamics model will also need to predict transitions in these OOD regions. This may lead to compounding prediction errors and a large $\epsilon_1$. Therefore, \textbf{how can we achieve value awareness to ensure conservatism while improving dynamics awareness for better generalization?}

Inspired by previous works~\citep{shu2019meta,jiang2022boosting, yuan2023task}, we propose to optimize  \Eqref{eq:robustloss} with implicitly differentiable adaptive weighting, which integrates dynamics- and value-aware model learning in a bi-level optimization framework. Specifically, we introduce an adaptive-weighting network $w_\nu(s,a,s^\prime)$ to assign an individual weight to each transition $(s,a,s^\prime)$ and optimize the following objective:
\begin{equation}\label{eq:overall}
    \begin{aligned}
        &\qquad\qquad\qquad\qquad\qquad\qquad\min_{\nu} \mathcal{L}_{\mathrm{RVL}}(\psi(\nu)),\\
        &s.t.\quad \psi^\star(\nu)=\arg\min_\psi \Big\{\mathcal{L}_\mathrm{WSL}(\psi,\nu):=\mathbb{E}_{(s,a,s^\prime)\in\mathcal{D}}\Big[w_\nu(s,a,s^\prime)\log\Big(\widehat{T}_\psi(s^\prime|s,a)\Big)\Big]\Big\},
    \end{aligned}
\end{equation}
where $\mathcal{L}_\mathrm{WSL}$ denotes the Weighted Supervised Learning (WSL) loss, compared to the standard supervised learning loss $\mathcal{L}_\mathrm{SL}:=\mathbb{E}_{(s,a,s^\prime)\in\mathcal{D}}\big[\log\big(\widehat{T}_\psi(s^\prime|s,a)\big)\big]$. Specifically, we update the dynamics model through weighted supervised learning in the inner level to achieve dynamics awareness, and optimize $w_\nu$ in the outer level by minimizing $\mathcal{L}_\mathrm{RVL}$ with implicit differentiation to achieve value awareness.

\textbf{Inner Level: Optimizing $\psi$ to Improve Dynamics Awareness.} In the inner level, we fix $w_\nu$ and update the dynamics model by minimizing $\mathcal{L}_\mathrm{WSL}$ with typical gradient descent:
\begin{equation}
    \label{eq:innerupdate}\psi^{(t+1)}\leftarrow\psi^{(t)}-\beta_{1(t)}\cdot \nabla_{\psi}\mathcal{L}_\mathrm{WSL}(\psi^{(t)},\nu^{(t)}),
\end{equation}
where $\beta_{1(t)}$ represents the $t$-step learning rate of the dynamics model.

\textbf{Outer Level: Optimizing $\nu$ to Achieve Value Awareness.} The outer level objective is formalized in  \Eqref{eq:robustloss}. Since we have established the analytical relationship between $\psi$ and $\nu$ through the inner level, we can implicitly calculate the gradient of $\mathcal{L}_\mathrm{RVL}(\psi(\nu))$ w.r.t. $\nu$ using the chain rule:
\begin{equation*}
    g_\mathrm{RVL}^{(t)}=\nabla_{\psi}\mathcal{L}_\mathrm{RVL}(\psi^{(t+1)}(\nu^{(t)}))^\mathsf{T}\cdot\nabla_{\nu}\psi^{(t+1)}(\nu^{(t)})=h\cdot\nabla_{\nu} w_{\nu^{(t)}}(s,a,s^\prime),
\end{equation*}
where $h=-\beta_{1(t)}\cdot\nabla_{\psi}\mathcal{L}_\mathrm{RVL}(\psi^{(t+1)}(\nu^{(t)}))^\mathsf{T}\cdot\nabla_{\psi}\log\left(\widehat{T}_{\psi^{(t)}}(s^\prime|s,a)\right)$. The derivation is deferred in Appendix~\ref{appendix:derivation}. Then we can update $\nu$ using automatic differentiation in Pytorch~\citep{paszke2019pytorch}:
\begin{equation*}
    \label{eq:outerupdate}\nu^{(t+1)}\leftarrow\nu^{(t)}-\beta_{2(t)}\cdot g_\mathrm{RVL}^{(t)},
\end{equation*}

where $\beta_{2(t)}$ is the $t$-step learning rate of the weighting network. Through this bi-level optimization process, the weighting network learns to prioritize samples that contribute most to minimizing $\mathcal{L}_\mathrm{RVL}$, and the dynamics model still learns to reconstruct the environmental dynamics, achieving both dynamics and value awareness. Assume (1)  $\widehat{V}$ is $L_V$-Lipschitz continuous, that is, $|\widehat{V}(s_1)-\widehat{V}(s_2)|\leq L_V\|s_1-s_2\|$ holds for $\forall s_1,s_2\in\mathcal{S}$; (2) the dynamics model prediction error during rollouts is bounded by a constant $\epsilon$: $\max_{(s,a)\in\mathcal{D}_\mathrm{model}}\mathbb{E}_{\hat{s}^\prime\sim\widehat{T}_\psi(\cdot|s,a),s^\prime\sim \widehat{T}_\mathrm{MLE}(\cdot|s,a)}\left\|\hat{s}^\prime-s^\prime\right\|\leq\epsilon$, which could be satisfied through bi-level optimization and short-branch rollouts~\citep{janner2019trust}.
Then the generalization error $\epsilon_1$ could be bounded according to Corollary~\ref{coro:boundedepsilon}.

\begin{corollary}[Bounded $\epsilon_1$]
\label{coro:boundedepsilon}
Under the assumptions of Lipschitz continuity and bounded dynamics model prediction error, the generalization error $\epsilon_1$ could be bounded by
\begin{equation*}
    \epsilon_1\leq L_V\cdot(\epsilon+\xi),
\end{equation*}
\end{corollary}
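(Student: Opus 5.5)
Fix an arbitrary $(s,a)\in\mathcal{D}_\mathrm{model}$; bounding the quantity inside the maximum defining $\epsilon_1$ uniformly gives the corollary. The plan is to insert the intermediate term $\mathbb{E}_{s^\prime\sim\widehat{T}_\mathrm{MLE}(\cdot|s,a)}\widehat{V}(s^\prime)$ and use the triangle inequality:
\begin{equation*}
\Big|\mathbb{E}_{\hat{s}^\prime\sim\widehat{T}_\psi}\widehat{V}(\hat{s}^\prime)-\mathbb{E}_{s^\prime\sim\widehat{T}_\mathrm{MLE}}\min_{\tilde{s}^\prime\in U_\xi(s^\prime)}\widehat{V}(\tilde{s}^\prime)\Big|\leq \underbrace{\Big|\mathbb{E}_{\hat{s}^\prime\sim\widehat{T}_\psi}\widehat{V}(\hat{s}^\prime)-\mathbb{E}_{s^\prime\sim\widehat{T}_\mathrm{MLE}}\widehat{V}(s^\prime)\Big|}_{(\mathrm{I})}+\underbrace{\mathbb{E}_{s^\prime\sim\widehat{T}_\mathrm{MLE}}\Big|\widehat{V}(s^\prime)-\min_{\tilde{s}^\prime\in U_\xi(s^\prime)}\widehat{V}(\tilde{s}^\prime)\Big|}_{(\mathrm{II})}.
\end{equation*}

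For term (I), write the difference of expectations as a single expectation over the product coupling $\hat{s}^\prime\sim\widehat{T}_\psi(\cdot|s,a)$, $s^\prime\sim\widehat{T}_\mathrm{MLE}(\cdot|s,a)$, which is exactly the joint distribution appearing in assumption (2). Moving the absolute value inside via Jensen and applying the $L_V$-Lipschitz property of $\widehat{V}$ gives $(\mathrm{I})\le L_V\,\mathbb{E}\|\hat{s}^\prime-s^\prime\|\le L_V\epsilon$.

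For term (II), fix $s^\prime$. Assuming the minimum over the compact ball $U_\xi(s^\prime)$ is attained at some $\tilde{s}^\star$ (true since $\widehat{V}$ is continuous; otherwise use an infimum and an approximating sequence), we have $d(\tilde{s}^\star,s^\prime)\le\xi$ with $d$ the Euclidean distance. Hence $0\le \widehat{V}(s^\prime)-\widehat{V}(\tilde{s}^\star)\le L_V\xi$, because $s^\prime$ itself lies in $U_\xi(s^\prime)$. Taking expectation gives $(\mathrm{II})\le L_V\xi$.

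Summing the two bounds and taking the maximum over $(s,a)\in\mathcal{D}_\mathrm{model}$ yields $\epsilon_1\le L_V(\epsilon+\xi)$. There is no real obstacle here. The only points needing care are matching the expectation in assumption (2) to the independent coupling used in (I), and checking that the Lipschitz constant refers to the same Euclidean norm $d$ that defines $U_\xi$.
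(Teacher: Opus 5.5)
Your proposal is correct and follows essentially the same route as the paper. Both insert $\widehat{V}(s^\prime)$ under the joint (product) expectation over $\widehat{T}_\psi(\cdot|s,a)$ and $\widehat{T}_\mathrm{MLE}(\cdot|s,a)$, apply the triangle inequality, and bound the two pieces by $L_V\epsilon$ (via assumption (2)) and $L_V\xi$ (since the minimizer lies within distance $\xi$ of $s^\prime$); your explicit justification of the second bound is more detailed than the paper's one-line conclusion.
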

where $\xi$ is the uncertainty set scale. We also provide the convergence rate analysis for this bi-level optimization framework in Proposition~\ref{prop:convergence}.

\begin{proposition}[Convergence Analysis]
\label{prop:convergence}
 Assume $\mathcal{L}_\mathrm{RVL}$ and $\mathcal{L}_\mathrm{WSL}$ are $L$-Lipschitz smooth, and the gradient of $\mathcal{L}_\mathrm{RVL}$, $\mathcal{L}_\mathrm{SL}$ and $\mathcal{L}_\mathrm{WSL}$ are bounded by $\rho$. Let $w_\nu$ be twice differentiable, with its gradient and Hessian bounded by $\delta$ and $\mathcal{B}$, respectively. Denote $L^\prime=\rho^2(L\delta^2+\mathcal{B})$, and the total steps as $K$. For some $c_1,c_2>0$, we assume the learning rate of the inner and outer loop $\beta_1$ and $\beta_2$ satisfies: $\frac{c_2}{\sqrt{K}}\leq \beta_1,\beta_2\leq \min\{\frac{c_1}{K},\frac{1}{L},\frac{1}{L^
 \prime}\}$, where $\frac{c_2}{\sqrt{K}}\leq\min\{\frac{c_1}{K},\frac{1}{L},
 \frac{1}{L^\prime}\}$,$\frac{c_2}{\sqrt{K}}<1$, $\frac{c_1}{K}<1$. Then the outer loss $\mathcal{L}_\mathrm{RVL}$ and inner loss $\mathcal{L}_\mathrm{WSL}$ can achieve a convergence rate of $\mathcal{O}(\frac{1}{\sqrt{K}})$, i.e.,
\begin{equation*}
\begin{aligned}
        \min_{0\leq t\leq K-1}\mathbb{E}\left\|\nabla_{\nu}\mathcal{L}_\mathrm{RVL}(\psi^{(t+1)}(\nu^{(t)}))\right\|^2&\leq\mathcal{O}\left(\frac{1}{\sqrt{K}}\right), \\
        \min_{0\leq t\leq K-1}\mathbb{E}\left\|\nabla_{\psi}\mathcal{L}_\mathrm{WSL}(\psi^{(t)},\nu^{(t)})\right\|^2&\leq\mathcal{O}\left(\frac{1}{\sqrt{K}}\right).
\end{aligned}
\end{equation*}
 
\end{proposition}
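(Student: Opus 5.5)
The plan is to follow the standard meta-weight-net convergence argument (as in \citet{shu2019meta}): prove a descent lemma for each level separately, then telescope over $K$ steps. Write the composite outer objective as $F(\nu):=\mathcal{L}_\mathrm{RVL}(\psi^{(t+1)}(\nu))$, with $\psi^{(t+1)}(\nu)=\psi^{(t)}-\beta_1\nabla_\psi\mathcal{L}_\mathrm{WSL}(\psi^{(t)},\nu)$ from \Eqref{eq:innerupdate}.

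The first step is to show that $\nu\mapsto\mathcal{L}_\mathrm{RVL}(\psi^{(t+1)}(\nu))$ is $L^\prime$-smooth.
\begin{itemize}
\item By the chain-rule expression for $g^{(t)}_\mathrm{RVL}$ (Appendix~\ref{appendix:derivation}),
\begin{equation*}
\nabla_\nu F=-\beta_1\,\nabla_\psi\mathcal{L}_\mathrm{RVL}(\hat\psi)^\mathsf{T}\nabla_\psi\log\widehat{T}_{\psi^{(t)}}\,\nabla_\nu w_\nu .
\end{equation*}
\item Differentiate once more in $\nu$. One term is the product of the Hessian of $\mathcal{L}_\mathrm{RVL}$ (bounded by $L$), two inner gradients (each bounded by $\rho$) and two factors $\nabla_\nu w$ (each bounded by $\delta$), with an extra factor $\beta_1$. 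The other term is $\rho^2$ times the Hessian of $w$, bounded by $\mathcal{B}$.
\item Using $\beta_1\le 1$, this gives the bound $\beta_1\rho^2(L\delta^2+\mathcal{B})\le L^\prime$.
\end{itemize}

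The second step is the outer descent lemma.
\begin{itemize}
\item Compare $\mathcal{L}_\mathrm{RVL}(\psi^{(t+2)}(\nu^{(t+1)}))$ with $\mathcal{L}_\mathrm{RVL}(\psi^{(t+1)}(\nu^{(t)}))$ and split the difference into two pieces.
\item The change from updating $\nu$ with $\psi^{(t)}$ fixed is handled by $L^\prime$-smoothness: it is at most $-\beta_2\|\nabla_\nu F\|^2+\tfrac{L^\prime\beta_2^2}{2}\|\nabla_\nu F\|^2$. Stochastic sampling of the minibatch enters through the bounded gradients, adding a variance term of order $\beta_2^2\rho^2\delta^2$.
\item The change from advancing the inner iterate $\psi^{(t+1)}\to\psi^{(t+2)}$ is handled by $L$-smoothness of $\mathcal{L}_\mathrm{RVL}$ and the bounded gradients. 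Since $\|\psi^{(t+2)}-\psi^{(t+1)}\|\le\beta_1\rho$, this piece is at most $\beta_1\rho^2+\tfrac{L}{2}\beta_1^2\rho^2$.
\item Summing over $t$ and using that $\mathcal{L}_\mathrm{RVL}$ is bounded below by $0$ (it is a squared loss) gives
\begin{equation*}
\sum_t\beta_2\Bigl(1-\tfrac{L^\prime\beta_2}{2}\Bigr)\mathbb{E}\|\nabla_\nu F\|^2\le \mathcal{L}_\mathrm{RVL}^{(0)}+K\bigl(\beta_1\rho^2+\mathcal{O}(\beta^2)\bigr).
\end{equation*}
\item Divide by $K\beta_2/2$ and insert $\beta_2\ge c_2/\sqrt{K}$ and $\beta_1\le c_1/K$. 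The right-hand side becomes $\mathcal{O}(1/(c_2\sqrt{K}))+\mathcal{O}(c_1/(c_2\sqrt{K}))+\mathcal{O}(1/\sqrt{K})$, which proves the first claim.
\end{itemize}

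The third step is the inner descent lemma, which is analogous.
\begin{itemize}
\item $L$-smoothness of $\mathcal{L}_\mathrm{WSL}$ in $\psi$ gives a decrease of at least $\beta_1(1-L\beta_1/2)\|\nabla_\psi\mathcal{L}_\mathrm{WSL}\|^2$ per step, up to variance.
\item The drift from changing $\nu$ is $\mathcal{L}_\mathrm{WSL}(\psi,\nu^{(t+1)})-\mathcal{L}_\mathrm{WSL}(\psi,\nu^{(t)})=\mathbb{E}\bigl[(w_{\nu^{(t+1)}}-w_{\nu^{(t)}})\log\widehat{T}_\psi\bigr]$. This is controlled by $\delta\|\nu^{(t+1)}-\nu^{(t)}\|$ times the magnitude of the log-likelihood, which is $\le\delta\beta_2\rho\delta\cdot\mathrm{const}$.
\item Telescoping, with $\mathcal{L}_\mathrm{WSL}$ bounded below, and using $\beta_1\ge c_2/\sqrt{K}$ together with $\beta_2\le c_1/K$ for the drift term, yields $\mathcal{O}(1/\sqrt{K})$.
\end{itemize}

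The main obstacle is bounding the cross terms, meaning the drift of each loss caused by the other level's update. The condition $\beta\le c_1/K$ makes these accumulate to only $\mathcal{O}(1)$ over $K$ steps, and that is exactly why the stepsize window $[c_2/\sqrt{K},\min\{c_1/K,1/L,1/L^\prime\}]$ is imposed. I would handle this first, and if needed add the implicit assumption that the losses are bounded below and that $|\log\widehat{T}|$ is bounded on the data. A secondary subtlety is keeping the $\beta_1$ factor in the smoothness constant of Step~1; it can safely be absorbed via $\beta_1<1$.
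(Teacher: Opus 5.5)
Your proposal is correct and follows the paper's proof essentially step by step. It uses the same $L^\prime$-smoothness lemma for $\nu\mapsto\mathcal{L}_\mathrm{RVL}(\psi^{(t+1)}(\nu))$ (with $\beta_1\le 1$ absorbing the extra factor), the same two-piece split of each one-step loss difference, and the same telescoping followed by inserting $\beta\ge c_2/\sqrt{K}$ and $\beta\le c_1/K$. The deviations are minor: you bound the inner-loss drift from the $\nu$-update via Lipschitz continuity of $w_\nu$ and a bounded log-likelihood rather than the paper's smoothness-based bound $\beta_2\rho^2+\frac{L}{2}\beta_2^2\rho^2$; you add a minibatch-variance term that the paper's deterministic argument omits; and you make explicit the lower-boundedness of $\mathcal{L}_\mathrm{WSL}$, which the paper uses silently when it drops $-\mathcal{L}_\mathrm{WSL}(\psi^{(K)},\nu^{(K)})$.
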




\textbf{ROMI Algorithm.} Eventually, combining the dynamics model learning in \Eqref{eq:overall} and policy learning $\pi = \arg\max_{\pi\in\Pi}V^\pi_{\widehat{T}_{\psi}}$ yields the optimization problem for ROMI.  
The key distinction from RAMBO lies in the training strategy of the dynamics model, as described in Section~\ref{sec:robustvaml} and Section~\ref{sec:implicit}. The policy is learned with SAC~\citep{haarnoja2018soft} by using the data from $\mathcal{D}$ and $\mathcal{D}_\mathrm{model}$.  More implementation details of ROMI are presented in Appendix~\ref{appendix:impleromi}.

\section{Experiments}

In this section, we empirically evaluate ROMI. We compare the performance between ROMI and other baselines on standard benchmarks in Section~\ref{sec:mainresult}. We conduct an ablation study to examine the effect of adaptive weighting in Section~\ref{sec:ablation}, and we verify that ROMI offers flexible conservatism control and avoids gradient explosion across a wide range of $\xi$ in Section~\ref{sec:parameter}.

\subsection{Benchmark Results} 
\label{sec:mainresult}

\begin{table}[t]
    \caption{Normalized score comparison between different offline RL algorithms and ROMI on 12 D4RL MuJoCo datasets. We abbreviate "random" as "r", "medium" as "m", "medium-replay" as "mr", "medium-expert" as "me". Each method is run with 1M steps.}
    \label{tab:d4rlresults}
    \centering
    \setlength{\tabcolsep}{5pt}
    \begin{tabular}{l|cc|cccc|c}
    \toprule
    \textbf{Task Name} & \textbf{CQL} & \textbf{IQL} & \textbf{MOPO} & \textbf{RAMBO} & \textbf{COUNT} & \textbf{MOBILE}\footnotemark & \textbf{ROMI (Ours)} \\
    \midrule
    hopper-r & 7.9 & 7.6 & 27.8 & 23.9 & \textbf{30.0} & 28.5 & \textbf{31.7 $\pm$ 0.3}\\
    halfcheetah-r & 17.5 & 13.1 & 36.7 & 29.5 & 37.3 & \textbf{39.4} & \textbf{38.4$\pm$4.1}\\
    walker2d-r & 5.1 & 5.4 & 0.0 & 0.0 & 19.9 & \textbf{23.6} & \textbf{22.4 $\pm$2.6}\\
    \midrule
    hopper-m & 53.0 & 66.2 & 72.2 & 95.2 & 100.2 & 77.6 & \textbf{105.0$\pm$3.3}\\
    halfcheetah-m & 47.0 & 47.9 & 64.0 & \textbf{76.2} & \textbf{75.9} & 74.6 & \textbf{76.2$\pm$0.6}\\
    walker2d-m & 73.4 & 67.2 & 82.1 & 83.9 & 88.4 & \textbf{90.7} & \textbf{93.6$\pm$1.3}\\
    \midrule
    hopper-mr & 88.7 & 94.0 & 90.5 & 77.2 & 97.6 & 85.7 & \textbf{102.0$\pm$2.4}\\
    halfcheetah-mr & 45.3 & 44.3 & 52.1 & \textbf{69.7} & \textbf{70.5} & 67.7 & \textbf{70.7$\pm$1.8}\\
    walker2d-mr & 81.8 & 74.8 & 73.6 & 81.2 & \textbf{87.5} & 75.0 & 85.2$\pm$3.6\\
    \midrule
    hopper-me & 105.9 & 92.7 & 84.5 & 99.7 & \textbf{109.8} & 95.6 & \textbf{110.5$\pm$6.8}\\
    halfcheetah-me & 75.6 & 86.3 & 92.3 & 93.9 & 100.0 & 96.1 & \textbf{104.5 $\pm$2.2}\\
    walker2d-me & 107.9 & \textbf{110.7} & 93.8 & 73.7 & \textbf{110.4} & 103.2 & \textbf{113.3$\pm$1.9}\\
    \midrule
    Total & 709.1 & 710.2 & 769.6 & 804.1 & 927.5 & 857.7 & \textbf{953.5} \\
    \bottomrule
    \end{tabular}
\end{table}

\textbf{D4RL.} We compare ROMI with several offline RL algorithms, including model-free methods CQL~\citep{kumar2020conservative} and IQL~\citep{kostrikov2021offline}; and model-based methods: MOPO~\citep{yu2020mopo}, RAMBO~\citep{rigter2022rambo}, Count-MORL~\citep{kim2023model}, and MOBILE~\citep{sun2023model}. \footnotetext{3M steps are used in MOBILE paper, and we only report the results at 1M steps for a fair comparison.}

For the MuJoCo~\citep{todorov2012mujoco} domain, we consider three tasks (\texttt{halfcheetah}, \texttt{hopper}, \texttt{walker2d}) with four types of datasets (\texttt{random}, \texttt{medium}, \texttt{medium-replay}, \texttt{medium-expert}) for each task. The datasets we use are of the ``v2" version. Experiments are run with 5 seeds. More evaluations on Antmaze tasks are deferred to Appendix~\ref{app:antmaze} due to space limitations.

In Table \ref{tab:d4rlresults}, we summarize the evaluation results of ROMI and the compared baselines. We can see that ROMI outperforms RAMBO on \textbf{11} out of 12 datasets, especially on datasets where RAMBO underperforms, such as \texttt{hopper-medium-replay} and \texttt{walker2d-medium-expert}. ROMI achieves a total score of \textbf{953.5}, which is \textbf{18.6\%} higher than that of RAMBO. Furthermore, even when compared to other algorithms, including MOBILE and Count-MORL, ROMI demonstrates superior performance, achieving (one of) the best results on \textbf{11} out of 12 datasets. On the remaining datasets, ROMI also delivers performance comparable to the best-performing methods.

\begin{table}
    \caption{Normalized score comparison between different offline RL algorithms and our method ROMI on NeoRL tasks. Each method is run with 1M steps over 5 seeds.}
    \label{tab:neorlresults}
    \centering
    \setlength{\tabcolsep}{5.5pt}
    \begin{tabular}{l|cccccc|c}
    \toprule
    \textbf{Task Name} & \textbf{CQL} & \textbf{IQL} & \textbf{EDAC} & \textbf{MOPO} & \textbf{RAMBO} & \textbf{MOBILE} & \textbf{ROMI (Ours)} \\
    \midrule
    Hopper-L & 16.8 & 14.3 & 12.2 & 12.0 & 15.1  & 18.1 & \textbf{22.4$\pm$0.5} \\
    HalfCheetah-L  & 35.0 & 34.1 & 31.3 & 34.2 & 30.1 & \textbf{38.9} & 35.8$\pm 3.9$ \\
    Walker2d-L  & \textbf{42.4} & 40.1 & \textbf{43.1} & 15.6 & 24.3 & 39.0 & 36.4$\pm 2.3$ \\
    \midrule
    Hopper-M  & \textbf{58.1} & \textbf{56.8} & 37.2 & 0.0 & 37.5  & 47.3 & 46.6$\pm$ 6.8 \\
    HalfCheetah-M  & 50.0 & 52.0 & 54.6 & 53.1 & 51.2  & \textbf{55.6} & \textbf{57.7$\pm$1.4} \\
    Walker2d-M  & 52.5 & 51.2 & \textbf{53.9} & 35.7 & 36.2 & 51.4 & \textbf{54.9$\pm$2.0} \\
    \midrule
    Hopper-H  & \textbf{65.6} & 62.4 & 56.7 & 17.9 & 48.7  & \textbf{65.0} & \textbf{65.9$\pm$4.5} \\
    HalfCheetah-H  & 74.6 & \textbf{76.3} & 72.0 & 63.2 & 72.3  & 69.4 & \textbf{77.4$\pm$2.7} \\
    Walker2d-H  & 71.3 & \textbf{74.0} & 73.1 & 26.4 & 67.4 & 71.7 & \textbf{75.1$\pm$1.8}  \\
    \midrule
    Total & 466.3 & 461.2 & 434.1 & 258.1 & 382.8  & 456.4 & \textbf{472.2} \\
    \bottomrule
    \end{tabular}

\end{table}


\textbf{NeoRL.} We also test our method on NeoRL~\citep{qin2022neorl} benchmark, which is often used to evaluate model-based offline RL~\citep{sun2023model,lin2024any}. We consider nine datasets, which involves three tasks (\texttt{halfcheetah}, \texttt{hopper}, \texttt{walker2d}) and three types of data qualities (\texttt{low}, \texttt{medium}, \texttt{high}). For each dataset, we select 1,000 trajectories uniformly for our experiments. We choose model-free methods (CQL, IQL, EDAC~\citep{an2021uncertainty}) and model-based methods (MOPO, RAMBO, MOBILE) as compared baselines. We exclude Count-MORL since there are no reference hyperparameter settings available in the original paper or the NeoRL paper.

We summarize the normalized score comparison results in Table~\ref{tab:neorlresults}. We observe that model-free methods outperform model-based baselines on most datasets, while ROMI surpasses all baselines on \textbf{6} out of 9 datasets, and also achieves the highest total score. Notably, ROMI outperforms RAMBO on all 9 datasets, verifying the effectiveness of ROMI.

\subsection{Study on the Effect of Dynamics Awareness}
\label{sec:ablation}

In this section, we conduct an ablation study to test the effect of dynamics awareness on the performance and OOD generalization. Specifically, we solely apply  \Eqref{eq:robustloss} for model updates and drop the adaptive weighting component in Section~\ref{sec:implicit} to examine its effect on the performance and prediction error during multi-step rollouts. We conduct experiments on \texttt{halfcheetah-medium-replay} and \texttt{hopper-medium-replay} datasets.

We present the experimental results in Figure~\ref{fig:ablation}, where panels (a) and (c) show the model prediction error under different rollout steps with and without adaptive weighting, while panels (b) and (d) display the corresponding learning curves. The results clearly demonstrate that adaptive weighting leads to improved performance and reduced prediction error across various rollout steps. This indicates that the dynamics awareness is crucial for OOD generalization and achieving strong performance. 

\begin{figure}
    \centering
    \includegraphics[width=0.98\linewidth]{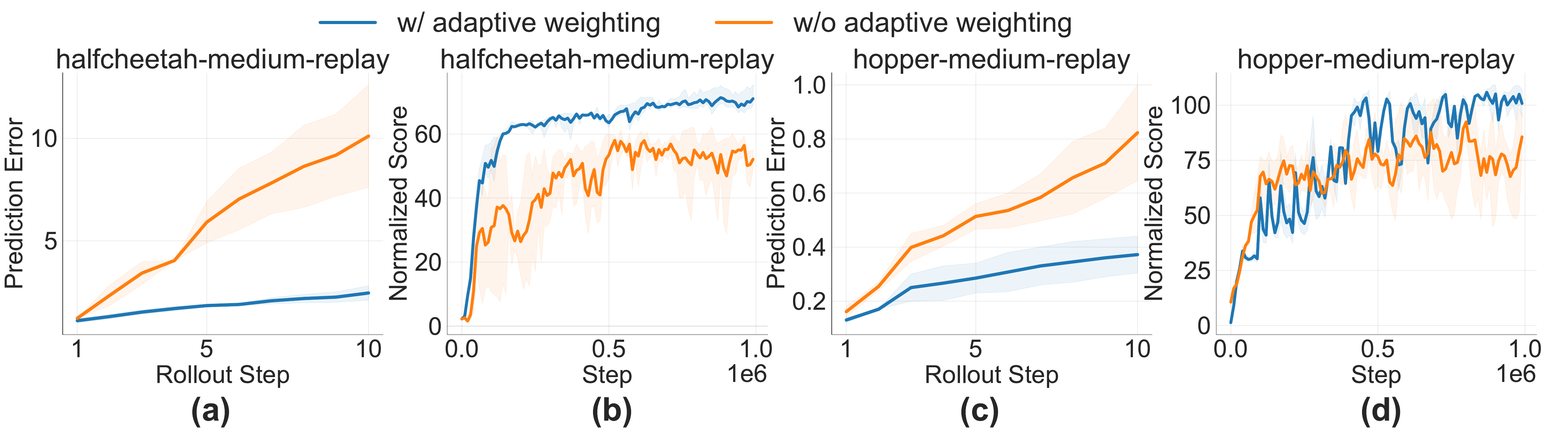}
    \caption{Comparison of the prediction error and performance w/ and w/o adaptive weighting.}
    \label{fig:ablation}
\end{figure}

\subsection{Study on the Effect of $\xi$}
\label{sec:parameter}

In this section, we examine the effect of the key hyperparameter: the uncertainty set scale $\xi$, on training. Specifically, we conduct experiments on \texttt{halfcheetah-medium-expert-v2} dataset and vary the value of $\xi$ across a wide range of values: $\{0.01, 0.1, 1.0, 10\}$. A larger value of $\xi$ corresponds to a higher degree of conservatism. We record four metrics during training: the estimated Q-value, normalized score, gradient norm of the inner loss, and gradient norm of the outer loss. 

The experimental results are presented in Figure~\ref{fig:effectxi}. Compared to the results of RAMBO in Section~\ref{sec:motivation}, we observe that: (1) neither severe Q-value underestimation nor training collapse occurs even for $\xi=10$, and the gradient norm stays small throughout training; (2) the Q-values are clearly distinguishable for different $\xi$, with a larger $\xi$ corresponding to a lower Q-value. These verify that ROMI could provide a more controllable degree of conservatism while ensuring training stability.


\begin{figure}
    \centering
    \includegraphics[width=0.98\linewidth]{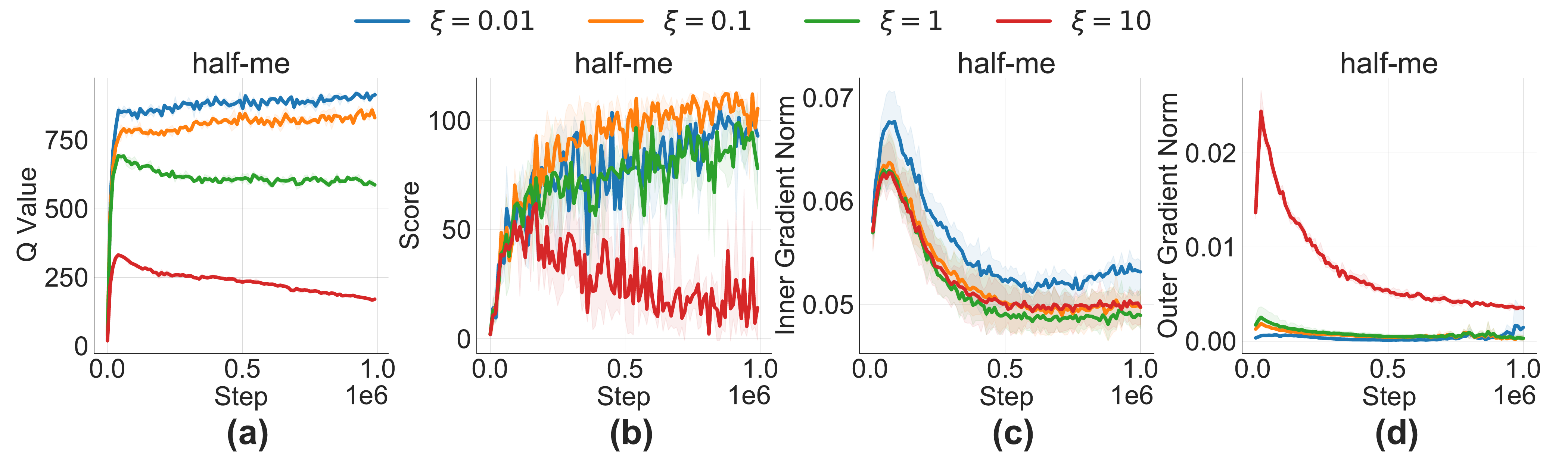}
    \caption{Comparison for different $\xi$ in terms of Q-value, normalized score, inner loss gradient norm, outer loss gradient norm.}
    \label{fig:effectxi}
\end{figure}

\section{Related Work}

\textbf{Model-based Offline RL.} The core issue of model-based offline RL is how to incorporate conservatism into model learning or policy update to prevent model exploitation. One line of work is to adopt uncertainty estimation. MOPO~\citep{yu2020mopo} and MoReL~\citep{kidambi2020morel} use the uncertainty of model prediction to penalize the reward function to achieve a pessimistic value estimation; SUMO~\citep{qiao2025sumo} utilizes a KNN-based entropy estimator for uncertainty quantification; Count-MORL~\citep{kim2023model} utilizes the count estimates of state-action pairs to measure the uncertainty; MOBILE~\citep{sun2023model} proposes Model-Bellman inconsistency to improve the uncertainty quantification. COMBO~\citep{yu2021combo} and RAMBO~\citep{rigter2022rambo} do not rely on uncertainty estimation and minimize the value estimation on model-generated samples. 


\textbf{Robust RL.} Similar to RAMBO, ROMI is related to Robust RL. Robust RL optimizes the policy in a Robust MDP~\citep{bagnell2001solving,cohen2020pessimism,nilim2005robust,wiesemann2013robust,rigter2021minimax}, that is, finding the policy with the best worst-case performance over a set of possible MDPs. Typically, the uncertainty set of MDPs is specified a priori. In contrast, our problem setting differs in that the uncertainty set of MDPs is assumed to be unknown. 
We need to learn a dynamics model to simulate the worst-case MDP.

\textbf{Bi-level Optimization in RL.} Bi-level optimization is a classic framework for optimizing hierarchical learning objectives. There are several studies leveraging bi-level optimization to solve different RL tasks. CAIL~\citep{zhang2021confidence} re-weights demonstrations with different optimality in imitation learning; Meta-Reward-Net~\citep{liu2022meta} incorporates the performance of the Q function into reward learning for preference-based RL; VACO~\citep{jiangvalue} incorporates bi-level optimization in offline RL to balance policy improvement and behavior cloning; TEMPO~\citep{yuan2023task} is closest to our work, which leverages both task-specific and semantic information for effective world model learning. But our method is different since we focus on offline RL and have to consider conservatism in the outer level. There are also recent studies~\citep{li2024getting,chen2025beyond} incorporating bi-level optimization into Large Language Model (LLM)-oriented RL.


\section{Conclusion}
\label{sec:conclusion}

In this paper, we first identify the limitations of RAMBO. We empirically find that RAMBO tends to be overly conservative and incurs instability during training. To address these issues, we propose our method ROMI, which adopts a novel robust value-aware model learning scheme by requiring the dynamics model to predict future states with values close to the minimum value in the state uncertainty set. To further improve OOD generalization ability, we introduce a bi-level optimization framework called implicitly differentiable adaptive weighting, which achieves dynamics and value awareness. Compared to RAMBO, ROMI provides more controllable conservatism
and stabilizes training. Empirical Results on D4RL and NeoRL datasets demonstrate the effectiveness of ROMI.

{\textbf{Limitations.} One limitation of ROMI is that additional computational cost for bi-level optimization is required compared to the original RAMBO. Another limitation is that the conservatism degree governed by $\xi$ has to be specified before training, unlike methods that support runtime adjustment~\citep{ghosh2022offline,hong2022confidence,swazinna2022user,wang2023train}. Addressing these limitations presents an interesting direction for future work.}

\section{Acknowledgements}

This research was supported in part by the Hong Kong Research Grants Council (GRF 11217925, GRF 16209124) and the National Science Foundation of China (Grant 72371214). The authors would also like to thank the anonymous reviewers for their valuable comments on our manuscript.

\bibliography{iclr2026_conference}
\bibliographystyle{iclr2026_conference}

\newpage

\appendix

\section{Derivation of the Outer Loss Gradient}
\label{appendix:derivation}
In Section~\ref{sec:implicit}, we calculate the gradient of the outer loss at step $t$ by: 
\begin{equation*}
g_\mathrm{RVL}^{(t)}=\nabla_{\psi}\mathcal{L}_\mathrm{RVL}(\psi^{(t+1)}(\nu^{(t)}))^\mathsf{T}\cdot\nabla_{\nu}\psi^{(t+1)}(\nu^{(t)})=h\cdot\nabla_{\nu} w_{\nu^{(t)}}(s,a,s^\prime),
\end{equation*}
where 
\begin{equation}
\label{eq:hcompute}
    h=-\beta_{1(t)} \nabla_{\psi}\mathcal{L}_\mathrm{RVL}(\psi^{(t+1)}(\nu^{(t)}))^\mathsf{T}\cdot\nabla_{\psi}\log\left(\widehat{T}_{\psi^{(t)}}(s^\prime|s,a)\right).
\end{equation}
We provide the complete derivation in this part. We denote the initial value of $\psi$ and $\nu$ as $\psi^{(0)}$ and $\nu^{(0)}$, and make the following stipulations for the outer loop and inner loop updates: \textbf{(1)} at each update step $t$, the inner loop is updated first, followed by the outer loop; \textbf{(2)} at each update step $t$, only one step of gradient descent is performed. Under above conditions, at step $t$, the inner loop first takes $\psi^{(t)}$ and $\nu^{(t)}$ as input and performs one-step update to obtain $\psi^{(t+1)}$. Then the outer loop takes $\psi^{(t+1)}$ as input, which is a function of $\nu^{(t)}$, and performs implicit update to obtain $\nu^{(t+1)}$. Then we can derive the gradient as follows:
\begin{equation*}
\begin{aligned}
    g^{(t)}_{\mathrm{RVL}}&=\nabla_{\nu}\mathcal{L}_\mathrm{RVL}(\psi^{(t+1)}(\nu^{(t)}))\\
    &=\nabla_{\psi}\mathcal{L}_\mathrm{RVL}(\psi^{(t+1)}(\nu^{(t)}))^\mathsf{T}\cdot\nabla_{\nu}\psi^{(t+1)}(\nu^{(t)})\\
    &=\nabla_{\psi}\mathcal{L}_\mathrm{RVL}(\psi^{(t+1)}(\nu^{(t)}))^\mathsf{T}\cdot\nabla_{\nu}\left(\psi^{(t}(\nu^{(t)})-\beta_{1(t)}\nabla_{\psi}\mathcal{L}_\mathrm{WSL}(\psi^{(t)},\nu^{(t)})\right)\\
    &=-\beta_{1(t)} \nabla_{\psi}\mathcal{L}_\mathrm{RVL}(\psi^{(t+1)}(\nu^{(t)}))^\mathsf{T}\cdot\nabla_{\nu}\left(\nabla_{\psi}\mathcal{L}_\mathrm{WSL}(\psi^{(t)},\nu^{(t)})\right)\\
    &=\underbrace{-\beta_{1(t)} \nabla_{\psi}\mathcal{L}_\mathrm{RVL}(\psi^{(t+1)}(\nu^{(t)}))^\mathsf{T}\cdot\nabla_{\psi}\log\left(\widehat{T}_{\psi^{(t)}}(s^\prime|s,a)\right)}_{h}\cdot\nabla_{\nu}w_{\nu^{(t)}}(s,a,s^\prime).
\end{aligned}
\end{equation*}
The fourth equality holds under the Markov assumption, under which $\nabla_{\nu}\psi^{(t)}(\nu^{(t)})=0$ since $\psi^{(t)}$ is determined by $\nu^{(t-1)}$ and the effect of $\nu^{(t)}$ is ignored. This technique is widely used in the field of bi-level optimization~\citep{bolte2023one,liu2020generic} and meta-learning~\citep{baydin2017online,wang2020optimizing}. Then we finish our derivation.

\section{Missing Proofs}
\label{app:theoryproof}
In this section, we provide detailed proofs for our theoretical results in the main text.

\subsection{Proof of Proposition~\ref{prop:dual}}

We first introduce the following lemma in preparation for proving Proposition~\ref{prop:dual}.

\begin{lemma}
\label{lemma:dual}
    Let $\mathcal{S}$ be a measure space and $P$ be a probability measure on $\mathcal{S}$. In addition, let  $f:\mathcal{S}\rightarrow\mathbb{R}$ be any measure function and $c: \mathcal{S}\times\mathcal{S}\rightarrow\mathbb{R}_{\geq 0}$ be a cost function. Then for any scalar $\lambda\geq0$, the following equality holds:
    \begin{equation*}
        \inf_{\hat{P}\sim\mathcal{P}(\mathcal{S})}\left(\mathbb{E}_{\hat{s}\sim\hat{P}}[f(\hat{s})]+\lambda \mathcal{W}(\hat{P},P)\right)=\mathbb{E}_{s\sim P}\left[\inf_{\hat{s}\sim\mathcal{S}}\left(f(\hat{s})+\lambda c(s,\hat{s})\right)\right],
    \end{equation*}
where $\mathcal{P}(\mathcal{S})$ represents all probability measures on $\mathcal{S}$, and $\mathcal{W}$ is the Wasserstein distance w.r.t. the cost function $c$.
\end{lemma}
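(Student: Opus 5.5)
The identity is the standard strong duality for Wasserstein-penalized problems. I would prove it by establishing the two inequalities separately, writing $\mathcal{W}(\hat P,P)=\inf_{\zeta\in\Gamma(\hat P,P)}\mathbb{E}_{(\hat s,s)\sim\zeta}[c(s,\hat s)]$ as in Definition~\ref{def:wasserstein}. To make everything well defined, I would assume mild regularity: $\mathcal{S}$ Polish, $f$ measurable, $c$ lower semicontinuous, and the right-hand side finite, so that $\phi_\lambda(s):=\inf_{\hat s\in\mathcal{S}}(f(\hat s)+\lambda c(s,\hat s))$ is a legitimate integrand. It is universally measurable as an infimum of a Borel function over a projection.

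\textbf{Lower bound ($\ge$).} Fix any $\hat P$ and any coupling $\zeta\in\Gamma(\hat P,P)$. The pointwise inequality $f(\hat s)+\lambda c(s,\hat s)\ge \phi_\lambda(s)$ holds for every pair $(\hat s,s)$. Integrating it against $\zeta$ and using the marginals gives
\[\mathbb{E}_{\hat P}[f]+\lambda\,\mathbb{E}_\zeta[c]\ge \mathbb{E}_{P}[\phi_\lambda].\]
Taking the infimum over $\zeta$ (using $\lambda\ge0$) and then over $\hat P$ gives the inequality. This direction is routine.

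\textbf{Upper bound ($\le$).} Fix $\varepsilon>0$. I would choose a measurable $\varepsilon$-optimal selector $T_\varepsilon:\mathcal{S}\to\mathcal{S}$ with
\[f(T_\varepsilon(s))+\lambda c(s,T_\varepsilon(s))\le \phi_\lambda(s)+\varepsilon.\]
If $\phi_\lambda(s)=-\infty$, the condition is instead that the left side is at most $-1/\varepsilon$. Set $\hat P:=(T_\varepsilon)_\#P$ and use the deterministic coupling $\zeta=(\mathrm{id},T_\varepsilon)_\#P$. Then $\mathcal{W}(\hat P,P)\le \mathbb{E}_P[c(s,T_\varepsilon(s))]$, so
\[\text{LHS}\le \mathbb{E}_P[f(T_\varepsilon(s))+\lambda c(s,T_\varepsilon(s))]\le \mathbb{E}_P[\phi_\lambda]+\varepsilon.\]
Letting $\varepsilon\to0$ closes the argument.

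\textbf{Main obstacle.} The difficulty is the existence of the measurable $\varepsilon$-selector. I would first invoke the Jankov--von Neumann measurable selection theorem, which gives a universally measurable selector on a Polish space; that is sufficient because it suffices to integrate against the completion of $P$. If this is too heavy, I would fall back on the countable-partition construction. Choose a countable dense set $\{\hat s_k\}$, which is adequate when $f$ and $c$ are continuous, and define $T_\varepsilon(s)=\hat s_{k(s)}$ with $k(s)$ the first index achieving the $\varepsilon$-bound. Each set $\{s: k(s)=k\}$ is Borel, so $T_\varepsilon$ is measurable. Otherwise I would cite the general duality of Blanchet--Murthy or Gao--Kleywegt, of which this lemma is the $\lambda$-penalized special case.
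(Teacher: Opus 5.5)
Your proof is correct and follows essentially the same idea as the paper's. The paper disintegrates couplings as $K(d\hat s\mid s)P(ds)$, swaps the infimum over $K$ with $\mathbb{E}_{s\sim P}$, and concentrates $K(\cdot\mid s)$ at a pointwise minimizer $\hat s^\star(s)$, which is exactly your deterministic coupling $(\mathrm{id},T_\varepsilon)_\#P$. Your version is in fact more careful: the paper's interchange of infimum and expectation silently needs the measurable selection you invoke, and its Dirac-at-$\arg\inf$ step assumes the infimum is attained, which your $\varepsilon$-optimal selector (with the $-1/\varepsilon$ clause when $\phi_\lambda=-\infty$) avoids.
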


\begin{proof}
We prove this lemma by performing a series of transformations on the left-hand side (LHS) and proving its equivalence to the right-hand side (RHS).

According to the definition of Wasserstein distance, the LHS can be written as
\begin{equation*}
    \mathrm{LHS}=\inf_{\hat{P}\in\mathcal{P}(\mathcal{S})}\left(\mathbb{E}_{\hat{s}\sim\hat{P}}[f(\hat{s})]+\lambda\inf_{\gamma\in\Gamma(P,\hat{P})}\mathbb{E}_{(s,\hat{s})\sim\gamma}[c(s,\hat{s})]\right).
\end{equation*}
The optimization of $\hat{P}$ and the inner optimization over $\gamma\in\Gamma(P,\hat{P})$ could be combined into a single optimization over all couplings $\gamma$ whose first marginal is $P$, and the second marginal, $\hat{P}$, could be arbitrary in $\mathcal{P}(\mathcal{S})$. Then we have
\begin{equation}
\label{eq:lhsany}
\begin{aligned}
\mathrm{LHS}&=\inf_{\gamma\in\Gamma(P,\hat{P})}\left(\mathbb{E}_{\hat{s}\sim\hat{P}}[f(\hat{s})]+\lambda\mathbb{E}_{(s,\hat{s})\sim\gamma}[c(s,\hat{s})]\right)\\
    &=\inf_{\gamma\in\Gamma(P,\hat{P})}\mathbb{E}_{(s,\hat{s})\sim\gamma}\left[f(\hat{s})+\lambda c(s,\hat{s})\right],
\end{aligned}
\end{equation}
where the second equality holds by the linearity of expectation.

By the disintegration theorem for measures, any coupling $\gamma\in\Gamma(P,\hat{P})$ could be represented as the product of its first marginal $P$ and a stochastic kernel $K(d\hat{s}|s):\mathcal{S}\rightarrow\mathcal{P}(\mathcal{S})$:
\begin{equation}
    \label{eq:stokernal}
    \gamma(ds,d\hat{s})=K(d\hat{s}|s)P(ds).
\end{equation}
This implies that optimizing over all couplings $\gamma\in\Gamma(P,\hat{P})$ is equivalent to optimizing over all possible stochastic kernels $K$. We substitute  \Eqref{eq:stokernal} into  \Eqref{eq:lhsany}:
\begin{equation*}
    \begin{aligned}
        \mathrm{LHS}&=\inf_{K}\int_{\mathcal{S}}\int_{\mathcal{S}}\left[f(\hat{s})+\lambda c(s,\hat{s})\right] K(d\hat{s}|s) P(ds) \\
        &=\inf_K\mathbb{E}_{s\sim P}\left[\mathbb{E}_{\hat{s}\sim K(\cdot|s)}\left[f(\hat{s})+\lambda c(s,\hat{s})\right]\right].
        \end{aligned}
\end{equation*}
We change the position of the infimum operator and the inner expectation:
\begin{equation}
\label{eq:changeposi}
    \mathrm{LHS}=\mathbb{E}_{s\sim P}\left[\inf_{K(\cdot|s)\in\mathcal{P}(\mathcal{S})}\mathbb{E}_{\hat{s}\sim K(\cdot|s)}\left[f(\hat{s})+\gamma c(s,\hat{s})\right]\right].
\end{equation}
We then solve the inner minimization problem for a fixed $s\in\mathcal{S}$:
\begin{equation*}
    \inf_{K(\cdot|s)\in\mathcal{P}(\mathcal{S})}\mathbb{E}_{\hat{s}\sim K(\cdot|s)}\left[f(\hat{s})+\gamma c(s,\hat{s})\right].
\end{equation*}
Let $g_s(\hat{s})\triangleq f(\hat{s})+\gamma c(s,\hat{s})$. The problem is to find a probability measure $K(\cdot|s)$ that minimizes the expectation of $g_s(\hat{s})$. It is obvious that this minimum is achieved by concentrating the entire probability mass on point $\hat{s}$ where $g_s(\hat{s})$ attains its infimum.

Let $\hat{s}^\star=\arg\inf_{\hat{s}\in\mathcal{S}}g_s(\hat{s})$. The optimal measure is a Dirac measure $\delta_{\hat{s}^\star}$ centered on $\hat{s}^\star$. Therefore,
\begin{equation}
\label{eq:diracequa}
    \begin{aligned}
        &\inf_{K(\cdot|s)\in\mathcal{P}(\mathcal{S})}\mathbb{E}_{\hat{s}\sim K(\cdot|s)}\left[f(\hat{s})+\gamma c(s,\hat{s})\right]\\
        &=\mathbb{E}_{\hat{s}\sim\delta_{\hat{s}^\star}}[g_s(\hat{s})]\\
        &=f(\hat{s}^\star)+\gamma c(s,\hat{s}^\star)\\
        &=\inf_{\hat{s}\in\mathcal{S}}\left[f(\hat{s})+\gamma c(s,\hat{s})\right].
    \end{aligned}
\end{equation}
Finally, substituting  \Eqref{eq:diracequa} back into  \Eqref{eq:changeposi}, we obtain the RHS of the lemma as
\begin{equation*}
\begin{aligned}
    \mathrm{LHS}&=\mathbb{E}_{s\sim P}\left[\inf_{\hat{s}\in\mathcal{S}}\left(f(\hat{s})+\gamma c(s,\hat{s})\right)\right]=\mathrm{RHS}.
\end{aligned}
\end{equation*}
This concludes the proof.
\end{proof}

Now we are ready to give our formal proof for Proposition~\ref{prop:dual}. We restate it as follows.
\begin{proposition}[Proposition~\ref{prop:dual}] Let $\mathcal{M}_\xi$ be the Wasserstein dynamics uncertainty set. Then we have
\begin{equation}
\label{eq:restate}
\min_{\widehat{T}\in\mathcal{M}_\xi}\mathbb{E}_{s^\prime\sim\widehat{T}(\cdot|s,a)}\widehat{V}(s^\prime)=\mathbb{E}_{s^\prime\sim \widehat{T}_\mathrm{MLE}(\cdot|s,a)}\left[\min_{\hat{s}\in U_\xi(s^\prime)}\widehat{V}(\hat{s})\right].
\end{equation}
\end{proposition}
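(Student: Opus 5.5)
The plan is to reduce the constrained minimization over the Wasserstein ball to the penalized problem handled by Lemma~\ref{lemma:dual}, via Lagrangian duality, and then identify the optimal multiplier with a hard state-space constraint. I will work pointwise in $(s,a)$. The constraint in Definition~\ref{def:wasserstein} is an expectation over $\mathcal{D}$, but the intended reading of the statement is per-$(s,a)$: $\mathcal{W}(\widehat{T}_\mathrm{MLE}(\cdot|s,a),\widehat{T}(\cdot|s,a))\le\xi$. I will state this interpretation explicitly.

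Fix $(s,a)$ and write $P=\widehat{T}_\mathrm{MLE}(\cdot|s,a)$, $f=\widehat{V}$, $c=d$. The primal value is $p^\star=\inf\{\mathbb{E}_{\hat P}f:\mathcal{W}(\hat P,P)\le\xi\}$.

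\textbf{Step 1 (weak duality).} For every $\lambda\ge0$, $p^\star\ge\inf_{\hat P}\big(\mathbb{E}_{\hat P}f+\lambda\mathcal{W}(\hat P,P)\big)-\lambda\xi$. By Lemma~\ref{lemma:dual}, this lower bound equals $\mathbb{E}_{s'\sim P}\big[\inf_{\hat s}(f(\hat s)+\lambda d(s',\hat s))\big]-\lambda\xi$.

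\textbf{Step 2 (strong duality).} The primal is a linear objective over a convex set, since $\mathcal{W}(\cdot,P)$ is convex. Slater's condition holds because $\hat P=P$ gives $\mathcal{W}=0<\xi$. Hence $p^\star=\sup_{\lambda\ge0}\big\{\mathbb{E}_{s'\sim P}[\inf_{\hat s}(f(\hat s)+\lambda d(s',\hat s))]-\lambda\xi\big\}$. Equality here requires a convex-duality theorem on measures. I would cite the standard Wasserstein DRO duality (Gao--Kleywegt, Blanchet--Murthy), assuming $\widehat V$ is lower semicontinuous and bounded below.

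\textbf{Step 3 (identify with the state uncertainty set).} For the inequality $\le$, the kernel $\hat s^\star(s')\in\arg\min_{\hat s\in U_\xi(s')}\widehat V(\hat s)$ (chosen measurably) pushes $P$ forward to a $\hat P$ with $\mathcal{W}(\hat P,P)\le\mathbb{E}\,d(s',\hat s^\star(s'))\le\xi$. This $\hat P$ is feasible, so $p^\star\le\mathbb{E}_{s'\sim P}[\min_{U_\xi(s')}\widehat V]$; this direction is elementary. For the inequality $\ge$, I use Step 2 and must show that the dual value is at least the right-hand side of the statement.

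\textbf{Main obstacle.} This last inequality is where the argument can break. The dual corresponds to a budget $\xi$ \emph{on average} over $s'$, whereas $U_\xi(s')$ imposes the budget \emph{pointwise}. In general the average-budget value can be strictly smaller. For example, the adversary could move one $s'$ far and another not at all.

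I would try two routes. The first is to restrict to the regime in which the statement is exact: $P$ a Dirac mass, since the empirical $s'$ is deterministic given the $(s,a)$ drawn from $\mathcal{D}$, or an $\infty$-Wasserstein ball. In that regime the average and pointwise budgets coincide, and the equality follows from Step 3 directly. The second route, for a general $P$, is to note that only ``$\le$'' holds and present the right-hand side as a conservative upper bound on the robust value. I would adopt the first route: interpret $\widehat T_\mathrm{MLE}(\cdot|s,a)$ as the empirical next-state law, or use $\mathcal{W}_\infty$, and state this assumption explicitly so that the proposition holds as written.
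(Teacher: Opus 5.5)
Your Steps 1--2 match the paper's argument: a single multiplier $\lambda$, with Lemma~\ref{lemma:dual} evaluating the penalized inner problem. The paper then just asserts that the resulting single-$\lambda$ dual ``is exactly the Lagrange dual reformulation'' of the right-hand side, and you are right that this is where it breaks. Dualizing the pointwise constraint $\hat s\in U_\xi(s')$ would need a multiplier $\lambda(s')$ inside the expectation, and $\sup_{\lambda}\mathbb{E}[\cdot]\le\mathbb{E}[\sup_{\lambda(s')}\cdot]$ gives only one direction. Your measurable-selection pushforward for $\mathrm{LHS}\le\mathrm{RHS}$ is correct. Under the $\mathcal{W}_1$ ball of Definition~\ref{def:wasserstein} the reverse inequality really does fail. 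For example, take $\widehat V(z)=z^2$ on $\mathbb{R}$, $P=\tfrac12\delta_0+\tfrac12\delta_{10}$ and $\xi=1$: moving only the second atom to $8$ is feasible and gives $32<40.5$. So the statement as written cannot be proved, and flagging this is the right call.

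The gap is in your repair. A Dirac $P$ does \emph{not} make the two budgets coincide under $\mathcal{W}_1$. For $P=\delta_x$ the only coupling is $\delta_x\otimes\hat P$, so feasibility reads $\mathbb{E}_{\hat P}\,d(x,\hat s)\le\xi$. That is still an average, now over where the mass is sent, so the adversary can split mass. Take the bounded Lipschitz $\widehat V(z)=-\min\{1,\max\{0,\|z-x\|/\xi-1\}\}$ and $\hat P=\tfrac12\delta_x+\tfrac12\delta_y$ with $\|y-x\|=2\xi$. Then $\mathcal{W}_1(\hat P,\delta_x)=\xi$, but $\mathbb{E}_{\hat P}\widehat V=-\tfrac12<0=\min_{U_\xi(x)}\widehat V$. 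More generally, for $P=\delta_x$ the left side equals the lower convex envelope of $r\mapsto\min_{U_r(x)}\widehat V$ evaluated at $r=\xi$. Equality therefore requires that function to coincide with its convex envelope at $\xi$ (e.g.\ convex $\widehat V$), or the adversary to be restricted to deterministic models. So the ``empirical next-state law'' interpretation does not rescue the proposition.

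The $\mathcal{W}_\infty$ variant is the correct fix, but there ``follows from Step 3 directly'' is incomplete, because Step 3 only gives $\le$. You must add the converse. For any feasible $\hat P$, pick a coupling $\zeta\in\Gamma(P,\hat P)$ with $d(s',\hat s)\le\xi$ $\zeta$-a.s.; the $\mathcal{W}_\infty$ infimum is attained by weak compactness of $\Gamma(P,\hat P)$, or you can build this into the definition of the set. Then $\mathbb{E}_{\hat P}\widehat V=\mathbb{E}_\zeta\widehat V(\hat s)\ge\mathbb{E}_\zeta\min_{U_\xi(s')}\widehat V=\mathrm{RHS}$. In that version Steps 1--2 and the strong-duality citation are unnecessary: the result is just two elementary coupling arguments. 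Drop the Dirac route and either keep $\mathcal{W}_1$ and claim only $\mathrm{LHS}\le\mathrm{RHS}$, or switch to $\mathcal{W}_\infty$ and prove equality.
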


\begin{proof}
We start with the LHS and prove its equivalence to the RHS.

The LHS is a constrained optimization problem that can be solved using the Lagrange multiplier method. We define the Lagrange function as
\begin{equation*}
    \mathcal{L}(\widehat{T},\lambda)=\mathbb{E}_{\hat{s}^\prime\sim\widehat{T}(\cdot|s,a)}\widehat{V}(\hat{s}^\prime)+\lambda\left(\mathcal{W}(\widehat{T},\widehat{T}_\mathrm{MLE})-\xi\right).
\end{equation*}
The LHS is equivalent to solving the dual problem as
\begin{equation}
    \label{eq:supinf}
    \mathrm{LHS}=\sup_{\lambda\geq 0}\inf_{\widehat{T}}\mathcal{L}(\widehat{T},\lambda).
\end{equation}
For a fixed $\lambda$, we solve the inner minimization problem $\inf_{\widehat{T}}\mathcal{L}(\widehat{T},\lambda)$ as 
\begin{equation}
\label{eq:innermin}
\inf_{\widehat{T}}\mathcal{L}(\widehat{T},\lambda)=\inf_{\widehat{T}}\left(\mathbb{E}_{\hat{s}^\prime\sim\widehat{T}(\cdot|s,a)}\widehat{V}(\hat{s}^\prime)+\lambda \mathcal{W}(\widehat{T},\widehat{T}_\mathrm{MLE})\right)-\lambda\xi.
\end{equation}
According to Lemma~\ref{lemma:dual}, we have
\begin{equation}
    \label{eq:lhsdual}
    \inf_{\widehat{T}}\left(\mathbb{E}_{\hat{s}^\prime\sim\widehat{T}(\cdot|s,a)}\widehat{V}(\hat{s}^\prime)+\lambda \mathcal{W}(\widehat{T},\widehat{T}_\mathrm{MLE})\right)=\mathbb{E}_{s^\prime\sim \widehat{T}_\mathrm{MLE}(\cdot|s,a)}\left[\inf_{\hat{s}^\prime}\left(\widehat{V}(\hat{s}^\prime)+\lambda c(s^\prime,\hat{s}^\prime)\right)\right].
\end{equation}
Substituting \Eqref{eq:lhsdual} into  \Eqref{eq:innermin} and  \Eqref{eq:supinf}, we obtain the dual reformulation of LHS as
\begin{equation}
\label{eq:rhsdual}
\mathrm{LHS}=\sup_{\lambda\geq 0}\left\{\mathbb{E}_{s^\prime\sim \widehat{T}_\mathrm{MLE}(\cdot|s,a)}\left[\inf_{\hat{s}^\prime}\left(\widehat{V}(\hat{s}^\prime)+\lambda c(s^\prime,\hat{s}^\prime)\right)\right]-\lambda\xi\right\}.
\end{equation}

The RHS in  \Eqref{eq:rhsdual} is exactly the Lagrange dual reformulation of the RHS in  \Eqref{eq:restate}. This implies  \Eqref{eq:restate} holds, which concludes the proof.
\end{proof}

\subsection{Proof of Proposition~\ref{prop:estimationerror}}
\begin{proof}
The standard Bellman operator is defined as
\begin{equation*}
    \mathcal{T}Q(s,a)=r+\gamma\mathbb{E}_{s^\prime\sim \widehat{T}_\mathrm{MLE}(\cdot|s,a),a^\prime\sim\pi(\cdot|s^\prime)}Q(s^\prime,a^\prime).
\end{equation*}
For Bellman updates using $\mathcal{D}_\mathrm{model}$, the Bellman operator could be expressed as
\begin{equation*}
    \mathcal{T}_\mathrm{model}Q(s,a)=r+\gamma\mathbb{E}_{s^\prime\sim\widehat{T}_\psi(\cdot|s,a),a^\prime\sim\pi(\cdot|s^\prime)}Q(s^\prime,a^\prime).
\end{equation*}

For any $(s,a)\in\mathcal{D}_\mathrm{model}$, we have
\begin{equation*}
    \begin{aligned}
    \mathbb{E}_{s^\prime\sim\widehat{T}_\psi(\cdot|s,a),a^\prime\sim\pi(\cdot|s^\prime)}Q(s^\prime,a^\prime)&\leq\mathbb{E}_{s^\prime\sim \widehat{T}_\mathrm{MLE}(\cdot|s,a)}\left[\min_{\tilde{s}^\prime\in U_\xi(s^\prime),a^\prime\sim\pi(\cdot|\tilde{s}^\prime)}Q(\tilde{s}^\prime,a^\prime)\right]+\epsilon_1\\
    &\leq \mathbb{E}_{s^\prime\sim \widehat{T}_\mathrm{MLE}(\cdot|s,a),a^\prime\sim\pi(\cdot|s^\prime)} Q(s^\prime,a^\prime)+\epsilon_1.
    \end{aligned}
\end{equation*}
Therefore, we have
\begin{equation*}
    \mathcal{T}_\mathrm{model}Q(s,a)\leq\mathcal{T}Q(s,a)+\gamma\epsilon_1,\qquad\forall (s,a)\in\mathcal{D}_\mathrm{model}.
\end{equation*}
Let $\widehat{Q}^k$ denote the Q-value at iteration $k$ by applying $\mathcal{T}_\mathrm{model}$, and $Q^k$ denote the $k$-iteration Q-value by applying $\mathcal{T}$. $Q^0$ represents the initial Q-value. After one iteration, we have
\begin{equation*}
    \widehat{Q}^1(s,a)\leq Q^1(s,a)+\frac{1-\gamma}{1-\gamma}\cdot\gamma\epsilon_1.
\end{equation*}
Suppose at iteration $k$, we have
\begin{equation}
\label{eq:induction}
    \widehat{Q}^k(s,a)\leq Q^k(s,a)+\frac{1-\gamma^k}{1-\gamma}\gamma\epsilon_1, \qquad \forall k\in\mathbb{Z}^{+}.
\end{equation}
Then for iteration $k+1$, we have
\begin{equation*}
\begin{aligned}
    \widehat{Q}^{k+1}(s,a)=\mathcal{T}_\mathrm{model}\widehat{Q}^k(s,a)\leq\mathcal{T}\widehat{Q}^k(s,a)+\gamma\epsilon_1.
\end{aligned}
\end{equation*}
In the meantime, we have
\begin{equation*}
    \begin{aligned}
        \mathcal{T}\widehat{Q}^k(s,a)&\leq\mathcal{T}\left(Q^k(s,a)+\frac{1-\gamma^k}{1-\gamma}\gamma\epsilon_1\right)\\
        &=r+\gamma\mathbb{E}_{s^\prime\sim \widehat{T}_\mathrm{MLE}(\cdot|s,a)}\left[Q^k(s,a)+\frac{1-\gamma^k}{1-\gamma}\gamma\epsilon_1\right]\\
        &= Q^{k+1}+\frac{1-\gamma^k}{1-\gamma}\gamma^2\epsilon_1.
    \end{aligned}
\end{equation*}
Therefore, we have
\begin{equation*}
    \begin{aligned}
        \widehat{Q}^{k+1}(s,a)\leq Q^{k+1}+\frac{1-\gamma^{k+1}}{1-\gamma}\gamma\epsilon_1.
    \end{aligned}
\end{equation*}
Therefore, \Eqref{eq:induction} also holds for iteration $k+1$, and thus holds for all $k\in\mathbb{Z}^+$. If we set $k\rightarrow\infty$, then $\widehat{Q}$ and $Q$ would converge to the fixed points, and we have
\begin{equation}
\label{eq:rhs}
    \widehat{Q}(s,a)\leq Q_\mathrm{true}(s,a)+\frac{\gamma\epsilon_1}{1-\gamma}
\end{equation}
In the meantime, we also have
\begin{equation*}
    \begin{aligned}
        \mathbb{E}_{s^\prime\sim\widehat{T}_\psi(\cdot|s,a),a^\prime\sim\pi(\cdot|s^\prime)}Q(s^\prime,a^\prime)&\geq\mathbb{E}_{s^\prime\sim \widehat{T}_\mathrm{MLE}(\cdot|s,a)}\left[\min_{\tilde{s}^\prime\in U_\xi(s^\prime),a^\prime\sim\pi(\cdot|\tilde{s}^\prime)}Q(\tilde{s}^\prime,a^\prime)\right]-\epsilon_1\\
    &\geq \mathbb{E}_{s^\prime\sim \widehat{T}_\mathrm{MLE}(\cdot|s,a),a^\prime\sim\pi(\cdot|s^\prime)} Q(s^\prime,a^\prime)-(\epsilon_1+\epsilon_2).
    \end{aligned}
\end{equation*}
Following a similar derivation process as above, we obtain
\begin{equation}
\label{eq:lhs}
    \widehat{Q}(s,a)\geq Q_\mathrm{true}(s,a)-\frac{\gamma(\epsilon_1+\epsilon_2)}{1-\gamma}.
\end{equation}
Combining the results of  \Eqref{eq:rhs} and  \Eqref{eq:lhs}, we conclude the proof.
\end{proof}

\subsection{Proof of Corollary~\ref{coro:boundedepsilon}}

\begin{proof}
    The proof is straightforward. 
\begin{equation*}
    \begin{aligned}
    \epsilon_1&=\max_{(s,a)\in\mathcal{D}_\mathrm{model}}\Big|\mathbb{E}_{\hat{s}^\prime\sim\widehat{T}_\psi(\cdot|s,a),s^\prime\sim \widehat{T}_\mathrm{MLE}(\cdot|s,a)}\Big[\widehat{V}(\hat{s}^\prime)-\min_{\tilde{s}^\prime\in U_\xi(s^\prime)}\widehat{V}(\tilde{s}^\prime)\Big]\Big| \\
    &=\max_{(s,a)\in\mathcal{D}_\mathrm{model}}\Big|\mathbb{E}_{\hat{s}^\prime\sim\widehat{T}_\psi(\cdot|s,a),s^\prime\sim \widehat{T}_\mathrm{MLE}(\cdot|s,a)}\Big[\widehat{V}(\hat{s}^\prime)-\widehat{V}(s^\prime)+\widehat{V}(s^\prime)-\min_{\tilde{s}^\prime\in U_\xi(s^\prime)}\widehat{V}(\tilde{s}^\prime)\Big]\Big|\\
    &\leq\max_{(s,a)\in\mathcal{D}_\mathrm{model}}\mathbb{E}_{\hat{s}^\prime\sim\widehat{T}_\psi(\cdot|s,a),s^\prime\sim \widehat{T}_\mathrm{MLE}(\cdot|s,a)}\left[\left|\widehat{V}(\hat{s}^\prime)-\widehat{V}(s^\prime)\right|+\left|\widehat{V}(s^\prime)-\min_{\tilde{s}^\prime\in U_\xi(s^\prime)}\widehat{V}(\tilde{s}^\prime)\right|\right]\\
    &\leq L_V\cdot(\epsilon+\xi).
    \end{aligned}
\end{equation*}
This concludes the proof.
\end{proof}


\subsection{Proof of Proposition~\ref{prop:convergence}}

\begin{definition}[Lipschitz Smoothness]
\label{def:lipsmooth}
    A function $f(x):\mathbb{R}^n\rightarrow\mathbb{R}$ is $L$-Lipschitz smooth if the following inequality holds 
    \begin{equation}
        \left\|\nabla f(x_1)-\nabla f(x_2)\right\|\leq L \left\|x_1-x_2\right\|, \forall x_1,x_2\in\mathbb{R}^n.
    \end{equation}
\end{definition}

We then have the following Lemmas:
\begin{lemma}[\citet{nesterov2018lectures}] 
\label{lemma:lipsmooth}
If function $f(x):\mathbb{R}^n\rightarrow\mathbb{R}$ is $L$-Lipschitz smooth, then we have
\begin{equation}
    \left|f(x_2)-f(x_1)-\nabla f(x_1)^{\mathsf{T}} (x_2-x_1)\right|\leq\frac{L}{2}\left\|x_2-x_1\right\|^2, \forall x_1,x_2\in\mathbb{R}^n.
\end{equation}
    
\end{lemma}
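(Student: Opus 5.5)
The statement to prove is Lemma~\ref{lemma:lipsmooth}: the standard descent-lemma bound $|f(x_2)-f(x_1)-\nabla f(x_1)^{\mathsf{T}}(x_2-x_1)|\leq\frac{L}{2}\|x_2-x_1\|^2$ for an $L$-Lipschitz smooth $f$ in the sense of Definition~\ref{def:lipsmooth}. The approach is to write the first-order Taylor remainder as an integral along the segment from $x_1$ to $x_2$, then control the integrand with the Lipschitz property of $\nabla f$.

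First, fix $x_1,x_2$ and set $\phi(\tau)=f(x_1+\tau(x_2-x_1))$ for $\tau\in[0,1]$. Since $f$ is differentiable with continuous gradient (Lipschitz continuity of $\nabla f$ gives continuity), $\phi$ is continuously differentiable with $\phi'(\tau)=\nabla f(x_1+\tau(x_2-x_1))^{\mathsf{T}}(x_2-x_1)$. The fundamental theorem of calculus then gives
\begin{equation*}
f(x_2)-f(x_1)=\int_0^1\nabla f(x_1+\tau(x_2-x_1))^{\mathsf{T}}(x_2-x_1)\,d\tau .
\end{equation*}
Subtracting $\nabla f(x_1)^{\mathsf{T}}(x_2-x_1)=\int_0^1\nabla f(x_1)^{\mathsf{T}}(x_2-x_1)\,d\tau$ expresses the quantity inside the absolute value as
\begin{equation*}
\int_0^1\big(\nabla f(x_1+\tau(x_2-x_1))-\nabla f(x_1)\big)^{\mathsf{T}}(x_2-x_1)\,d\tau .
\end{equation*}

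Second, move the absolute value inside the integral and apply Cauchy--Schwarz. Definition~\ref{def:lipsmooth} bounds $\|\nabla f(x_1+\tau(x_2-x_1))-\nabla f(x_1)\|$ by $L\tau\|x_2-x_1\|$, so the integrand is at most $L\tau\|x_2-x_1\|^2$. Since $\int_0^1 L\tau\,d\tau=L/2$, this yields the claimed bound.

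The only point needing care is justifying the integral representation, that is, that $\phi'$ is integrable and the fundamental theorem applies. This follows because $\nabla f$ is continuous, hence $\phi'$ is continuous on $[0,1]$. The remaining steps are direct.
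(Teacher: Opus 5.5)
Your proof is correct. The integral form of the first-order remainder along the segment, followed by Cauchy--Schwarz and the bound $\|\nabla f(x_1+\tau(x_2-x_1))-\nabla f(x_1)\|\leq L\tau\|x_2-x_1\|$, is the standard textbook argument in the Nesterov reference the paper cites. The paper gives no proof of its own beyond that citation, so there is nothing further to compare.
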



\begin{lemma}
\label{lemma:lipcontinuous}
    Assume the outer loss $\mathcal{L}_\mathrm{RVL}$ is $L$-Lipschitz smooth, and the gradient of $\mathcal{L}_\mathrm{RVL}$, $\mathcal{L}_\mathrm{SL}$ are bounded by $\rho$. Let $w_\nu$ be twice differentiable, with its gradient and Hessian bounded by $\delta$ and $\mathcal{B}$, respectively. Then $\mathcal{L}_\mathrm{RVL}$ is $\rho^2( L\delta^2+\mathcal{B})$-Lipschitz smooth w.r.t. $\nu$.
\end{lemma}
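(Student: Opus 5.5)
We compute $\nabla_\nu\mathcal{L}_\mathrm{RVL}(\psi^{(t+1)}(\nu))$ explicitly through the one-step unrolled map and bound the difference of this gradient at two points $\nu_1,\nu_2$. The setting is the one-step inner update $\psi^{(t+1)}(\nu)=\psi^{(t)}-\beta_1\nabla_\psi\mathcal{L}_\mathrm{WSL}(\psi^{(t)},\nu)$ with $\psi^{(t)}$ treated as independent of $\nu$, as in Appendix~\ref{appendix:derivation}. By the derivation of $g_\mathrm{RVL}^{(t)}$ there, and because $\mathcal{L}_\mathrm{WSL}$ is linear in the weights, the gradient is
\begin{equation*}
\nabla_\nu\mathcal{L}_\mathrm{RVL}(\psi^{(t+1)}(\nu)) = -\beta_1\,\mathbb{E}_{(s,a,s')}\Big[\big(\nabla_\psi\mathcal{L}_\mathrm{RVL}(\psi^{(t+1)}(\nu))^\mathsf{T} G(s,a,s')\big)\,\nabla_\nu w_\nu(s,a,s')\Big],
\end{equation*}
where $G=\nabla_\psi\log\widehat{T}_{\psi^{(t)}}(s'|s,a)$ is the per-sample supervised gradient. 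By the assumption on $\mathcal{L}_\mathrm{SL}$, $\|G\|\le\rho$, and $G$ does not depend on $\nu$. We then take the Hessian $\nabla^2_\nu$ of the composite, or equivalently bound the gradient difference directly, using the product rule. This splits it into two terms.

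\textbf{Term (i): variation of $\nabla_\psi\mathcal{L}_\mathrm{RVL}$ through $\psi^{(t+1)}(\nu)$.} By $L$-smoothness of $\mathcal{L}_\mathrm{RVL}$, $\|\nabla_\psi\mathcal{L}_\mathrm{RVL}(\psi^{(t+1)}(\nu_1))-\nabla_\psi\mathcal{L}_\mathrm{RVL}(\psi^{(t+1)}(\nu_2))\|\le L\|\psi^{(t+1)}(\nu_1)-\psi^{(t+1)}(\nu_2)\|$. The Jacobian of $\psi^{(t+1)}$ with respect to $\nu$ is $-\beta_1\mathbb{E}[G\,\nabla_\nu w_\nu^\mathsf{T}]$, whose norm is at most $\beta_1\rho\delta$. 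So $\psi^{(t+1)}$ is $\beta_1\rho\delta$-Lipschitz in $\nu$. Multiplying by the outer factor $\beta_1\|G\|\|\nabla_\nu w\|\le\beta_1\rho\delta$ gives a contribution of $\beta_1^2 L\rho^2\delta^2\|\nu_1-\nu_2\|$.

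\textbf{Term (ii): variation of $\nabla_\nu w_\nu$.} Here $\nabla_\psi\mathcal{L}_\mathrm{RVL}$ and $G$ are held fixed. Their norms are bounded by $\rho$ each, and the Hessian bound on $w_\nu$ makes $\nabla_\nu w_\nu$ a $\mathcal{B}$-Lipschitz function. This gives a contribution of $\beta_1\rho^2\mathcal{B}\|\nu_1-\nu_2\|$.

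Summing the two terms gives the Lipschitz constant $\beta_1\rho^2(\beta_1 L\delta^2+\mathcal{B})$. We then use $\beta_1\le 1$ (the learning-rate condition of Proposition~\ref{prop:convergence} guarantees $\beta_1\le c_1/K<1$) to drop the $\beta_1$ factors and obtain $\rho^2(L\delta^2+\mathcal{B})$, as claimed.

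The main obstacle is bookkeeping rather than depth. Term (i) requires the Lipschitz constant of the unrolled map $\psi^{(t+1)}(\nu)$, and the norm of the expectation of an outer product must be bounded carefully, via Jensen's inequality and $\|uv^\mathsf{T}\|=\|u\|\|v\|$. Term (ii) requires interpreting the "Hessian bounded by $\mathcal{B}$" assumption as a Lipschitz bound on $\nabla_\nu w_\nu$ uniformly over samples. If per-sample weights make the expectation awkward, we will argue sample-wise and then average, since every bound used is uniform in $(s,a,s')$.
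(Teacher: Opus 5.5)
Your proposal is correct and follows essentially the same route as the paper. Like the paper, you split $\nabla_\nu\bigl(h\,\nabla_\nu w_\nu\bigr)$ by the product rule into a term where $\nabla_\psi\mathcal{L}_\mathrm{RVL}$ varies through $\psi^{(t+1)}(\nu)$ (bounded by $\beta_1^2L\rho^2\delta^2$) and a term where $\nabla_\nu w_\nu$ varies (bounded by $\beta_1\rho^2\mathcal{B}$), and then drop the $\beta_1$ factors. The differences are minor tidy-ups: you bound gradient differences directly, so you need only Lipschitz gradients rather than the $\psi$-Hessian of $\mathcal{L}_\mathrm{RVL}$ that the paper's Hessian-plus-mean-value-theorem argument uses. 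You also state explicitly that $\beta_1<1$; the paper uses this silently, and it comes from the step-size condition of Proposition~\ref{prop:convergence}, not from the lemma's own hypotheses.
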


\begin{proof}
    Recall that the gradient of the outer loss $\mathcal{L}_\mathrm{RVL}(\psi(\nu))$ w.r.t. $\nu$ is
\begin{equation}
\label{eq:appgraoutalpha}
    \begin{aligned}
        \nabla_{\nu}\mathcal{L}_\mathrm{RVL}(\psi^{(t+1)}(\nu^{(t)}))&=\nabla_{\psi}\mathcal{L}_\mathrm{RVL}(\psi^{(t+1)}(\nu^{(t)}))\cdot\nabla_{\nu}\psi^{(t+1)}(\nu^{(t)})\\
        &=h\cdot\nabla_{\nu} w_{\nu^{(t)}}(s,a,s^\prime),
    \end{aligned}
\end{equation}
where $h$ is defined in  \Eqref{eq:hcompute}. Taking the gradient of $\nu$ in both sides of  \Eqref{eq:appgraoutalpha}, we have
\begin{equation*}
    \begin{aligned}
        \nabla^2_{\nu}\mathcal{L}_\mathrm{RVL}(\psi^{(t+1)}(\nu^{(t)}))=\underbrace{\nabla_{\nu} h\cdot\nabla_{\nu^{(t)}} w_\nu(s,a,s^\prime)}_{I_1}+\underbrace{h\cdot\nabla^2_{\nu} w_{\nu^{(t)}}(s,a,s^\prime)}_{I_2}.
    \end{aligned}
\end{equation*}

The norm of the first term $I_1$ could be bounded as follows:
\begin{equation*}
    \begin{aligned}
        &\left\|I_1\right\|\\
        &=\left\|\nabla_{\nu} h\cdot\nabla_{\nu} w_{\nu^{(t)}}(s,a,s^\prime)\right\|\\
        &\leq \beta_{1(t)}\delta\left\|\nabla_{\psi}\left(\nabla_{\nu}\mathcal{L}_\mathrm{RVL}(\psi^{(t+1)}(\nu^{(t)}))\right)^\mathsf{T}\nabla_{\psi}\log\left(\widehat{T}_{\psi^{(t)}}(s^\prime|s,a)\right)\right\|\\
        &=\beta_{1(t)}^2\delta\left\|\nabla_{\psi}\left(\nabla_{\psi}\mathcal{L}_\mathrm{RVL}(\psi^{(t+1)})\nabla_{\psi}\log\left(\widehat{T}_{\psi^{(t)}}(s^\prime|s,a)\right)\nabla_{\nu}  w_{\nu^{(t)}}(s,a,s^\prime)\right)^\mathsf{T}\nabla_{\psi}\log\left(\widehat{T}_{\psi^{(t)}}(s^\prime|s,a)\right)\right\|\\
        &=\beta_{1(t)}^2\delta\left\|\left(\nabla^2_{\psi}\mathcal{L}_\mathrm{RVL}(\psi^{(t+1)})\nabla_{\psi}\log\left(\widehat{T}_{\psi^{(t)}}(s^\prime|s,a)\right)\nabla_{\nu}  w_{\nu^{(t)}}(s,a,s^\prime)\right)^\mathsf{T}\nabla_{\psi}\log\left(\widehat{T}_{\psi^{(t)}}(s^\prime|s,a)\right)\right\|\\
        &\leq \beta_{1(t)}^2 L\rho^2\delta^2.
    \end{aligned}
\end{equation*}
The norm of the second term $I_2$ could be bounded as below:
\begin{equation*}
    \begin{aligned}
        \left\|I_2\right\|&=\left\|h\cdot\nabla^2_{\nu}w_{\nu^{(t)}}(s,a,s^\prime)\right\|\\
        &=\beta_{1(t)}\left\|\nabla_{\psi}\mathcal{L}_\mathrm{RVL}(\psi^{(t+1)})^\mathsf{T}\nabla_{\psi}\log\left(\widehat{T}_{\psi^{(t)}}(s^\prime|s,a)\right)\nabla^2_{\nu}w_{\nu^{(t)}}(s,a,s^\prime)\right\|\\
        &\leq \beta_{1(t)}\mathcal{B}\rho^2.
    \end{aligned}
\end{equation*}
Combining the above results of $\left\|I_1\right\|$ and $\left\|I_2\right\|$, we could bound the norm of $\nabla^2_{\nu}\mathcal{L}_\mathrm{RVL}(\psi^{(t+1)}(\nu^{(t)}))$ as follows:
\begin{equation*}
    \begin{aligned}
        \left\|\nabla^2_{\nu}\mathcal{L}_\mathrm{RVL}(\psi^{(t+1)}(\nu^{(t)}))\right\|&=\left\|I_1+I_2\right\|\\
        &\leq \left\|I_1\right\| + \left\|I_2\right\|\\
        &\leq \beta_{1(t)}\rho^2(\beta_{1(t)} L\delta^2+\mathcal{B})\\
        &\leq  \rho^2( L\delta^2+\mathcal{B}).
    \end{aligned}
\end{equation*}
According to Lagrange mean value theorem, for $\forall \nu_1,\nu_2$, we have
\begin{equation*}
    \left\|\nabla_{\nu}\mathcal{L}_\mathrm{RVL}(\psi(\nu_1))-\nabla_{\nu}\mathcal{L}_\mathrm{RVL}(\psi(\nu_2))\right\|\leq L^\prime\left\|\nu_1-\nu_2\right\|,
\end{equation*}
where $L^\prime=\rho^2( L\delta^2+\mathcal{B})$. This concludes the proof.
\end{proof}

Then we prove our proposition~\ref{prop:convergence}. We restate it as follows.

\begin{proposition}[Proposition~\ref{prop:convergence}]

Assume $\mathcal{L}_\mathrm{RVL}$ and $\mathcal{L}_\mathrm{WSL}$ are $L$-Lipschitz smooth, and the gradient of $\mathcal{L}_\mathrm{RVL}$, $\mathcal{L}_\mathrm{SL}$ and $\mathcal{L}_\mathrm{WSL}$ are bounded by $\rho$. Let $w_\nu$ be twice differentiable, with its gradient and Hessian bounded by $\delta$ and $\mathcal{B}$, respectively. Denote $L^\prime=\rho^2(L\delta^2+\mathcal{B})$, and the total steps as $K$. For some $c_1,c_2>0$, we assume the learning rate of the inner and outer loop $\beta_1$ and $\beta_2$ satisfies: $\frac{c_2}{\sqrt{K}}\leq \beta_1,\beta_2\leq \min\{\frac{c_1}{K},\frac{1}{L},\frac{1}{L^
 \prime}\}$, where $\frac{c_2}{\sqrt{K}}\leq\min\{\frac{c_1}{K},\frac{1}{L},
 \frac{1}{L^\prime}\}$,$\frac{c_2}{\sqrt{K}}<1$, $\frac{c_1}{K}<1$. Then the outer loss $\mathcal{L}_\mathrm{RVL}$ and inner loss $\mathcal{L}_\mathrm{WSL}$ can achieve a convergence rate of $\mathcal{O}\left(\frac{1}{\sqrt{K}}\right)$:
\begin{align}
    \min_{0\leq t\leq K-1}\mathbb{E}\left[\left\|\nabla_{\nu}\mathcal{L}_\mathrm{RVL}(\psi^{(t+1)}(\nu^{(t)}))\right\|^2\right]&\leq\mathcal{O}\left(\frac{1}{\sqrt{K}}\right),\label{eq:outerconverge}\\
    \min_{0\leq t\leq K-1}\mathbb{E}\left[\left\|\nabla_{\psi}\mathcal{L}_\mathrm{WSL}(\psi^{(t)}(\nu^{(t)}))\right\|^2\right]&\leq\mathcal{O}\left(\frac{1}{\sqrt{K}}\right)\label{eq:innerconverge}.
\end{align}

\end{proposition}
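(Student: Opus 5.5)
The plan follows the standard meta-weighting convergence template (as in Meta-Weight-Net): a one-step descent inequality for each level, then telescoping over $K$ steps. \textbf{Outer loss.} By Lemma~\ref{lemma:lipcontinuous}, $\mathcal{L}_\mathrm{RVL}(\psi(\nu))$ is $L^\prime$-smooth in $\nu$. Apply Lemma~\ref{lemma:lipsmooth} to the outer update $\nu^{(t+1)}=\nu^{(t)}-\beta_2 g^{(t)}_\mathrm{RVL}$. Then split
\begin{equation*}
\mathcal{L}_\mathrm{RVL}(\psi^{(t+2)}(\nu^{(t+1)}))-\mathcal{L}_\mathrm{RVL}(\psi^{(t+1)}(\nu^{(t)}))
\end{equation*}
into two pieces. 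The first is the change due to $\psi^{(t+1)}\to\psi^{(t+2)}$ with $\nu^{(t+1)}$ fixed. The second is the change due to $\nu^{(t)}\to\nu^{(t+1)}$.

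\textbf{Bounding the two pieces.} For the first piece, use $L$-smoothness of $\mathcal{L}_\mathrm{RVL}$ in $\psi$ and the inner step $\psi^{(t+2)}-\psi^{(t+1)}=-\beta_1\nabla_\psi\mathcal{L}_\mathrm{WSL}$. With gradients bounded by $\rho$, this piece is at most $\beta_1\rho^2+\frac{L}{2}\beta_1^2\rho^2$. For the second piece, Lemma~\ref{lemma:lipsmooth} gives
\begin{equation*}
-\beta_2\|\nabla_\nu\mathcal{L}_\mathrm{RVL}\|^2+\tfrac{L^\prime}{2}\beta_2^2\|\nabla_\nu\mathcal{L}_\mathrm{RVL}\|^2,
\end{equation*}
and since $\beta_2\le 1/L^\prime$ this is at most $-\tfrac{\beta_2}{2}\|\nabla_\nu\mathcal{L}_\mathrm{RVL}\|^2$. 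If stochastic minibatches are used, the noise terms have zero mean, and their second moments add an $O(\beta_2^2\sigma^2)$ term that is treated the same way.

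\textbf{Telescoping the outer inequality.} Rearrange and sum over $t=0,\dots,K-1$. Using $\beta_2\ge c_2/\sqrt K$ and the fact that $\mathcal{L}_\mathrm{RVL}$ is bounded below (it is nonnegative), this gives
\begin{equation*}
\min_t\mathbb{E}\|\nabla_\nu\mathcal{L}_\mathrm{RVL}\|^2\le\frac{2}{K\beta_2}\Big(\mathcal{L}_\mathrm{RVL}^{(0)}+K(\beta_1\rho^2+\tfrac{L}{2}\beta_1^2\rho^2)+O(K\beta_2^2)\Big).
\end{equation*}
The upper bound $\beta_1\le c_1/K$ makes $K\beta_1\rho^2\le c_1\rho^2=O(1)$. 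Dividing by $K\beta_2\ge c_2\sqrt K$ then yields $O(1/\sqrt K)$.

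\textbf{Inner loss.} Apply Lemma~\ref{lemma:lipsmooth} to $\mathcal{L}_\mathrm{WSL}(\cdot,\nu^{(t)})$ in $\psi$, using the step $-\beta_1\nabla_\psi\mathcal{L}_\mathrm{WSL}$. This gives a decrease of $\tfrac{\beta_1}{2}\|\nabla_\psi\mathcal{L}_\mathrm{WSL}\|^2$ because $\beta_1\le1/L$. Separately, the drift from $\nu^{(t)}\to\nu^{(t+1)}$ enters through $\mathcal{L}_\mathrm{WSL}(\psi^{(t+1)},\nu^{(t+1)})-\mathcal{L}_\mathrm{WSL}(\psi^{(t+1)},\nu^{(t)})$. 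Since $\mathcal{L}_\mathrm{WSL}$ is linear in $w_\nu$, this difference is bounded by $\rho_\ell\,\delta\,\|\nu^{(t+1)}-\nu^{(t)}\|\le\rho_\ell\,\delta\,\beta_2\|g^{(t)}_\mathrm{RVL}\|$, where $\rho_\ell$ bounds the per-sample log-likelihood. The outer gradient satisfies $\|g^{(t)}_\mathrm{RVL}\|\le\beta_1\rho^2\delta$, so the drift is $O(\beta_1\beta_2)$. Telescoping with $\beta_2\le c_1/K$ makes the accumulated drift $O(1)$. Dividing by $K\beta_1\ge c_2\sqrt K$ gives the second bound.

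\textbf{Main obstacle.} The delicate step is controlling these cross-level drift terms, namely how the inner update perturbs the outer loss and vice versa. They do not telescope, so they must be absorbed through the $c_1/K$ upper bound on the step sizes. I would also need to check carefully whether an implicit boundedness of the loss values, in particular $|\log\widehat{T}_\psi|$, is required for the inner drift bound. If it is, I would add that as an assumption.
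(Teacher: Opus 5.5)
Your proposal is correct and follows the paper's proof essentially step for step. It uses the same two-piece split of the outer-loss increment, bounded by $\beta_1\rho^2+\frac{L}{2}\beta_1^2\rho^2$ and $-(\beta_2-\frac{L'}{2}\beta_2^2)\|\nabla_\nu\mathcal{L}_\mathrm{RVL}\|^2$, the same split of the inner-loss increment, and the same telescoping: $\beta_1\le c_1/K$ (outer) or $\beta_2\le c_1/K$ (inner) absorbs the cross-level terms, and $\beta\ge c_2/\sqrt{K}$ gives the rate. The only deviation is the $\nu$-drift of $\mathcal{L}_\mathrm{WSL}$, which the paper bounds with the descent lemma in $\nu$ and $\|\nabla_\nu\mathcal{L}_\mathrm{WSL}\|,\|\nabla_\nu\mathcal{L}_\mathrm{RVL}\|\le\rho$ (giving $\beta_2\rho^2+\frac{L}{2}\beta_2^2\rho^2$), so your extra bound on $|\log\widehat{T}_\psi|$ is not needed for that step; some lower bound on $\mathcal{L}_\mathrm{WSL}$ is, however, tacitly used by both arguments when telescoping.
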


\begin{proof}
We first prove \Eqref{eq:outerconverge}. Consider the difference in the outer loss between the $t$-th and the $t+1$-th iteration:
\begin{equation*}
    \begin{aligned}
        &\mathcal{L}_\mathrm{RVL}(\psi^{(t+2)}(\nu^{(t+1)}))-\mathcal{L}_\mathrm{RVL}(\psi^{(t+1)}(\nu^{(t)}))\\
        &=\underbrace{\mathcal{L}_\mathrm{RVL}(\psi^{(t+2)}(\nu^{(t+1)}))-\mathcal{L}_\mathrm{RVL}(\psi^{(t+1)}(\nu^{(t+1)}))}_{I_1}+\underbrace{\mathcal{L}_\mathrm{RVL}(\psi^{(t+1)}(\nu^{(t+1)}))-\mathcal{L}_\mathrm{RVL}(\psi^{(t+1)}(\nu^{(t)}))}_{I_2}.
    \end{aligned}
\end{equation*}
According to Lemma~\ref{lemma:lipsmooth} and the assumption that $\mathcal{L}_\mathrm{RVL}$ is Lipschitz smooth, we could bound the first term $I_1$ by
\begin{equation*}
    \begin{aligned}
        I_1\leq&\underbrace{\nabla_{\psi}\mathcal{L}_\mathrm{RVL}(\psi^{(t+1)}(\nu^{(t+1)}))^\mathsf{T}(\psi^{(t+2)}(\nu^{(t+1)})-\psi^{(t+1)}(\nu^{t+1}))}_{I_3}\\
        &+\underbrace{\frac{L}{2}\left\|\psi^{(t+2)}(\nu^{(t+1)})-\psi^{(t+1)}(\nu^{(t+1)})\right\|^2}_{I_4}.
    \end{aligned}
\end{equation*}
For term $I_3$, since $\psi^{(t+2)}(\nu^{(t+1)})$ and $\psi^{(t+1)}(\nu^{(t+1)})$ have the relation of
\begin{equation*}
    \psi^{(t+2)}(\nu^{(t+1)})=\psi^{(t+1)}(\nu^{(t+1)})-\beta_{1(t+1)}\nabla_{\psi}\mathcal{L}_\mathrm{WSL}(\psi^{(t+1)}),
\end{equation*}
we could bound $I_3$ by
\begin{equation*}
    \begin{aligned}
        I_3&\leq\left\|\nabla_{\psi}\mathcal{L}_\mathrm{RVL}(\psi^{(t+1)}(\nu^{(t+1)}))\right\|\cdot\left\|\psi^{(t+2)}(\nu^{(t+1)})-\psi^{(t+1)}(\nu^{(t+1)})\right\|\\
        &\leq \rho\cdot\left\|\beta_{1(t+1)}\nabla_{\psi}\mathcal{L}_\mathrm{WSL}(\psi^{(t+1)})\right\|\\
        &\leq \beta_{1(t+1)}\rho^2.
    \end{aligned}
\end{equation*}
Similarly, $I_4$ is bounded by
\begin{equation*}
\begin{aligned}
       I_4&=\frac{L}{2}\left\|\beta_{1(t+1)}\nabla_{\psi}\mathcal{L}_\mathrm{WSL}(\psi^{(t+1)})\right\|^2\\
    &\leq\frac{L}{2}\beta_{1(t+1)}^2\rho^2.
\end{aligned}
\end{equation*}
Therefore, we have
\begin{equation}
\label{eq:convergel1}
    I_1\leq I_3+I_4\leq\beta_{1(t+1)}\rho^2+\frac{L}{2}\beta_{1(t+1)}^2\rho^2.
\end{equation}
For term $I_2$, we regard $\mathcal{L}_\mathrm{RVL}(\psi(\nu))$ as a function w.r.t. $\nu$. By using Lemma~\ref{lemma:lipsmooth} and Lemma~\ref{lemma:lipcontinuous}, we have
\begin{equation}
\label{eq:convergel2}
    \begin{aligned}
        I_2&\leq\nabla_{\nu}\mathcal{L}_\mathrm{RVL}(\psi^{(t+1)}(\nu^{(t)}))^\mathsf{T}(\nu^{(t+1)}-\nu^{(t)})+\frac{L^\prime}{2}\left\|\nu^{(t+1)}-\nu^{(t)}\right\|^2\\
        &=-\beta_{2(t)} \nabla_{\nu}\mathcal{L}_\mathrm{RVL}(\psi^{(t+1)}(\nu^{(t)}))^\mathsf{T}\nabla_{\nu}\mathcal{L}_\mathrm{RVL}(\psi^{(t+1)}(\nu^{(t)}))+\frac{L^\prime\beta_{2(t)}^2}{2}\left\|\nabla_{\nu}\mathcal{L}_\mathrm{RVL}(\psi^{(t+1)}(\nu^{(t)}))\right\|^2\\
        &=-\left(\beta_{2(t)}-\frac{L^\prime\beta_{2(t)}^2}{2}\right)\left\|\nabla_{\nu}\mathcal{L}_\mathrm{RVL}(\psi^{(t+1)}(\nu^{(t)}))\right\|^2.
    \end{aligned}
\end{equation}
The first equality holds due to
\begin{equation*}
    \nu^{(t+1)}=\nu^{(t)}-\beta_{2(t)}\cdot\nabla_{\nu}\mathcal{L}_\mathrm{RVL}(\psi^{(t+1)}(\nu^{(t)})).
\end{equation*}
Combining the results of  \Eqref{eq:convergel1} and  \Eqref{eq:convergel2}, we have
\begin{equation*}
    \begin{aligned}
        &\mathcal{L}_\mathrm{RVL}(\psi^{(t+2)}(\nu^{(t+1)}))-\mathcal{L}_\mathrm{RVL}(\psi^{(t+1)}(\nu^{(t)}))\\
        &\leq \beta_{1(t+1)}\rho^2+\frac{L}{2}\beta_{1(t+1)}^2\rho^2-\left(\beta_{2(t)}-\frac{L^\prime\beta_{2(t)}^2}{2}\right)\left\|\nabla_{\nu}\mathcal{L}_\mathrm{RVL}(\psi^{(t+1)}(\nu^{(t)}))\right\|^2.
    \end{aligned}
\end{equation*}
Adjusting the order of the inequality and summing both sides from $t=0$ to $K-1$, we have
\begin{equation*}
    \begin{aligned}
        &\sum_{t=0}^{K-1}\left(\beta_{2(t)}-\frac{L^\prime\beta^2_{2(t)}}{2}\right)\left\|\nabla_{\nu}\mathcal{L}_\mathrm{RVL}(\psi^{(t+1)}(\nu^{(t)}))\right\|^2\\
        &\leq \mathcal{L}_\mathrm{RVL}(\psi^{(1)}(\nu^{(0)}))-\mathcal{L}_\mathrm{RVL}(\psi^{(K+1)}(\nu^{(K)}))+\sum_{t=0}^{K-1}\left(\beta_{1(t+1)}\rho^2+\frac{L\beta^2_{1(t+1)}\rho^2}{2}\right)\\
        &\leq \mathcal{L}_\mathrm{RVL}(\psi^{(1)}(\nu^{(0)}))+\sum_{t=0}^{K-1}\left(\beta_{1(t+1)}\rho^2+\frac{L\beta^2_{1(t+1)}\rho^2}{2}\right).
    \end{aligned}
\end{equation*}
Therefore, we have
\begin{equation*}
    \begin{aligned}
        &\min_{0\leq t\leq K-1}\mathbb{E}\left[\left\|\nabla_{\nu}\mathcal{L}_\mathrm{RVL}(\psi^{(t+1)}(\nu^{(t)}))\right\|\right]\\
        &\leq \frac{\sum_{t=0}^{K-1}(\beta_{2(t)}-\frac{L^\prime\beta^2_{2(t)}}{2})\left\|\nabla_{\nu}\mathcal{L}_\mathrm{RVL}(\psi^{(t+1)}(\nu^{(t)}))\right\|^2}{\sum_{t=0}^{K-1}(\beta_{2(t)}-\frac{L^\prime\beta^2_{2(t)}}{2})}\\
        &\leq \frac{2\mathcal{L}_\mathrm{RVL}(\psi^{(1)}(\nu^{(0)}))+\sum_{t=0}^{K-1}(2\beta_{1(t+1)}\rho^2+L\beta^2_{1(t+1)}\rho^2)}{\sum_{t=0}^{K-1}(2\beta_{2(t)}-L^\prime\beta^2_{2(t)})}\\
        &\leq \frac{1}{\sum_{t=0}^{K-1}\beta_{2(t)}}\left[2\mathcal{L}_\mathrm{RVL}(\psi^{(1)}(\nu^{(0)}))+\sum_{t=0}^{K-1}\beta_{1(t+1)}\rho^2(2+L\beta_{1(t+1)})\right]\\
        &\leq \frac{1}{K\min_{0\leq t\leq K-1}\beta_{2(t)}}\left[2\mathcal{L}_\mathrm{RVL}(\psi^{(1)}(\nu^{(0)}))+K\max_{0\leq t\leq K-1}\beta_{1(t+1)}\rho^2(2+L)\right]\\
        &=\frac{2\mathcal{L}_\mathrm{RVL}(\psi^{(1)}(\nu^{(0)}))}{K\min_{0\leq t\leq K-1}\beta_{2(t)}}+\frac{\max_{0\leq t\leq K-1}\beta_{1(t+1)}\rho^2(2+L)}{\min_{0\leq t\leq K-1}\beta_{2(t)}}\\
        &\leq \frac{2\mathcal{L}_\mathrm{RVL}(\psi^{(1)}(\nu^{(0)}))}{c_2\sqrt{K}}+\frac{c_1\rho^2(2+L)}{c_2\sqrt{K}}\\
        &=\mathcal{O}\left(\frac{1}{\sqrt{K}}\right).
    \end{aligned}
\end{equation*}
The third inequality holds from $\sum_{t=0}^{K-1}\beta_{2(t)}\leq\sum_{t=0}^{K-1}(2\beta_{2(t)}-L^\prime\beta^2_{2(t)})$ given $\beta_{2(t)}\leq\frac{1}{L^\prime}$.

The proof of  \Eqref{eq:innerconverge} is similar to  \Eqref{eq:outerconverge}. For completeness, we also present the proof procedure. Consider the difference in the inner loss between the $t$-th and $t+1$-th iterations:
    \begin{equation*}
        \begin{aligned}
            &\mathcal{L}_\mathrm{WSL}(\psi^{(t+1)},\nu^{(t+1)})-\mathcal{L}_\mathrm{WSL}(\psi^{(t)},\nu^{(t)})\\
            &=\underbrace{\mathcal{L}_\mathrm{WSL}(\psi^{(t+1)},\nu^{(t+1)})-\mathcal{L}_\mathrm{WSL}(\psi^{(t+1)},\nu^{(t)})}_{I_1}+\underbrace{\mathcal{L}_\mathrm{WSL}(\psi^{(t+1)},\nu^{(t)})-\mathcal{L}_\mathrm{WSL}(\psi^{(t)},\nu^{(t)})}_{I_2}.
        \end{aligned}
    \end{equation*}
For term $I_1$, according to Lemma~\ref{lemma:lipsmooth}, we have
\begin{equation*}
    \begin{aligned}
        I_1&\leq \nabla_{\nu}\mathcal{L}_\mathrm{WSL}(\psi^{(t+1)},\nu^{(t)})^\mathsf{T}(\nu^{(t+1)}-\nu^{(t)})+\frac{L}{2}\left\|\nu^{(t+1)}-\nu^{(t)}\right\|^2\\
        &=-\beta_{2(t)}\nabla_{\nu}\mathcal{L}_\mathrm{WSL}(\psi^{(t+1)},\nu^{(t)})^\mathsf{T}\nabla_{\nu}\mathcal{L}_\mathrm{RVL}(\psi^{(t+1)}(\nu^{(t)}))+\frac{L\beta^2_{2(t)}}{2}\left\|\nabla_{\nu}\mathcal{L}_\mathrm{RVL}(\psi^{(t+1)}(\nu^{(t)}))\right\|^2\\
        &\leq \beta_{2(t)}\rho^2+\frac{L}{2}\beta^2_{2(t)}\rho^2.
    \end{aligned}
\end{equation*}
For term $I_2$, we have
\begin{equation*}
    \begin{aligned}
        I_2&\leq \nabla_{\psi}\mathcal{L}_\mathrm{WSL}(\psi^{(t)},\nu^{(t)})^\mathsf{T}(\psi^{(t+1)}-\psi^{(t)})+\frac{L}{2}\left\|\psi^{(t+1)}-\psi^{(t)}\right\|^2\\
        &=-\beta_{1(t)}\nabla_{\psi}\mathcal{L}_\mathrm{WSL}(\psi^{(t)},\nu^{(t)})^\mathsf{T}\nabla_{\psi}\mathcal{L}_\mathrm{WSL}(\psi^{(t)},\nu^{(t)})+\frac{L\beta^2_{1(t)}}{2}\left\|\nabla_{\psi}\mathcal{L}_\mathrm{SL}(\psi^{(t)},\nu^{(t)})\right\|^2\\
        &=-(\beta_{1(t)}-\frac{L\beta^2_{1(t)}}{2})\left\|\nabla_{\psi}\mathcal{L}_\mathrm{WSL}(\psi^{(t)},\nu^{(t)})\right\|^2.
    \end{aligned}
\end{equation*}
Therefore, we have
\begin{equation*}
    \begin{aligned}
        &\mathcal{L}_\mathrm{WSL}(\psi^{(t+1)},\nu^{(t+1)})-\mathcal{L}_\mathrm{WSL}(\psi^{(t)},\nu^{(t)})\\
        &\leq\beta_{2(t)}\rho^2+\frac{L}{2}\beta^2_{2(t)}\rho^2-\left(\beta_{1(t)}-\frac{L\beta^2_{1(t)}}{2}\right)\left\|\nabla_{\psi}\mathcal{L}_\mathrm{WSL}(\psi^{(t)},\nu^{(t)})\right\|^2.
    \end{aligned}
\end{equation*}
Rearranging the inequality and summing both sides from $t=0$ to $K-1$, we have
\begin{equation*}
    \begin{aligned}
        &\sum_{t=0}^{K-1}\left(\beta_{1(t)}-\frac{L\beta^2_{1(t)}}{2}\right)\left\|\nabla_{\psi}\mathcal{L}_\mathrm{WSL}(\psi^{(t)},\nu^{(t)})\right\|^2\\
        &\leq\mathcal{L}_\mathrm{WSL}(\psi^{(0)},\nu^{(0)})-\mathcal{L}_\mathrm{WSL}(\psi^{(K)},\nu^{(K)})+\sum_{t=0}^{K-1}\beta_{2(t)}\rho^2+\frac{L}{2}\beta^2_{2(t)}\rho^2\\
        &\leq \mathcal{L}_\mathrm{WSL}(\psi^{(0)},\nu^{(0)})+\sum_{t=0}^{K-1}\beta_{2(t)}\rho^2+\frac{L}{2}\beta^2_{2(t)}\rho^2.
    \end{aligned}
\end{equation*}
Therefore, we have
\begin{equation*}
    \begin{aligned}
        &\min_{0\leq t\leq T-1}\mathbb{E}\left[\left\|\nabla_{\psi}\mathcal{L}_\mathrm{WSL}(\psi^{(t)},\nu^{(t)})\right\|^2\right]\\
        &\leq\frac{\sum_{t=0}^{K-1}\left(\beta_{1(t)}-\frac{L\beta^2_{1(t)}}{2}\right)\left\|\nabla_{\psi}\mathcal{L}_\mathrm{WSL}(\psi^{(t)},\nu^{(t)})\right\|^2}{\sum_{t=0}^{K-1}\left(\beta_{1(t)}-\frac{L\beta^2_{1(t)}}{2}\right)}\\
        &\leq \frac{2\mathcal{L}_\mathrm{WSL}(\psi^{(0)},\nu^{(0)})+\sum_{t=0}^{K-1}2\beta_{2(t)}\rho^2+L\beta^2_{2(t)}\rho^2}{\sum_{t=0}^{K-1}\left(2\beta_{1(t)}-L\beta^2_{1(t)}\right)}\\
        &\leq \frac{2\mathcal{L}_\mathrm{WSL}(\psi^{(0)},\alpha^{(0)})+\sum_{t=0}^{K-1}2\beta_{2(t)}\rho^2+L\beta^2_{2(t)}\rho^2}{\sum_{t=0}^{K-1}\beta_{1(t)}}\\
        &\leq \frac{1}{K\min_{0\leq t\leq K-1}\beta_{1(t)}}\left[2\mathcal{L}_\mathrm{WSL}(\psi^{(0)},\nu^{(0)})+K\max_{0\leq t\leq K-1}\beta_{2(t)}\rho^2(2+L)\right]\\
        &\leq \frac{2\mathcal{L}_\mathrm{WSL}(\psi^{(0)},\nu^{(0)})}{c_2\sqrt{K}}+\frac{c_1\rho^2(2+L)}{c_2\sqrt{K}}\\
        &=\mathcal{O}(\frac{1}{\sqrt{K}}),
    \end{aligned}
\end{equation*}
where the third inequality holds since $\sum_{t=0}^{K-1}\beta_{1(t)}\leq\sum_{t=0}^{K-1}(2\beta_{1(t)}-L\beta^2_{1(t)})$ given $\beta_{1(t)}\leq\frac{1}{L}$. This concludes the proof.
\end{proof}

\section{Benchmark Settings}

In this part, we provide details of the benchmarks and datasets we use for evaluating our method in our paper. The offline datasets are taken from the D4RL~\citep{fu2020d4rl} and NeoRL~\citep{qin2022neorl} benchmarks, two popular benchmarks designed for evaluating offline RL algorithms.

\subsection{D4RL}

For the D4RL benchmark, we mainly evaluate our method on two kinds of datasets: MuJoCo datasets and Antmaze datasets. We first introduce the two datasets.

MuJoCo datasets are collected through interactions with continuous control tasks in Gym~\citep{brockman2016openai} simulated by MuJoCo~\citep{todorov2012mujoco}. The tasks we use are \texttt{halfcheetah, hopper and walker2d}, as illustrated in Figure~\ref{fig:mujocodataset}. For each task, we use the four types of datasets: (1) \textbf{Random}: data collected with a random policy. (2) \textbf{Medium}: 1M samples collected by an early-stopped SAC policy. (3) \textbf{Medium-Replay}: 1M samples from the replay buffer of the agent trained up to the performance of a medium-level agent. (4) \textbf{Medium-Expert}: 50-50 split of medium-level data and expert-level data. The MuJoCo dataset version we use in our work is ``-v2". The metric we use for evaluating the agent's performance on MuJoCo tasks is Normalized Score (NS). NS is computed as follows.
\begin{equation}
    \mathrm{NS}=\frac{J_\pi-J_\mathrm{random}}{J_\mathrm{expert}-J_\mathrm{random}}\times 100\%
\end{equation}
where $J_\pi$ is the performance of the evaluated policy, $J_\mathrm{random}$ is the performance of the random policy, $J_\mathrm{expert}$ is the performance of the expert policy.

\begin{figure}
    \centering
    \includegraphics[width=0.3\linewidth]{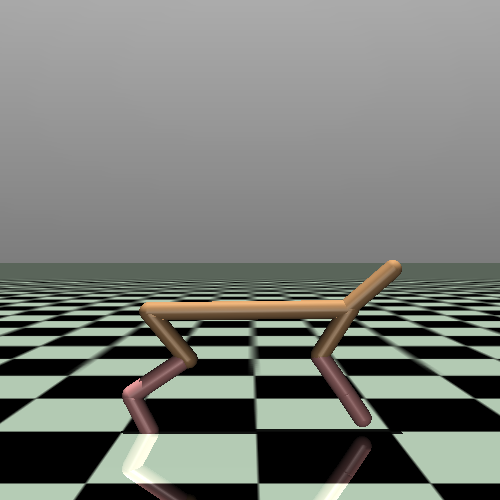}
    \includegraphics[width=0.3\linewidth]{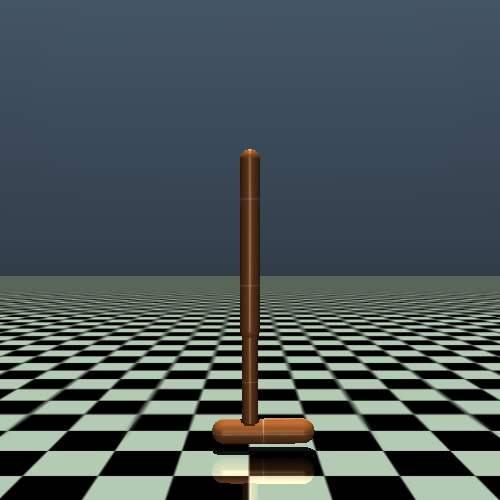}
    \includegraphics[width=0.3\linewidth]{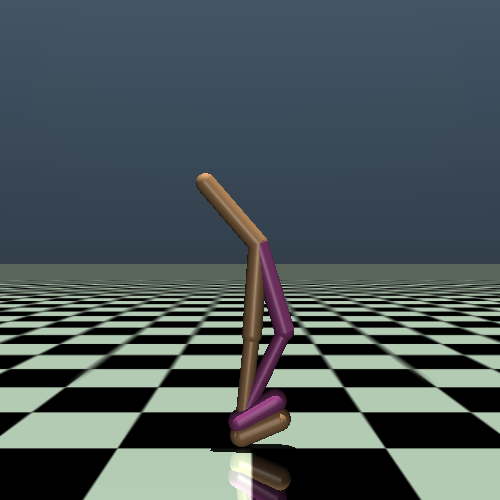}
    \caption{D4RL MuJoCo tasks. \textbf{Left:} halfcheetah, \textbf{Middle:} hopper, \textbf{Right:} walker2d.}
    \label{fig:mujocodataset}
\end{figure}

Antmaze datasets are collected in Antmaze tasks.
In Antmaze tasks, an 8-DOF ``Ant" quadraped robot is required to reach a goal location. Antmaze tasks are more challenging than MuJoCo tasks for offline RL algorithms due to their sparse reward setting. There are three maze layouts contained in Antmaze tasks: \texttt{umaze, medium, large}, as shown in Figure~\ref{fig:antmazedataset}. The datasets are collected in three flavors: \textit{(i)} the robot needs to reach a specified goal from a fixed start point (\texttt{antmaze-umaze-v0}). \textit{(ii)} the robot is required to reach a random goal from a random start point (the \texttt{diverse} datasets). \textit{(iii)} the robot is commanded to reach specific locations from a different set of specific start locations (the \texttt{play} datasets). In our work, we use the six Antmaze datasets: \texttt{antmaze-umaze, antmaze-umaze-diverse, antmaze-medium-diverse, antmaze-medium-play, antmaze-large-diverse, antmaze-large-play}. The dataset version we use is ``-v0". The metric we use for evaluating the policy performance on Antmaze tasks is the success rate of task completion.


\begin{figure}
    \centering
    \includegraphics[width=0.3\linewidth]{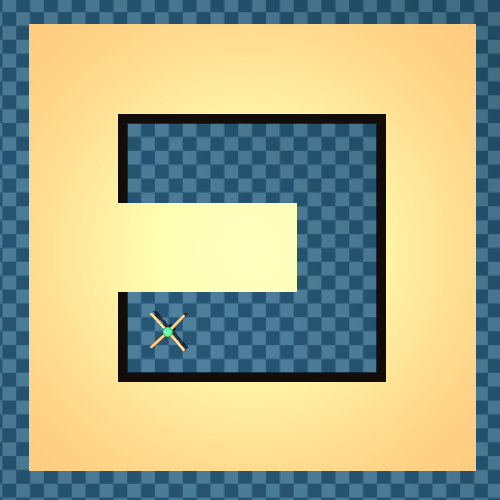}
    \includegraphics[width=0.3\linewidth]{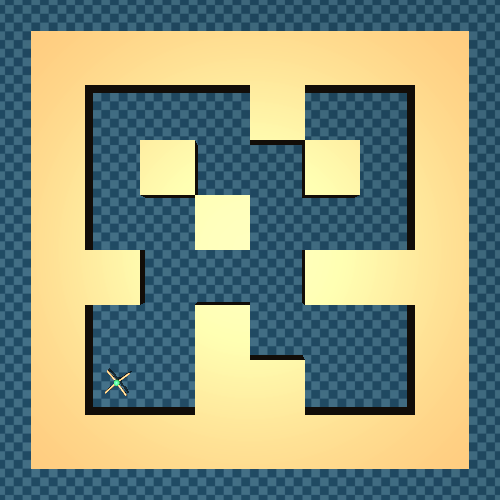}
    \includegraphics[width=0.3\linewidth]{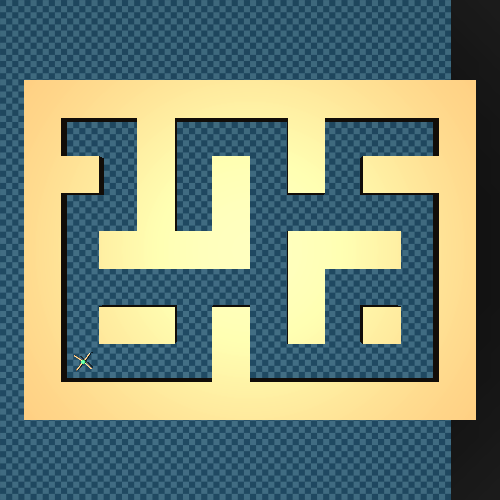}
    \caption{D4RL Antmaze tasks. \textbf{Left:} umaze, \textbf{Middle:} medium, \textbf{Right:} large.}
    \label{fig:antmazedataset}
\end{figure}

\subsection{NeoRL}

NeoRL is a benchmark that aims to simulate real-world scenarios by collecting datasets with a more conservative policy, which more aligns with the real-world data-collection strategy. NeoRL datasets contain narrow and limited data, making it challenging for offline RL methods. There are several tasks on the NeoRL benchmark. Following previous studies~\citep{sun2023model,linany}, we focus on three tasks (\texttt{HalfCheetah-v3}, \texttt{Hopper-v3}, \texttt{Walker2d-v3}). For each task, we use three types of datasets collected with policies of different qualities (\texttt{low}, \texttt{medium}, \texttt{high}). NeoRL provides different numbers of trajectories (100, 1000, 10000) as training data for each task. We select 1000 trajectories for our experiments. The evaluation metric is the Normalized Score, the same metric used for MuJoCo tasks in D4RL.

\section{Implementation Details}
\label{app:implementation}
In this part, we present the details of baseline implementations, ROMI implementations, and hyperparameter setup.

\subsection{Baseline Implementation}

We elaborate on the baseline implementation details on D4RL and NeoRL benchmarks, respectively.

For D4RL datasets, our baselines include CQL~\citep{kumar2020conservative}, IQL~\citep{kostrikov2021offline}, MOPO~\citep{yu2020mopo}, RAMBO~\citep{rigter2022rambo}, Count-MORL~\citep{kim2023model}, MOBILE~\citep{sun2023model}. For Count-MORL, we retrain the algorithm using the official codebase\footnote{https://github.com/oh-lab/Count-MORL.git} provided in the original paper. We note that there are three count estimation methods (\texttt{LC}, \texttt{AVG}, \texttt{UC}) chosen for Count-MORL, and we choose \texttt{UC} since the original paper reports that \texttt{UC} exhibits the best average performance. For other baselines including CQL, IQL, MOPO, RAMBO, and MOBILE, we run the code in the codebase OfflineRL-Kit\footnote{https://github.com/yihaosun1124/OfflineRL-Kit.git}, which provides reliable implementations for different model-free and model-based offline RL algorithms.

For NeoRL datasets, our baselines include CQL, IQL, EDAC~\citep{an2021uncertainty}, MOPO, RAMBO, and MOBILE. Since the codebase OfflineRL-Kit has provided the implementations for these baselines, we directly use it as our baseline codebase.

For Antmaze tasks, we run RAMBO according to the hyperparameter setting in the original paper, since MOPO, COMBO, and MOBILE do not provide the detailed hyperparameter setting for Antmaze tasks, we directly copy the evaluation results of these algorithms as reported in~\citep{lin2024any}. 

It is worth noting that the default training steps in OfflineRL-Kit are 3M. To be consistent with the general training settings and ensure a fair comparison, we uniformly set the training steps for all algorithms to 1M.

\subsection{Implementation of ROMI}
\label{appendix:impleromi}
We then provide the implementation details of ROMI from the following aspects:

\textbf{Codebase.} Since ROMI is an improved version of RAMBO, we choose OfflineRL-Kit as our codebase, and implement ROMI on top of the implementation of RAMBO.

\textbf{Dynamics Model Training.} Following previous works~\citep{rigter2022rambo,sun2023model,kim2023model}, we represent the dynamics model as an ensemble of neural networks that output a Gaussian distribution over the next state given the current state and action:
\begin{equation}
    \widehat{T}_\psi(s^\prime|s,a)=\mathcal{N}(\mu_\psi(s,a),\Sigma_\psi(s,a))
\end{equation}
Note that we do not predict the reward $r$, since we assume the reward function $r$ is known as in~\citep{sun2023model}, although $r$ can be considered as part of the model if unknown. Similar to RAMBO, we perform an initial maximum likelihood model training. The difference lies in that we fix the training epochs to $50$, rather than stopping the training based on the held-out loss, since we find this could be quite time-consuming. After the initial training, ROMI updates all the ensemble members in a bi-level optimization framework. For each model rollout, we randomly select an ensemble member to simulate the rollout.

\textbf{Adaptive-weighting Network Training.} We represent the adaptive-weighting network as a simple Multi-Layer Perceptron (MLP) with 2 hidden layers. The input vector is a concatenation of $(s,a,s^\prime)$, and the output is a scalar representing the allocated weight for each training sample. To map the network's output to the interval $[a,b]$, we apply the $\mathrm{tanh}$ function and then perform an affine transformation (scaling and shifting) to regulate the final range:
\begin{equation}
    \mathrm{weight}=\mathrm{tanh}(\mathrm{output})\times\frac{b-a}{2}+\frac{a+b}{2}
\end{equation}
In our experiment, we choose $a=0.5$ and $b=2$, that is, the assigned weight is constrained in the range $[0.5,2]$. Since the performance of ROMI is already satisfactory, we do not further tune the values of $a$ and $b$.

\textbf{Policy Optimization.} The policy optimization method we adopt is SAC, and most hyperparameters follow its standard implementations. For each training step, we sample a batch of $256$ samples where $50\%$ of them are from the offline dataset $\mathcal{D}$, and another $50\%$ are from the model buffer $\mathcal{D}_\mathrm{model}$. We use automatic entropy tuning, where the entropy target is set to the standard heuristic $-\mathrm{dim}(\mathcal{A})$.

We present the base hyperparameter configurations for ROMI in Table~\ref{tab:hyperparameterROMI} and the pseudo-code for ROMI in Algorithm~\ref{alg:fullROMI}. The key difference between ROMI and RAMBO lies in the training strategy for the dynamics model, as highlighted by the \textcolor{blue}{blue} parts in the pseudo-code.

\begin{algorithm}[htb]
\caption{ROMI}
\label{alg:fullROMI}
\begin{algorithmic}[1] 
\STATE \textbf{Require:} Offline dataset $\mathcal{D}$, number of epochs $N$, maximum rollout length $H$, dynamics model pretraining steps $T_\mathrm{model}$, policy pretraining steps $T_\mathrm{policy}$, learning rate $\lambda$, real data ratio $f$

\STATE \textbf{Initialization:} model data buffer $\mathcal{D}_\mathrm{model}\leftarrow\varnothing$, ensemble dynamics model $\widehat{T}_\psi=\{\widehat{T}_{\psi_i}\}_{i=1}^M$, weighting network $w_\nu$, initial policy $\pi_\theta$, other components of SAC~\citep{haarnoja2018soft} algorithm
\STATE\textbf{// Pre-train the dynamics model}
\FOR{step in 1 to $T_\mathrm{model}$}
\STATE Compute loss: $\mathcal{L}_\mathrm{SL}=-\mathbb{E}_{(s,a,s^\prime)\in\mathcal{D}}\left[\log\widehat{T}_\psi(s^\prime|s,a)\right]$ \STATE Perform gradient descent: $\psi\leftarrow \psi-\lambda\nabla_\psi\mathcal{L}_\mathrm{SL}(\psi)$
\ENDFOR
\STATE\textbf{// (Optional) Pre-train the policy}
\FOR{step in 1 to $T_\mathrm{policy}$}
\STATE Compute loss: $\mathcal{L}_\mathrm{BC}=-\mathbb{E}_{(s,a)\in\mathcal{D}}\left[\log\pi_\theta(s|a)\right]$ \STATE Perform gradient descent: $\theta\leftarrow \theta-\lambda\nabla_\theta\mathcal{L}_\mathrm{BC}(\theta)$
\ENDFOR
\FOR{epoch from 1 to $N$}
\STATE Sample an initial state $s_0$ from $\mathcal{D}$
\FOR{$h$ in 0 to $H-1$}
\STATE Take an action $a_h\sim\pi_\theta(\cdot|s_h)$
\STATE Randomly pick an ensemble member $\widehat{T}$ from $\{\widehat{T}_{\psi_i}\}_{i=1}^M$ and sample next state $s_{h+1}=\widehat{T}(\cdot|s_h,a_h)$
\STATE Add the model generated transition $(s_h,a_h,r_h,s_{h+1})$ into $\mathcal{D}_\mathrm{model}$
\STATE Sample a batch of transitions from $\mathcal{D} \cup \mathcal{D}_{\rm model}$ with ratio $f$ 
\STATE optimize $\pi_\theta$ with SAC: $\theta\leftarrow \theta-\lambda\nabla_\theta \mathcal{L}_{\rm SAC}(\theta)$ 
\ENDFOR
\STATE \textbf{// Inner level update}
\STATE \textcolor{blue}{Compute loss} {\color{blue}
$\mathcal{L}_\mathrm{WSL}(\psi,\nu)=-\mathbb{E}_{(s,a,s^\prime)\in\mathcal{D}}\left[w_\nu(s,a,s^\prime)\log\left(\widehat{T}_\psi(s^\prime|s,a)\right)\right]$
}
\STATE {\color{blue}Perform gradient descent: $\psi\leftarrow\psi-\lambda\nabla_\psi\mathcal{L}_\mathrm{WSL}(\psi,\nu)$}
\STATE \textbf{// Outer level update}
\STATE {\color{blue}Compute $h$ according to  \Eqref{eq:hcompute}
\STATE Compute gradient $g_\mathrm{RVL}=h\cdot\nabla_\nu w_\nu(s,a,s^\prime)$}
\STATE {\color{blue}Perform gradient descent: $\nu\leftarrow\nu-\lambda\cdot g_\mathrm{RVL}$}
\ENDFOR
\end{algorithmic}
\end{algorithm}


\subsection{Hyperparameter Tuning of ROMI}
\label{sec:parametersetting}
There are three main hyperparameters for ROMI: the uncertainty set scale $\xi$, the uncertainty set sampling number $N$, and the rollout length $H$. We show how we tune those hyperparameters below.

\textbf{Uncertainty set scale $\xi$.} For D4RL and NeoRL tasks, we search $\xi$ in the range of $\{0.01, 0.1, 1.0, 10\}$ and choose the value with the best performance.

\textbf{Sampling number $N$.} $N$ controls the approximation error between the sampled minimum Q-value and the true minimum Q-value within the uncertainty set. A larger $N$ generally leads to a smaller error but incurs higher computational costs. Furthermore, a larger $\xi$ often necessitates a larger $N$. In our experiments, we select $N$ from the set $\{10,20,30\}$ for all the tasks. To conserve computational resources, if the performance difference between two candidate values of $N$ is not significant, we choose the smaller one.

\textbf{Rollout length $H$.} We perform short-horizon rollouts similar to previous works~\citep{sun2023model,rigter2022rambo}, and tune $H$ across $\{2,5\}$ for Gym and NeoRL tasks. For Antmaze tasks, we vary the parameters on $H\in\{1,3\}$.

The selected hyperparameters for each task are listed in Table~\ref{tab:hyperdatasets}.

\begin{table}[]
    \centering
    \caption{Base hyperparameter setup for ROMI shared across all runs.}
    \begin{tabular}{lll}
    \toprule
    & \textbf{Hyperparameter} & \qquad \textbf{Value} \\
    \midrule
    Dynamics Model & Ensemble size & \qquad 7 \\
            & Architecture & \qquad (input, 200, 200, 200, 200, output) \\
           & Real data ratio $(f)$ & \qquad 0.5 \\
           & Model learning batch size & \qquad 256 \\
           & Model learning rate & \qquad $3\times 10^{-4}$ \\
           & Model pretraining epochs & \qquad 50 \\
           \midrule
    Weighting network    & Architecture & \qquad (input, 256, 256, 256, 1) \\
           & Output range & \qquad [0.5, 2] \\
           & learning rate & \qquad $1\times 10^{-4}$ \\
           & Batch size & \qquad 256\\
           \midrule
    Policy optimization    & Require pretraining & \qquad False \\
           & Critic learning rate & \qquad $3\times 10^{-4}$ \\
           & Actor learning rate & \qquad $1\times 10^{-4}$\\
           & Optimizer & \qquad Adam~\citep{kingma2014adam}\\
           & Discount factor & \qquad 0.99 \\
           & Soft update parameter & \qquad $5\times 10^{-3}$\\
           & Target entropy & \qquad -dim($\mathcal{A}$)\\
            & Batch size & \qquad 256\\

           \bottomrule
    \end{tabular}
    \label{tab:hyperparameterROMI}
\end{table}

\begin{table}[]
    \centering
    \caption{Hyperparameters of ROMI.}
    \begin{tabular}{l|llll}
    \toprule
    \textbf{Domain Name} & \textbf{Task Name} & $\xi$ & $N$ & $H$\\
    \midrule
         & halfcheetah-random & 0.01 & 10 & 5 \\
         & hopper-random & 0.1 & 10 & 5 \\
         & walker2d-random & 0.01 & 10 & 5 \\
    \cmidrule(lr){2-5}
    & halfcheetah-medium & 0.1 & 10 & 2 \\
    & hopper-medium & 1.0 & 10 & 2 \\
   D4RL MuJoCo & walker2d-medium & 0.01 & 10 & 5 \\
    \cmidrule(lr){2-5}
    & halfcheetah-medium-replay & 0.1 & 10 & 5 \\
    & hopper-medium-replay & 1.0 & 10 & 5 \\
    & walker2d-medium-replay & 0.01 & 10 & 5 \\
    \cmidrule(lr){2-5}
    & halfcheetah-medium-expert & 1.0 & 10 & 2 \\
    & hopper-medium-expert & 1.0 & 10 & 2 \\
    & walker2d-medium-expert & 0.01 & 10 & 5 \\
    \midrule
    & antmaze-umaze & 0.1 & 10 & 1 \\
    & antmaze-umaze-diverse & 1.0 & 10 & 1 \\
    \cmidrule(lr){2-5}
   D4RL Antmaze & antmaze-medium-play & 0.1 & 10 & 3\\
    & antmaze-medium-diverse & 1.0 & 10 & 1\\
    \cmidrule(lr){2-5}
    & antmaze-large-play & 1.0 & 10 & 1 \\
    & antmaze-large-diverse & 1.0 & 10 & 1 \\
    \midrule
     & HalfCheetah-L & 0.1 & 10 & 2 \\
     & Hopper-L & 0.1 & 10 & 2 \\
     & Walker2d-L & 1.0 & 10 & 2 \\
     \cmidrule(lr){2-5}
    NeoRL & HalfCheetah-M & 0.01 & 10 & 5 \\
     & Hopper-M & 0.1 & 10 & 2 \\
     & Walker2d-M & 0.01 & 10 & 2 \\
     \cmidrule(lr){2-5}
     & HalfCheetah-H & 1.0 & 10 & 2 \\
     & Hopper-H & 0.1 & 10 & 2 \\
     & Walker2d-H & 0.01 & 10 & 5 \\
     \bottomrule
    \end{tabular}

    \label{tab:hyperdatasets}
\end{table}


\section{Computational Cost}
\label{app:infrastructure}

\subsection{Compute Infrastructure}

We list our hardware specifications as follows:
\begin{itemize}
    \item GPU: NVIDIA RTX A6000 ($\times$4)
    \item CPU: AMD Ryzen Threadripper PRO 5975WX
\end{itemize}

We also list our software specifications as follows:
\begin{itemize}
    \item Python: 3.8.18
    \item Pytorch: 1.11.0+cu113
    \item Tensorflow: 2.9.1
    \item Numpy: 1.24.4
    \item Gym: 0.22.0
    \item MuJoCo: 2.0
    \item D4RL: 1.1
\end{itemize}

\subsection{Computing Resource Consumption}

We compare the computational resource consumption of ROMI and RAMBO, including model parameter size, GPU memory usage during training, and runtime, as shown in Table~\ref{tab:comparisoncomputation}. The results indicate that while ROMI and RAMBO show no significant difference in model parameter size or GPU memory usage, ROMI requires more runtime. This is primarily due to the additional training overhead of the weighting network.

\begin{table}[htbp]
\centering
\setlength{\tabcolsep}{2.5pt}
\caption{Comparison of ROMI and RAMBO on computing resource consumption}
\label{tab:comparisoncomputation}
\begin{tabular}{lccccccc}
\toprule
\multirow{2}{*}{\textbf{Task Name}} & 
\multicolumn{2}{c}{\textbf{Size of Parameters (MB)}} & 
\multicolumn{2}{c}{\textbf{GPU Memory (GB)}} & 
\multicolumn{2}{c}{\textbf{Runtime (s/epoch)}} \\
\cmidrule(lr){2-3} \cmidrule(lr){4-5} \cmidrule(lr){6-7}
 & \textbf{ROMI} & \textbf{RAMBO} & \textbf{ROMI} & \textbf{RAMBO} & \textbf{ROMI} & \textbf{RAMBO} \\
\midrule
hopper-random & 7.93 & 7.66 & 3.30 & 3.28 & 28.20 & 23.76 \\
halfcheetah-random & 8.19 & 7.91 & 3.33 & 3.32 & 29.41 & 22.24 \\
walker2d-random & 8.19 & 7.91 & 3.33 & 3.32 & 30.06 & 23.89 \\
hopper-medium & 7.93 & 7.66 & 3.30 & 3.28 & 28.43 & 23.31 \\
halfcheetah-medium & 8.19 & 7.91 & 3.33 & 3.32 & 29.91 & 22.63 \\
walker2d-medium & 8.19 & 7.91 & 3.33 & 3.32 & 29.67 & 23.15 \\
hopper-medium-replay & 7.93 & 7.66 & 3.30 & 3.28 & 28.46 & 22.27 \\
halfcheetah-medium-replay & 8.19 & 7.91 & 3.33 & 3.32 & 29.13 & 23.52 \\
walker2d-medium-replay & 8.19 & 7.91 & 3.33 & 3.32 & 29.64 & 23.11 \\
hopper-medium-expert & 7.93 & 7.66 & 3.30 & 3.28 & 28.33 & 22.93 \\
halfcheetah-medium-expert & 8.19 & 7.91 & 3.33 & 3.32 & 29.21 & 23.47 \\
walker2d-medium-expert & 8.19 & 7.91 & 3.33 & 3.32 & 30.02 & 23.09 \\
\bottomrule
\end{tabular}
\end{table}

\section{More Experimental Results}

In this part, we provide more experimental results missing from the main text.

\subsection{D4RL Antmaze Results}
\label{app:antmaze}
We compare ROMI with several model-based offline RL methods on the challenging D4RL Antmaze tasks; results are presented in Table~\ref{tab:antmaze}. Performance varies substantially across methods: MOPO fails to achieve meaningful performance on all six tasks; COMBO performs well on \texttt{umaze} and \texttt{umaze-diverse} but fails on others; RAMBO achieves nonzero scores on four of the six tasks, but its total score remains low. MOBILE performs strongly overall, achieving nonzero scores on all tasks with a total of 173.4. In contrast, ROMI achieves a total score of \textbf{186.5}-- higher than MOBILE and the highest among all compared methods. Notably, ROMI clearly outperforms MOBILE on \texttt{umaze-diverse} and \texttt{medium-diverse}, and exceeds RAMBO on five of the six tasks. The only exception is \texttt{large-play}, where both methods score zero. These results demonstrate ROMI’s effectiveness in addressing challenging Antmaze domains.

\begin{table}[]
    \centering
    \caption{Successful Success rate comparison on Antmaze tasks. Results are averaged over 5 seeds. The top two scores for each task are \textbf{bold}.}
    \begin{tabular}{l|cccc|c}
    \toprule
    \textbf{Task Name} & \textbf{MOPO} & \textbf{COMBO} & \textbf{RAMBO} & \textbf{MOBILE} & \textbf{ROMI (Ours)} \\
    \midrule
    antmaze-umaze & 0.0 & \textbf{80.3} & 42.6 $\pm$ 11.9 & \textbf{77.0} & 70.3$\pm$ 11.4 \\
    antmaze-umaze-diverse & 0.0 & \textbf{57.3} & 12.9 $\pm$5.6 & 20.4 & \textbf{32.8} $\pm$ 13.0  \\ 
    \midrule
    antmaze-medium-play & 0.0 & 0.0 & 18.7 $\pm$ 9.2 & \textbf{64.6} & \textbf{51.3} $\pm$ 19.7\\
    antmaze-medium-diverse & 0.0 & 0.0 & \textbf{20.4} $\pm$ 3.7 & 1.6 & \textbf{30.1} $\pm$ 10.0 \\
    \midrule
    antmaze-large-play & 0.0 & 0.0 & 0.0 $\pm$ 0.0 & \textbf{2.6} & 0.0 $\pm$ 0.0\\
    antmaze-large-diverse & 0.0 & 0.0 & 0.0 $\pm$ 0.0 & \textbf{7.2} & \textbf{6.4} $\pm$ 4.2\\
    \midrule
    Total & 0.0 & 137.6 & 94.6 & \textbf{173.4} & \textbf{186.5} \\
    \bottomrule
    \end{tabular}
    
    \label{tab:antmaze}
\end{table}

\subsection{More Results on Motivation Example}
\label{appendix:motivation}
We conduct experiments on nine D4RL datasets to examine the limitations of RAMBO. In Section~\ref{sec:motivation}, we presented results on the \texttt{halfcheetah-medium-expert-v2} dataset due to space constraints. Here, we provide results across all nine datasets, including gradient norms, normalized scores, and Q-value estimates.

Figure~\ref{figure:appnorm} shows the gradient norm of the adversarial loss during RAMBO training for $\lambda=0.05$ and $\lambda=0.1$. We omit curves for smaller $\lambda$ such as $\lambda=3\times10^{-4}$ since we find the gradient norm stays small for those $\lambda$. as the gradient norm remains consistently small in those cases. As shown, gradient explosion occurs on two tasks (\texttt{hopper-medium-replay-v2} and \texttt{hopper-medium-expert-v2}) with $\lambda=0.05$. For $\lambda=0.1$, this issue becomes more frequent, occurring in 7 out of 9 tasks. These results demonstrate that RAMBO suffers from training instability unless $\lambda$ is set to an extremely small value.

Figure~\ref{fig:appscore} and Figure~\ref{fig:appq} present the normalized score and Q-value estimation on 9 D4RL datasets for RAMBO across different $\lambda$. A clear trend can be observed: setting $\lambda$ to very small values within $\{0,\mathrm{3e-4},\mathrm{5e-3}\}$ has almost no impact on performance and Q-value estimation. In contrast, a larger value of $\lambda$ such as 0.05 and 0.1 leads to diverged Q-value and performance collapse, indicating that RAMBO is highly sensitive to $\lambda$ and prone to excessive conservatism.

\begin{figure}

    \centering
    \includegraphics[width=0.98\linewidth]{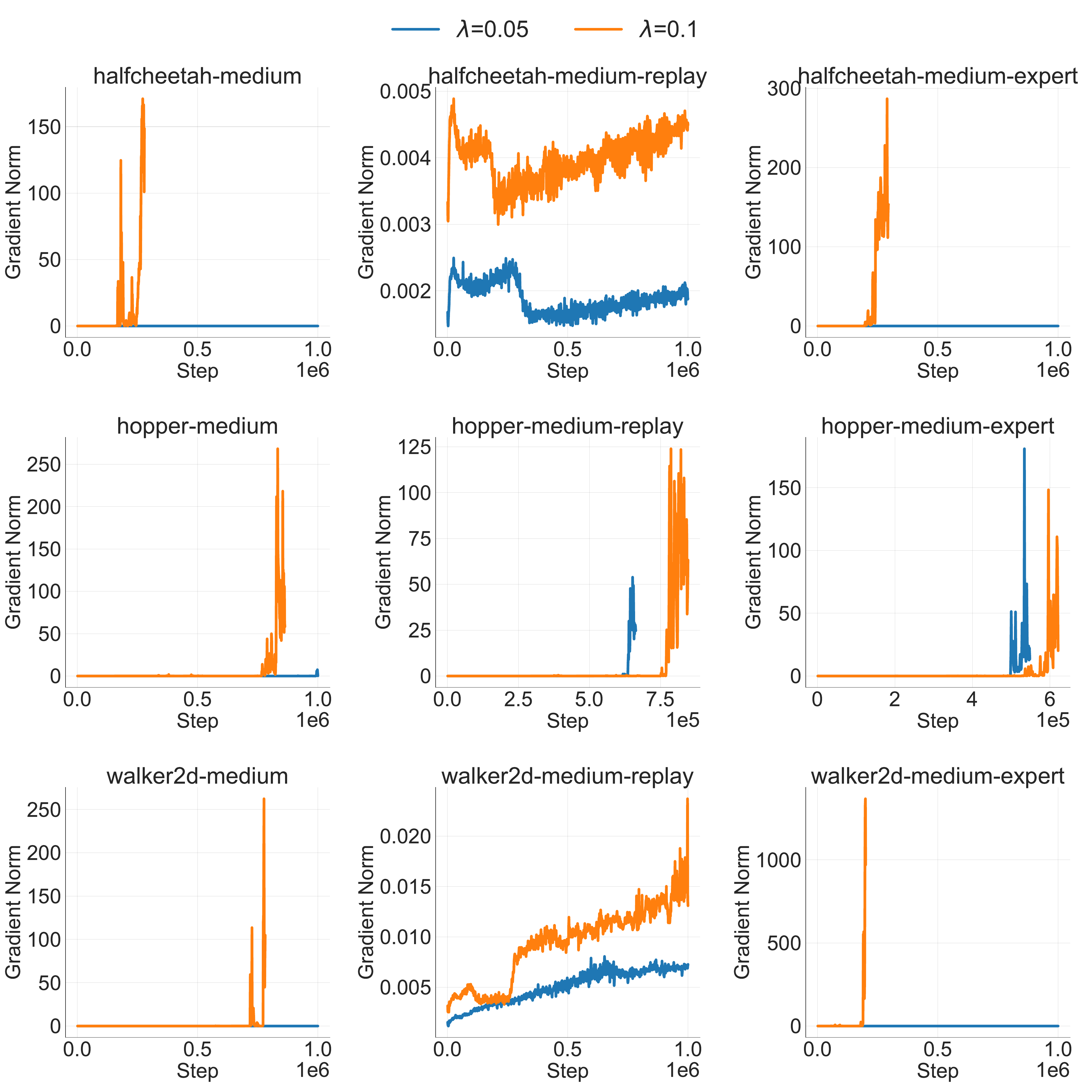}
    \caption{Gradient norm curves on 9 D4RL datasets for RAMBO when $\lambda=0.05$ and $\lambda=0.1$}
\label{figure:appnorm}
\end{figure}

\begin{figure}

    \centering
    \includegraphics[width=0.98\linewidth]{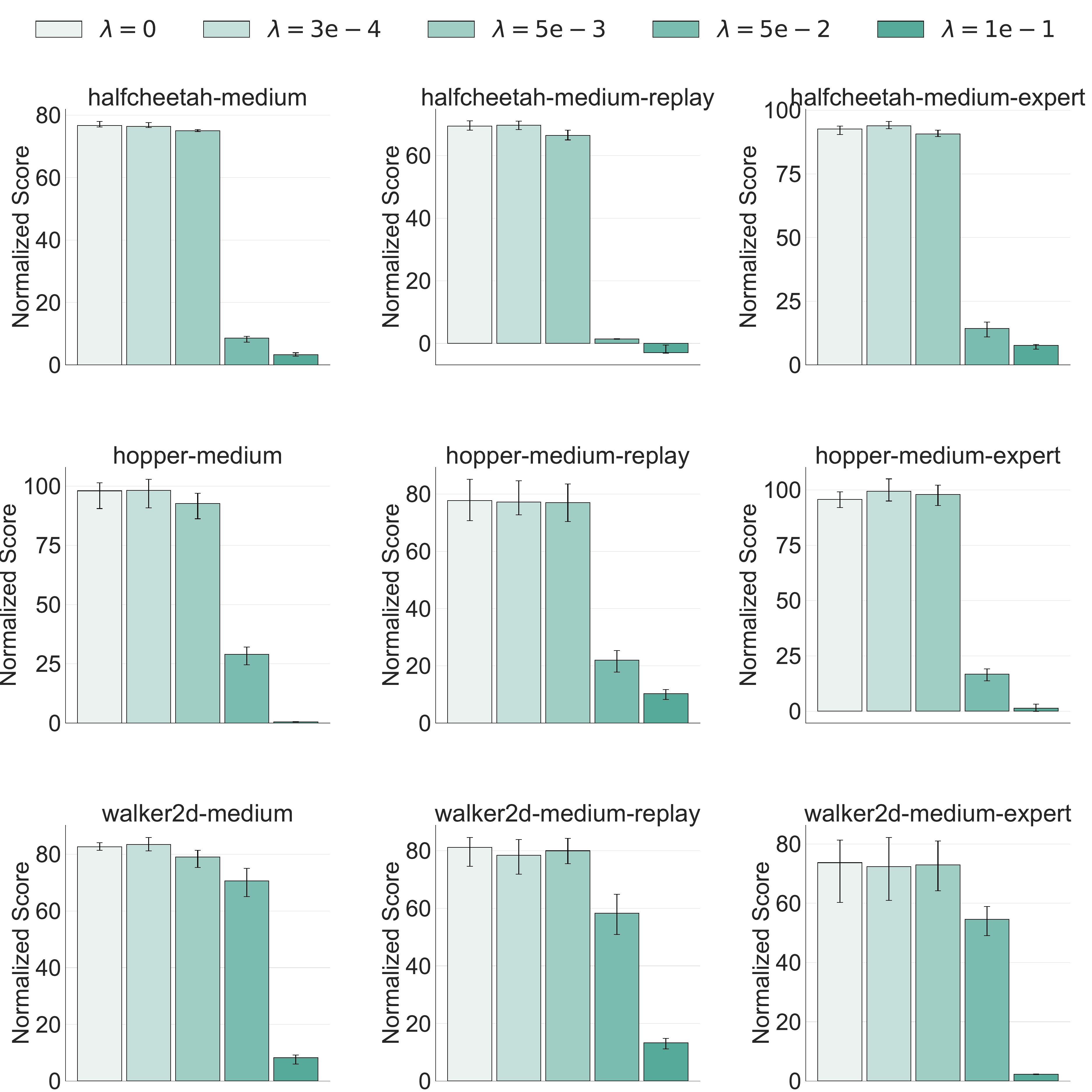}
    \caption{Normalized score comparison on 9 D4RL datasets across different $\lambda$ for RAMBO.}
\label{fig:appscore}
\end{figure}

\begin{figure}

    \centering
    \includegraphics[width=0.98\linewidth]{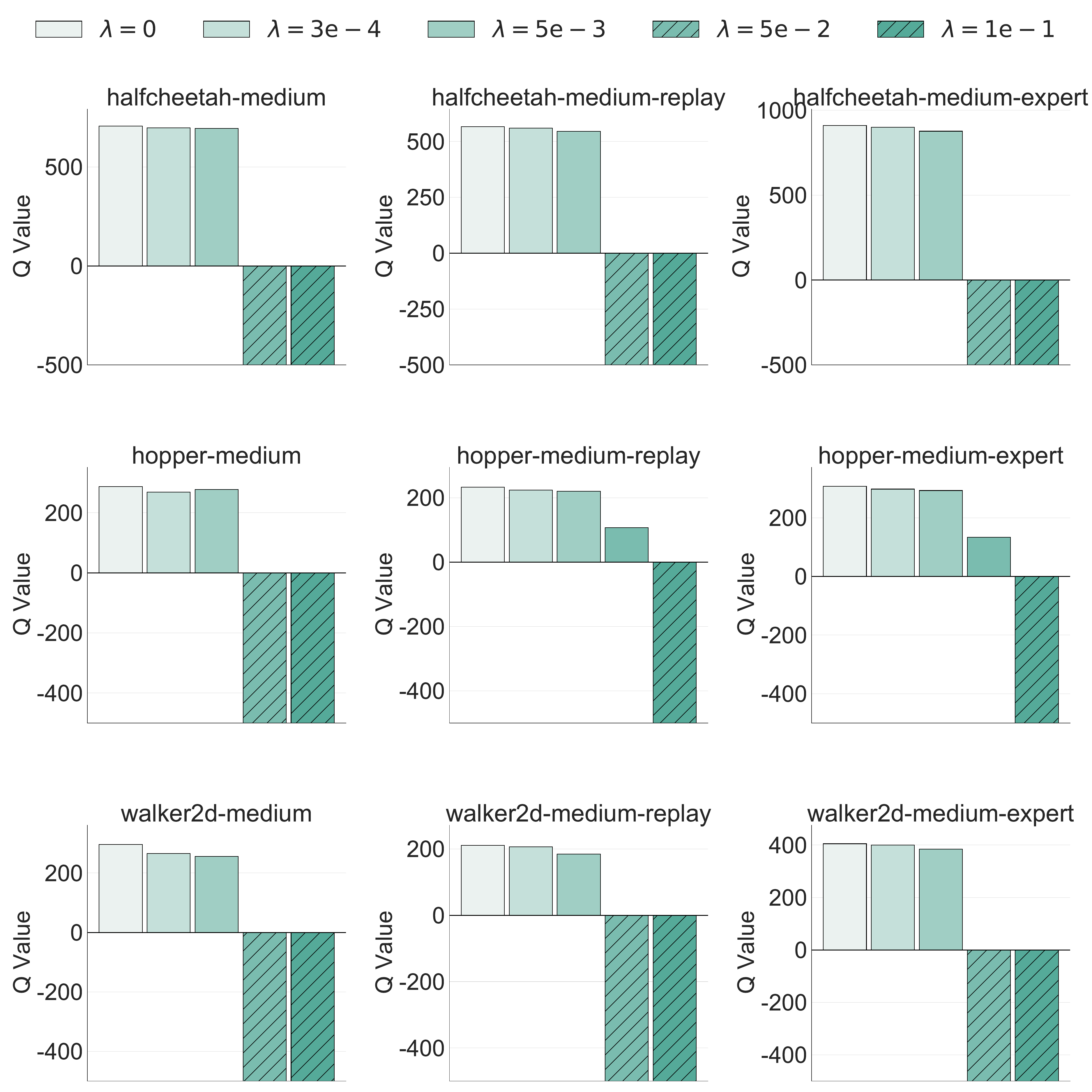}
    \caption{Q-value estimation on 9 D4RL datasets across different $\lambda$ for RAMBO.}
\label{fig:appq}
\end{figure}

\subsection{More Parameter Study}

In this section, we further study the sensitivity of ROMI to the choices of the uncertainty set scale $\xi$ and sampling number $N$.

We present the parameter study results of $\xi$ on the  \texttt{halfcheetah-medium-expert-v2} dataset in Section~\ref{sec:parameter}. Here we present more results on \texttt{walker2d-medium-expert-v2} dataset. We vary $\xi$ across $\{0.01,0.1,1.0,10\}$ and compare the normalized score, Q-value estimate and the gradient norm of the inner and outer loss. The results are shown in Figure~\ref{fig:parameterappendix}. We can see that no severe Q-value underestimation or gradient explosion occurs across different $\xi$, verifying the superiority of ROMI over RAMBO.

We also study the sensitivity of $N$. We sweep $N$ across $\{10,20,30\}$ for each task, and present the results on 6 D4RL datasets in Table~\ref{tab:appN}. We find that setting $N$ to 20 and 30 does not show clear performance improvement compared to $N=10$. For the consideration of computational cost, we fix $N=10$ for all the experiments without further tuning.

\begin{figure}
    \centering
    \includegraphics[width=0.98\linewidth]{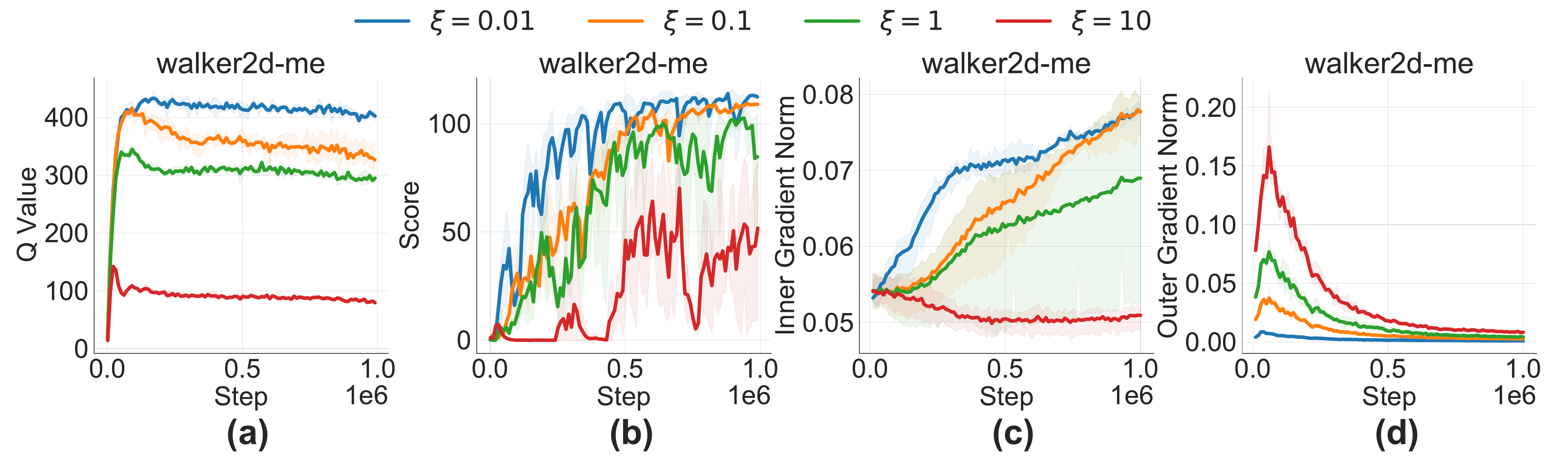}
    \caption{Comparison for different $\xi$ in terms of Q-value, normalized score, inner loss gradient norm, outer loss gradient norm.}
    \label{fig:parameterappendix}
\end{figure}

\begin{table}[]
    \centering

    \caption{Performance comparison for different $N$ on 6 D4RL datasets.}
    \begin{tabular}{c|ccc}
    \toprule
        Datasets & $N=10$ & $N=20$ & $N=30$  \\
    \midrule
        half-mr & 70.7$\pm$4.8 & 71.8$\pm$4.2 & 72.2$\pm$5.1 \\
        hopper-mr & 102$\pm$2.4 & 100.0$\pm$1.5 & 103.3$\pm$2.8\\
        walker2d-mr & 85.2$\pm$2.6 & 87.4$\pm$ 4.6 & 81.5 $\pm$ 3.8 \\
        half-me & 108.5$\pm$4.2 & 110.4$\pm$5.3 & 111.0$\pm$3.4 \\
        hopper-me & 111.5$\pm$3.8 & 110.0$\pm$1.4 & 113.9$\pm$ 4.7 \\
        walker2d-me & 113.3$\pm$1.9 & 115.6$\pm$2.8 & 111.7$\pm$0.5 \\
    \midrule
        Average & 98.5 & 99.2 & 99.1 \\
    \bottomrule
    \end{tabular}
    \label{tab:appN}
\end{table}

\subsection{Learning Curves}
We provide the learning curves of RAMBO, MOBILE, and ROMI on 12 D4RL datasets in Figure~\ref{fig:curve_ROMI} and Figure~\ref{fig:curve_mobile}. The training steps are 1M for all the methods.

\begin{figure}
    \centering
    \includegraphics[width=1.0\linewidth]{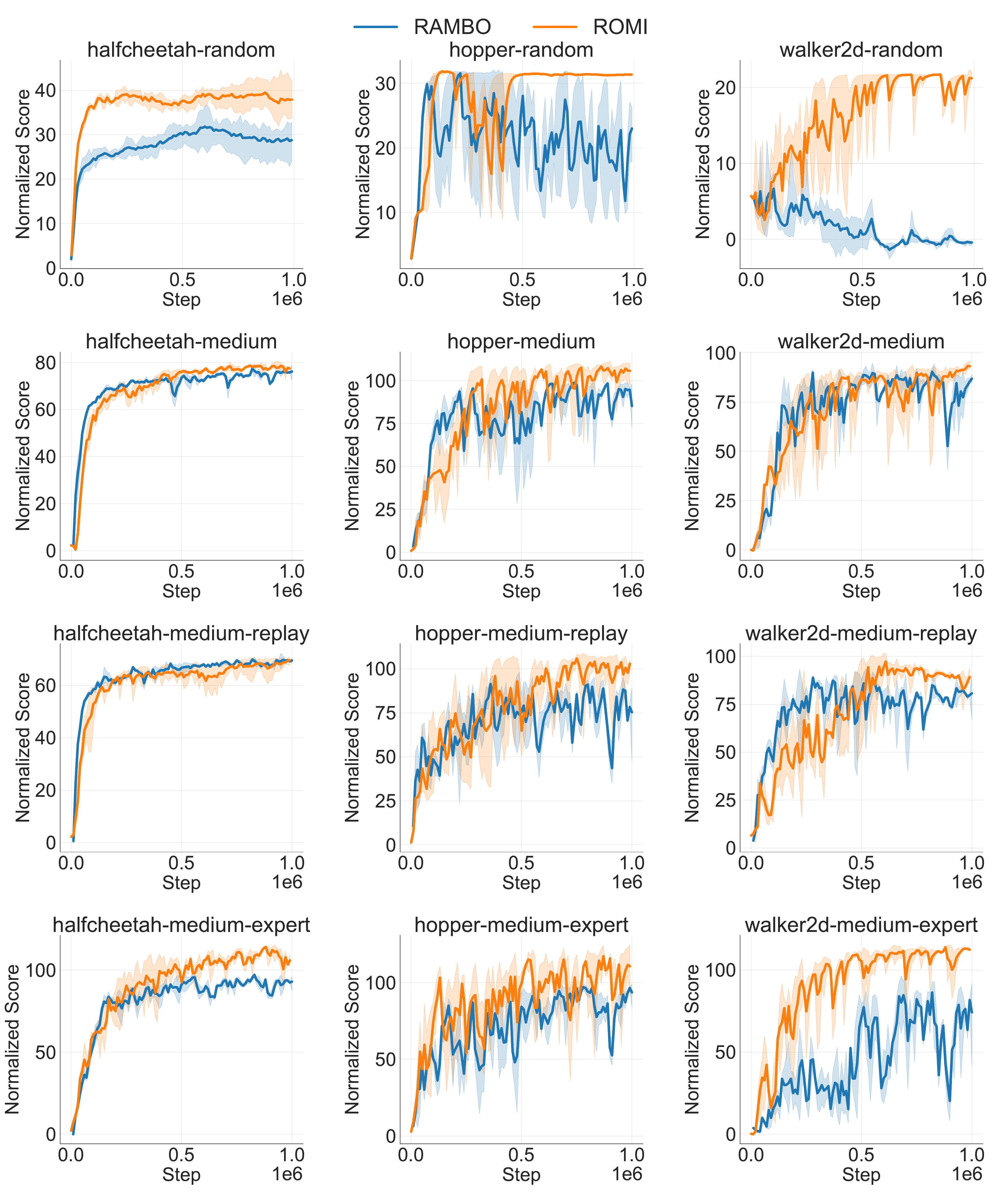}
    \caption{Learning curves of RAMBO and ROMI on 12 D4RL datasets. Averaged over 5 seeds.}
    \label{fig:curve_ROMI}
\end{figure}

\begin{figure}
    \centering
    \includegraphics[width=1.0\linewidth]{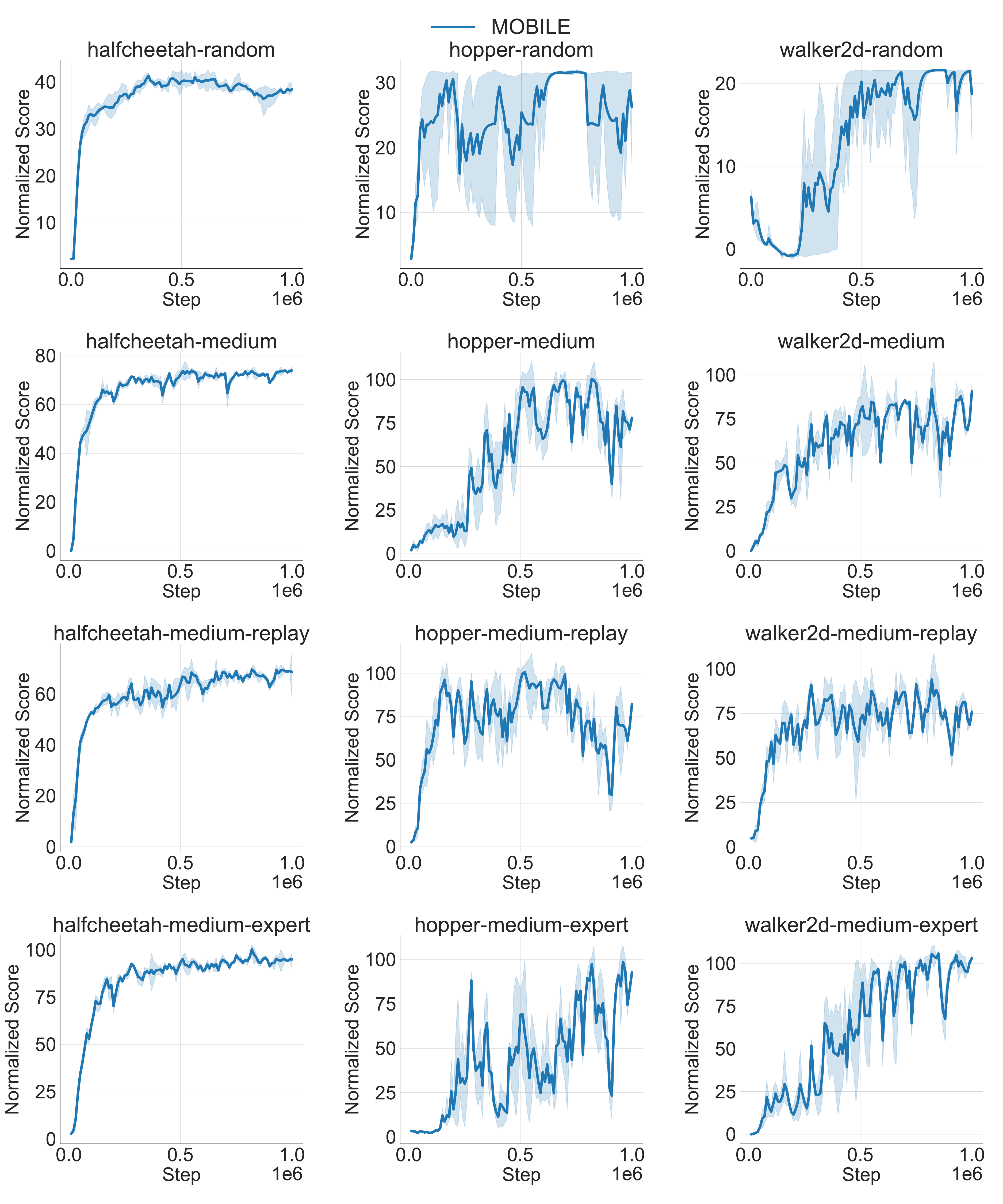}
    \caption{Learning curves of MOBILE on 12 D4RL datasets. Averaged over 5 seeds.}
    \label{fig:curve_mobile}
\end{figure}

\section{LLM Usage Statement}

In this work, we use the large language model (LLM) to polish the grammar of our manuscript. However, we confirm that no LLMs are involved in the key steps of this work, including the conceptualization of the idea, the development of methodology, and theoretical proofs.

\end{document}